\documentclass[framed]{customarticle}

\renewcommand{\approx}{{\operatorname{app}}}
\newcommand{\tight}[2]{{#2}}

\title{Quadratic Objective Perturbation: \\ Curvature-Based Differential Privacy}
\author{
Daniel Cortild \orcidCortild (\texttt{daniel.cortild@maths.ox.ac.uk}) \\
Coralia Cartis \orcidCartis (\texttt{coralia.cartis@maths.ox.ac.uk}) \\ \\
Mathematical Institute, University of Oxford, Oxford, United Kingdom}
\date{Last Compiled: \today}

\renewcommand{\red}[1]{{#1}}

\begin{document}

\maketitle

\begin{abstract}
    \noindent
    Objective perturbation is a standard mechanism in differentially private empirical risk minimization. In particular, Linear Objective Perturbation (LOP) enforces privacy by adding a random linear term, while strong convexity and stability are ensured by an additional deterministic quadratic term. However, this approach requires the strong assumption of bounded gradients of the loss function, which excludes many modern machine learning models. In this work, we introduce Quadratic Objective Perturbation (QOP), which perturbs the objective with a random quadratic form. This perturbation induces strong convexity and enforces stability of the problem through curvature, thereby enabling privacy and allowing sensitivity to be controlled through spectral properties of the perturbation rather than assumptions on the gradients. As a result, we obtain $(\varepsilon, \delta)$-differential privacy under weaker \red{gradient} assumptions. Furthermore, we extend the analysis to account for approximate solutions, showing that privacy guarantees are preserved under inexact solves. Additionally, we derive utility guarantees in terms of empirical excess risk, and provide a theoretical and numerical comparison to LOP, highlighting the advantages of curvature-based perturbations. Finally, we discuss algorithmic aspects and show that the resulting problems can be solved efficiently using modern splitting schemes.
    
    \vspace{1em}\noindent\textbf{Keywords.} Differential Privacy, Objective Perturbation, Random Matrix Theory, Three-Operator Splitting.
\end{abstract}
\section{Introduction}

Modern machine learning is largely driven by Empirical Risk Minimization (ERM) problems, where a model is obtained by minimizing a sum of losses over a dataset. Formally, given a dataspace $\mathcal X\subset \R^p\times \R$ and a dataset $\D=\{z_i\}_{1\le i\le n}\in \X^n$ of $n$ individuals, we consider the problem
\begin{equation}\label{eq:p}\tag{P}
    \theta_\exact\red{\in} \argmin_{\theta\in C}\left\{\mathcal J(\theta; \D)=\sum_{i=1}^n \ell (\theta; z_i)+r(\theta)\right\},
\end{equation}
\red{assuming such a point $\theta_\exact$ exists, }where $\ell\colon \R^d\times \X \to \R$ is a loss function convex in its first argument for any given datapoint $z\in \X$, $r\colon \R^d\to \R$ is a convex regularizer that may depend on $n$ but is otherwise independent of the dataset, and $C\subset \R^d$ is a \red{nonempty} closed convex constraint set. In many applications, the parameter space is naturally constrained, for instance to enforce structural properties such as boundedness or feasibility. From a theoretical perspective, the constraint helps ensure well-posedness in the absence of strong convexity. This formulation captures a wide range of problems, from classical linear models to modern overparametrized systems. In many such settings, especially in large dimensions, the problem exhibits many difficulties; minimizers are in general not unique, curvature may be highly anisotropic, and classical regularity assumptions such as strong convexity or bounded gradients often fail. We refer to \cite{negahban_unified_2012} and the references therein for a more complete overview.

Empirical risk minimization is often performed on sensitive data, which we aim to protect through the framework of \textit{Differential Privacy} (DP) \citep{dwork_differential_2006}. Informally, DP guarantees that an adversary cannot reliably infer whether any particular individual contributed to the dataset. Achieving this guarantee, however, comes at a price in terms of utility, especially in the absence of strong regularity, in which case the minimizers might be highly sensitive to perturbations in the data. Balancing privacy and utility has been a central challenge since the introduction of DP, and our work places itself within this line of research. Existing approaches mostly rely on the strong assumption of bounded gradients of the loss functions, which excludes many models and is unrealistic in modern machine learning. A core contribution is that we do not make such an assumption.

\paragraph{Linear Objective Perturbation.} The closest and most relevant prior work is \textit{linear objective perturbation} \citep{chaudhuri_differentially_2011, kifer_private_2012}, in which the private problem is obtained by adding a random linear perturbation to the model. Specifically, the mechanism considers
\begin{equation}\tag{P-Priv-Lin}
    \theta_\priv^\lin=\argmin_{\theta\in C}\left\{\mathcal J_\priv^\lin(\theta; \D)=\J(\theta; \D)+\frac{\Delta}{2}\|\theta\|^2+a^T\theta\right\},
\end{equation}
where $a$ is a random vector enforcing privacy, and $\Delta>0$ is a deterministic scalar ensuring a unique solution. As linear perturbations might significantly shift the minimizer, the deterministic quadratic perturbation is required to achieve stability of the problem. Furthermore, this approach requires the strong regularity assumption of bounded gradients of the loss functions to achieve privacy, to ensure limited individual contributions of each datapoint. This is because the random perturbation does not control the curvature, and hence they must assume bounded variability instead. 
This suggests that a random linear perturbation might not \red{always} be the right choice.
Our model avoids this assumption, by simultaneously handling privacy and stability through the same perturbation. 

The idea of linear objective perturbation was originally introduced in \cite{chaudhuri_differentially_2011}, for pure $\varepsilon$-privacy, and later extended in \cite{kifer_private_2012} for $(\varepsilon, \delta)$-differential privacy. The empirical risk bounds were improved in \cite{jain_dimension_2014}, and an excess population loss was provided in \cite{bassily_private_2019}. Practical aspects were taken into account in \cite{iyengar_practical_2019}, who showed that privacy is preserved even under inexact solves.  More recently, in \cite{redberg_privately_2021}, the notion of \textit{per-instance differential privacy} was studied, and in \cite{redberg_improving_2023}, a new analysis through Rényi differential privacy was provided, showing improved privacy guarantees. 

The strong assumptions required in the above works suggest that linear perturbations are insufficient in our setting, motivating the need for an alternative perturbation mechanism.

\paragraph{Quadratic Objective Perturbation.} To enforce privacy, we propose the \textit{Quadratic Objective Perturbation Mechanism}. Instead of solving Problem \eqref{eq:p}, we draw a random $d\times d$ matrix $W\sim \mathcal W$ \red{and a random $d$-dimensional vector $\tilde \theta$}, and solve instead the perturbed problem 
\red{\begin{equation}\label{eq:p_priv}\tag{P-Priv}
    \theta_\priv = \argmin_{\theta\in C}\left\{\Jp(\theta; \D)=\mathcal J(\theta; \D)+\frac{\sigma^2}{2}(\theta-\tilde\theta)^TW(\theta-\tilde\theta)\right\}.
\end{equation}
}The key feature is that the same random perturbation simultaneously provides privacy, controls the stability of the problem, and induces strong convexity. Unlike the previously mentioned linear perturbations, which shift the minimizers, the random quadratic perturbation modifies the curvature of the objective, ensuring the sensitivity of the output is controlled through the curvature of the objective rather than through assumptions on the gradients. As a result, we can guarantee privacy without requiring bounded gradients of the loss functions.

In practice, the perturbed problem is not solved exactly, and we instead allow for approximate solutions. 

We complement our theoretical findings with numerical experiments, showing that QOP exhibits a fundamentally different scaling behavior compared to LOP. In particular, while LOP's risk increases with the constraint diameter, QOP remains stable, highlighting the advantage of curvature-based control over gradient-based control.

\paragraph{Related Works.} Besides objective perturbation, other ideas have been developed to ensure private ERM. We highlight output perturbation, which adds noise to the final solution \citep{dwork_differential_2006,chaudhuri_differentially_2011,zhang_efficient_2017,lowy_output_2024}, gradient perturbation, which injects noise throughout the algorithm \citep{bassily_private_2014,song_stochastic_2013,abadi_deep_2016,bassily_private_2019}, functional perturbation, which perturbs a representation of the objective \citep{zhang_functional_2012}, input perturbation, which perturbs the data prior to training \citep{duchi_local_2013,fukuchi_differentially_2017}, and sampling-based methods, such as the exponential mechanism, which selects outputs at random proportionally to the exponential utility score \citep{mcsherry_mechanism_2007}.

\paragraph{Contributions.} Our contributions may be summarized as follows:
\begin{enumerate}
    \item We introduce a new objective perturbation mechanism based on random quadratic perturbations, eliminating the need for bounded gradient assumptions required by prior work.
    \item We develop a novel privacy analysis showing that the quadratic perturbation controls the sensitivity of the output through curvature, yielding $(\varepsilon, \delta)$-privacy under weaker \red{gradient} assumptions than prior work. 
    \item We extend the method to approximate solves while preserving meaningful privacy guarantees.
    \item We provide utility guarantees by deriving bounds on the empirical excess risk.
\end{enumerate}

\paragraph{Structure.} The paper is organized as follows. Section \ref{sec:prelims} lists the assumptions and preliminary results. Section \ref{sec:results} introduces the privacy analysis and the utility analysis, along with proof sketches. Section \ref{sec:algorithm} discusses algorithmic considerations, and, finally, Section \ref{sec:LOP} provides both a theoretical and an empirical comparison to linear objective perturbation. All technical results are gathered in the appendix.
\section{Assumptions and Preliminaries}\label{sec:prelims}

In Section \ref{ssec:problem} we formally state the problem assumptions. Before providing a privacy analysis, we define the notion of privacy and give some introductory properties in Section \ref{ssec:dp}. Our analyses, both of privacy and of utility, heavily rely on spectral properties of the random perturbation to control the optimization problem. Preliminaries from random matrix theory are given in Section \ref{ssec:rmt}.

\subsection{Problem Assumptions}\label{ssec:problem}

Our goal is to solve the Problem \eqref{eq:p_priv}. We make the following assumptions on the problem.
\begin{assumption}[Problem Assumptions]\label{ass:problem}
    We assume that
    \begin{itemize}
        \item for each datapoint $z\in \X$, the loss function $\ell(\cdot; z)\colon \R^d \to \R$ is convex, twice \red{continuously} differentiable, is $L$-smooth (i.e. has $L$-Lipschitz continuous gradients), and has a Hessian with rank at most $\rho\le d$;
        \item \red{there exists a publicly known data-independent point $\theta_c\in \R^d$ and scalar $\Delta_c\ge 0$ such that the gradients of the loss functions have bounded diameter at $\theta_c$, namely $\sup_{z, z'\in \X}\|\nabla \ell(\theta_c; z)-\nabla \ell(\theta_c; z')\|\le \Delta_c$;}
        \item the constraint set $C\subset \R^d$ is a closed and convex set; and
        \item the regularizer $r\colon \R^d\to \R$ is convex.
    \end{itemize}
\end{assumption}

\red{The bounded-diameter condition of the gradients at $\theta_c$ is satisfied in several simple settings:
\begin{enumerate}
    \item Suppose that the loss functions have uniformly bounded gradients, namely that $\sup_{\theta\in \R^d, z\in \X}\|\nabla \ell(\theta; z)\|\le \bar G$. Then, for any choice of $\theta_c\in \R^d$, the assumption holds with $\Delta_c=2\bar G$. Consequently, our assumption is weaker than the uniformly bounded gradients assumption typically imposed in analyses of LOP.
    \item Suppose that the interpolation condition is satisfied, namely that there exists $\theta_*\in \cap_{z\in \X}\argmin_{\theta}\ell(\theta; z)$. Since $\ell$ is convex and smooth, this means $\nabla \ell(\theta_*; z)=0$ for all $z\in\X$, and hence the assumption holds with $\theta_c=\theta_*$ and $\Delta_c=0$. If only an approximate $\tilde \theta_*$ is available, satisfying $\|\tilde\theta_*-\theta_*\|\le \eta$, then, by $L$-smoothness of $\ell(\cdot, z)$,
    \[
        \|\nabla \ell(\tilde\theta_*;z)-\nabla \ell(\tilde\theta_*;z')\|\le 2L\eta.
    \]
    As such, the assumption holds with $\theta_c=\tilde \theta_*$ and $\Delta_c=2L\eta$.
    \item Finally, suppose the dataspace $\X$ is compact and that the map $z\mapsto \nabla \ell(\theta_c; z)$ is continuous for some $\theta_c\in \R^d$. Its image is then compact, and therefore bounded, so that $\Delta_c<+\infty$. This condition may hold even when the gradients are unbounded as functions of $\theta$.
\end{enumerate}
These examples show that our condition requiring bounded diameter of the gradients at $\theta_c$ is strictly weaker than requiring uniformly bounded gradients.
}

\paragraph{Approximate Solutions.} We now formalize the approximation error arising from solving the perturbed problem inexactly.
In fact, we do not claim to solve Problem \eqref{eq:p_priv} exactly, but merely that we can solve it up to $\tau$-accuracy. Specifically, we mean that we identify a point $\theta_\approx\in C$ that satisfies 
\red{\begin{equation}\label{eq:approximate_def}
    \Jp(\theta_\approx; \D)-\Jp(\theta_\priv; \D)=\Jp(\theta_\approx; \D)-\min_{\theta\in C}\Jp(\theta; \D)\le \tau.
\end{equation}
}While this criterion is not a practical stopping rule, it is sufficient for our analysis. We introduce a computable surrogate implying it in Appendix \ref{ssec:stopping}.

\paragraph{Notation.} We denote by $\S^d$ and $\S_+^d$ the set of symmetric and symmetric positive semidefinite $d\times d$ matrices. The various optimization problems and their solutions are summarized in Appendix \ref{sec:notation}.

\subsection{Differential Privacy}\label{ssec:dp}

We use the notion of $(\varepsilon, \delta)$-differential privacy, introduced in \cite{dwork_differential_2006}. Intuitively, a randomized mechanism provides DP if it produces similar outputs on similar datasets. The following definitions formalize the notion of \textit{similar datasets} and of \textit{differential privacy}.

\begin{definition}[Replacement Adjacency]\label{def:adjacency}
    We say two datasets $\D$ and $\D'$ are \textit{replacement adjacent}, denoted by $\D\sim \D'$, if they have the same number of datapoints and differ in at most one of them.
\end{definition}

\begin{definition}[Differential Privacy]
    We say a mechanism $\mathcal M$ provides $(\epsilon, \delta)$-\textit{differential privacy} if, for any two datasets $\D\sim \D'$ and any measurable set $S\subset \ran(\mathcal M)$, it holds that
    \[
        \P(\mathcal M(\D)\in S)\le e^\varepsilon\cdot \P(\mathcal M(\D')\in S)+\delta,
    \]
    where $\mathcal M(\D)$ and $\mathcal M(\D')$ are the outputs of $\mathcal M$ under the data $\D$ and $\D'$.
\end{definition}

Much work on differential privacy has been carried out since its introduction, and many results about enforcing privacy have since been introduced. We present some useful results that we will make use of in our main proof in Appendix \ref{sec:dp_lemmas}. For an overview of these results and a more thorough introduction to differential privacy we refer to \cite{dwork_algorithmic_2014} and the references therein.

\subsection{Random Matrix Theory}\label{ssec:rmt}

In order to formally state Problem \eqref{eq:p_priv}, we must first draw a random matrix $W\sim \mathcal W$ from some distribution $\mathcal W$, which enforces privacy and controls the stability of the problem. We make some assumptions on $\mathcal W$, and show in Lemma \ref{lem:satisfies} an example that satisfies it.

\begin{assumption}[Random Matrix Assumption]\label{ass:distribution}
    Let $W$ be distributed according to a probability distribution supported on $\S_+^d$ with density $q\colon \S^d_+\to \R_+$. We assume that \red{$W$ is almost surely invertible, and}, given $\delta_{1}$ and $\delta_{3}$, there exist $\alpha$, $\alpha_1$, $\mu$, \red{and $\nu$} such that
    \[
        \P(\lambda_{\min}(W)\ge \alpha)\ge 1-\delta_{3}, \quad 
         \frac{\P(\alpha\le \lambda_{\min}(W)\le \alpha+\alpha_1)}{\P(\lambda_{\min}(W)\ge \alpha)}\le \delta_{1}, \quad \Exp{W}\preceq \mu I, \red{\quad \text{and}\quad \Exp{W^{-1}}\preceq \nu I}.
    \]
    Moreover, we suppose that there exists a monotone $f\colon \mathbb Z_+\to \R$ such that, provided $\lambda_{\min}(W)\ge \alpha$,
    \[
        \frac{q(W+U)}{q(W)}\le e^{\|U\|_{\op}\cdot f(\rank(U))}\quad \text{for all $W\in \S_+^d$ and $U\in \S^d$ such that $W+U\in \S_+^d$}.
    \]
\end{assumption}

\red{Assumption \ref{ass:distribution} collects the spectral and distributional properties of $W$ needed by our analysis. Informally, it serves the following purposes:
\begin{itemize}
    \item The first two conditions control high-probability spectral events. The bounds justify conditioning $W$ on well-conditioned spectral events in our privacy analysis.
    \item The following two conditions are matrix moment bounds, required for our utility analysis to control the expected cost of the random perturbation. 
    \item Finally, the density-ratio condition controls the change in law of $W$ under low-rank translations, which is the transformation induced by passing between adjacent datasets.
\end{itemize}
}

\textbf{Wishart Ensemble.} We denote by $\Wishart(d, m)$ a $d\times d$ Wishart matrix with hidden dimension $m$, namely that, for $G\in \R^{d\times m}$ with entries $G_{i, j}\overset{\text{i.i.d.}}{\sim} \mathcal N(0, 1)$, $W=GG^T\sim \Wishart(d, m)$. 

The following lemma is a simplified version of Lemma \ref{lem:WishartSatisfiesAss}, which provides the exact values of all the required constants. For technical reasons the full lemma is postponed to the appendix. 

\begin{lemma}[Wishart Distribution Satisfies Random Matrix Assumption]\label{lem:satisfies}
    Let $W\sim \Wishart(d, m)$ be a $d\times d$ Wishart matrix with hidden dimension $m>\red{d+1}$. Then the Random Matrix Assumption \ref{ass:distribution} is satisfied for suitable values of $\alpha$, $\alpha_1$, $f$, $\mu$ \red{and $\nu$} given in Lemma \ref{lem:WishartSatisfiesAss}.
\end{lemma}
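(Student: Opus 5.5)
We verify the five requirements of Assumption \ref{ass:distribution} one at a time for $W\sim\Wishart(d,m)$ with $m>d+1$. Several hold essentially by inspection. Since $m\ge d$, the matrix $G$ has full row rank almost surely, so $W=GG^T$ is almost surely positive definite, hence invertible. The law of $W$ has the classical density on the cone of positive definite matrices,
\[
q(W)\propto \det(W)^{(m-d-1)/2}\exp\!\left(-\tfrac12\operatorname{tr}W\right).
\]
The moment conditions also follow from standard formulas. We have $\Exp{W}=mI$, so $\mu=m$. For $m>d+1$, the inverse-Wishart mean is $\Exp{W^{-1}}=I/(m-d-1)$, so $\nu=1/(m-d-1)$. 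This is exactly where the hypothesis $m>d+1$ is used.

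\textbf{Density ratio.} For $W\succeq\alpha I$ and $W+U\succeq 0$,
\[
\log\frac{q(W+U)}{q(W)}=\frac{m-d-1}{2}\log\det\!\left(I+W^{-1}U\right)-\frac12\operatorname{tr}U .
\]
Let $k=\rank(U)$. Then $W^{-1}U$ has at most $k$ nonzero eigenvalues, each bounded in modulus by $\|U\|_{\op}/\alpha$. Hence the log-determinant is at most $k\log(1+\|U\|_{\op}/\alpha)\le k\|U\|_{\op}/\alpha$, using $\log(1+x)\le x$; this bound is used only from above. In addition, $|\operatorname{tr}U|\le k\|U\|_{\op}$. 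Combining the two gives
\[
f(k)=k\left(\frac{m-d-1}{2\alpha}+\frac12\right),
\]
which is monotone in $k$.

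\textbf{Spectral conditions.} For the lower tail $\P(\lambda_{\min}(W)\ge\alpha)\ge1-\delta_3$, I will use the Davidson--Szarek/Gordon bound $s_{\min}(G^T)\ge\sqrt m-\sqrt d-t$ with probability at least $1-e^{-t^2/2}$. This yields
\[
\alpha=\left(\sqrt m-\sqrt d-\sqrt{2\log(1/\delta_3)}\right)_+^2 .
\]
If a sharper or small-$m$ result is needed, I would instead use the explicit small-ball estimate $\P(\lambda_{\min}(W)\le\varepsilon m)\le (C\varepsilon)^{(m-d+1)/2}$ from Edelman/Rudelson--Vershynin, and set $\alpha=\varepsilon m$.

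\textbf{Main obstacle.} The hardest step is the ratio condition
\[
\frac{\P(\alpha\le\lambda_{\min}(W)\le\alpha+\alpha_1)}{\P(\lambda_{\min}(W)\ge\alpha)}\le\delta_1 .
\]
Here $\alpha_1$ need not be large; it only has to exist for the given $\delta_1$. The denominator is at least $1-\delta_3$. It therefore suffices to make the numerator at most $\delta_1(1-\delta_3)$, that is, to bound the probability that $\lambda_{\min}(W)$ falls in a small window. I would use the fact that $\lambda_{\min}(W)$ has a bounded density; this follows from the explicit joint eigenvalue density, or from the small-ball bound above. Continuity of its distribution function then gives an $\alpha_1>0$ with $\P(\alpha\le\lambda_{\min}\le\alpha+\alpha_1)\le\delta_1(1-\delta_3)$. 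A quantitative choice of $\alpha_1$ comes from bounding the density of $\lambda_{\min}(W)$ by a constant times $m^{-1}$ or similar via the Edelman formula.

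The delicate part is making these constants explicit, since an explicit value is what Lemma \ref{lem:WishartSatisfiesAss} is stated to provide. If that proves too heavy, I would settle for existence through continuity and monotonicity of the distribution function of $\lambda_{\min}(W)$. An alternative is to shift: choose $\alpha$ slightly smaller, so that the window lies in a region of tail probability at most $\delta_1(1-\delta_3)$.
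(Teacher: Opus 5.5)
Your proposal is correct. For most of the lemma it coincides with the paper's proof. The values $\mu=m$ and $\nu=1/(m-d-1)$ come from the same moment formulas. Your density-ratio computation (determinant ratio controlled through the at most $\mathrm{rank}(U)$ nonzero eigenvalues of $W^{-1}U$, plus the trace bound) yields exactly the paper's $f$. Your reduction of the window condition to making the numerator at most $\delta_1(1-\delta_3)$ is precisely how the paper defines $\alpha_1$.

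The real difference lies in the two spectral conditions. The paper uses one explicit upper bound on the marginal density of $\lambda_{\min}(W)$ (Lemma \ref{lem:tailbound_smallest_eig}, from Deift et al.). This integrates to lower-incomplete-Gamma bounds and gives closed-form $\alpha$ and $\alpha_1$ for every $m>d+1$ and every $\delta_1,\delta_3$.

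Your Gordon/Davidson--Szarek bound gives a transparent $\alpha$, near-sharp when $m\gg d$. However, it is positive only if $\sqrt m-\sqrt d>\sqrt{2\log(1/\delta_3)}$. Positivity of $\alpha$ is genuinely needed, because with $\alpha=0$ the Wishart density ratio is unbounded as $W$ approaches singular matrices, so no finite $f$ exists. With $m=2d$ and $\delta_3=\delta/3=10^{-5}/3$, as in the experiments, this condition already requires $d>147$.

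Your small-ball fallback repairs this. Note, though, that its natural scale is $(\sqrt m-\sqrt{d-1})^2$ rather than $m$, so in the form you wrote the constant $C$ depends on $d$. For bare existence it is simpler to note that $\lambda_{\min}(W)>0$ almost surely and use continuity of measure, exactly as you do for $\alpha_1$.

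That continuity argument proves the lemma but is non-constructive. The paper's explicit $\alpha_1$ is what lets one evaluate the requirement $\sigma^2\ge 2L/\alpha_1$ of Theorem \ref{thm:main} and run the parameter optimization behind Figure \ref{fig:utility}. Recovering the specific incomplete-Gamma values stated in Lemma \ref{lem:WishartSatisfiesAss} does need that density bound. In exchange, your route avoids the special-function estimate and gives an interpretable $\alpha$ in the large-$m$ regime.
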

\section{Quadratic Objective Perturbation: Privacy and Utility Guarantees}\label{sec:results}

Our approach to privacy relies on perturbing the objective through a random quadratic form, which modifies the curvature of the problem. This allows us to control the sensitivity of the output through spectral properties of the perturbation, rather than through bounded gradients. We now formulate our mechanism, and provide privacy (Section \ref{ssec:privacy}) and utility (Section \ref{ssec:utility}) guarantees. 

\begin{algorithm}[H]
\floatname{algorithm}{Mechanism}
\renewcommand{\thealgorithm}{QOP}
\caption{(Quadratic Objective Perturbation Mechanism)}\label{alg:QOP}
\begin{algorithmic}
\Require \red{reference point $\theta_c\in \R^d$, gradient-diameter bound $\Delta_c\ge 0$,} target privacy parameters $(\varepsilon, \delta)$, solver accuracies $\tau$, and noise parameters \red{$\sigma_0^2$,} $\sigma^2, \tilde \sigma^2$. 
\State Draw $W\sim \mathcal W$\red{ and $a\sim \mathcal N(0, \sigma_0^2 I)$} independently.
\State \red{Define $\tilde \theta$ such that $W(\tilde\theta-\theta_c)=-\sigma^{-2}a$.}
\State Compute a $\tau$-accurate solution $\theta_\approx$ to Problem \eqref{eq:p_priv} satisfying Equation \eqref{eq:approximate_def}.
\State \red{Draw $b\sim \mathcal N(0, \tilde\sigma^2 I)$.}
\State \textbf{Release} $\theta_{\final}=\theta_{\approx}+b$.
\end{algorithmic}
\end{algorithm}
\red{The mechanism samples three sources of randomness, each corresponding to a separate component of the privacy argument. Specifically:
\begin{itemize}
    \item The random matrix $W$ determines the random curvature added to the problem, and is the central element of our privacy accounting. As a byproduct, it also convexifies and stabilizes the problem. 
    \item The random vector $a$ randomizes the center of the quadratic perturbation. Importantly, the losses may be re-centered so that they all share the common minimizer $\theta_c$, which is required by our analysis.
    \item Finally, the random vector $b$ is added to the approximate solution before release. This additional Gaussian term protects against data dependence of the optimization error, and is unnecessary when the problem is solved exactly. This component resembles classical output perturbation \cite{dwork_differential_2006,chaudhuri_differentially_2011,zhang_efficient_2017,lowy_output_2024}. However, unlike output perturbation, where the Gaussian noise is responsible for the entire privacy guarantee, here it is calibrated solely to account for the optimization error.
\end{itemize}
Though the definition of $\tilde\theta$ involves solving a linear system, note that 
\[
    \frac{\sigma^2}{2}(\theta-\tilde\theta)^TW(\theta-\tilde\theta)=a^T(\theta-\theta_c)+\frac{\sigma^2}{2}(\theta-\theta_c)^TW(\theta-\theta_c)+\frac{1}{2\sigma^2}a^TW^{-1}a,
\]
such that Problem \eqref{eq:p_priv} may equivalently be rewritten as 
\begin{equation*}
        \theta_\priv = \argmin_{\theta\in C}\left\{\J(\theta; \D)+a^T(\theta-\theta_c)+\frac{\sigma^2}{2}(\theta-\theta_c)^TW(\theta-\theta_c)\right\},
    \end{equation*}
As such, Problem \eqref{eq:p_priv} may be written with a linear perturbation and a quadratic perturbation with a deterministic center, rather than a random quadratic with a random center as originally presented. This notation makes it clear that solving a system involving $W$ is not necessary for implementation purposes. 
}

We now delve into the privacy and utility properties of this mechanism.

\subsection{Privacy Analysis}\label{ssec:privacy}

We now state our main result, the $(\varepsilon, \delta)$-privacy guarantee of our mechanism. Intuitively, privacy is preserved if the noise level associated with the curvature addition and with the approximate release are sufficiently high. The following theorem provides quantitative parameters to ensure this.

\begin{theorem}[Differential Privacy of Mechanism \ref{alg:QOP}]\label{thm:main}
    Assume that the Problem Assumption \ref{ass:problem} and the Random Matrix Assumption \ref{ass:distribution} hold. \red{Let $\varepsilon\in (0, 1)$ and $\delta>0$.} Then Mechanism \ref{alg:QOP} is $(\varepsilon, \delta)$-differentially private, provided 
    \[
        \red{\sigma_0\ge \frac{\Delta_c}{\varepsilon_0}\sqrt{2\log \frac{1.25}{\delta_0}}, \quad}\sigma^2\ge \max\left(\frac{2L}{\varepsilon_1}\left(f(2)+\frac{2(2\rho+2)}{\alpha}\right), \frac{2L}{\alpha_1}\right) \quad \text{and}\quad \red{\tilde \sigma\ge\frac{2}{\varepsilon_2}\sqrt{\frac{2\tau}{\alpha\sigma^2}}\sqrt{2\ln(1.25/\delta_2)}},
    \]
    where $\varepsilon=\red{\varepsilon_0+}\varepsilon_1+\varepsilon_2$, $\delta=\red{\delta_0+}\delta_1+\delta_2+\delta_3$, and $\alpha, \alpha_1$ and $f(2)$ are given in the Random Matrix Assumption \ref{ass:distribution}.
\end{theorem}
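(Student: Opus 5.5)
The proof splits into three privacy mechanisms that I would compose sequentially. The splitting is dictated by the parameter decomposition $\varepsilon=\varepsilon_0+\varepsilon_1+\varepsilon_2$ and $\delta=\delta_0+\delta_1+\delta_2+\delta_3$.

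\textbf{Step 1: the Gaussian vector $a$.} I would use the equivalent form of Problem \eqref{eq:p_priv} with a linear term $a^T(\theta-\theta_c)$ and a quadratic term centred at $\theta_c$. I then re-centre each loss as $\tilde\ell(\theta;z)=\ell(\theta;z)-\nabla\ell(\theta_c;z)^T(\theta-\theta_c)$. The dataset-dependent linear part $g(\D)=\sum_i\nabla\ell(\theta_c;z_i)$ is absorbed into the vector $a+g(\D)$. Since $\tilde\ell(\cdot;z)$ is convex with $\nabla\tilde\ell(\theta_c;z)=0$, every re-centred loss is minimized at $\theta_c$, and its Hessian is unchanged, so $L$-smoothness and rank at most $\rho$ still hold.

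For adjacent datasets, $g(\D)$ has sensitivity at most $\Delta_c$ by Assumption \ref{ass:problem}. The Gaussian mechanism (Appendix \ref{sec:dp_lemmas}) therefore shows that releasing $a+g(\D)$ is $(\varepsilon_0,\delta_0)$-DP under the stated bound on $\sigma_0$.

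\textbf{Step 2: the exact solution, conditionally on $a+g(\D)=v$.} Conditionally on $a+g(\D)=v$, the exact solution $\theta_\priv$ is a deterministic function of $W$ and of the re-centred data. Here I would run a change-of-variables argument in the style of \cite{chaudhuri_differentially_2011}, but with $W$ playing the role of the noise.

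The first-order optimality condition is
\[
0\in\sum_i\nabla\tilde\ell(\theta;z_i)+\nabla r(\theta)+v+\sigma^2W(\theta-\theta_c)+N_C(\theta).
\]
Given an output $\theta$, the datasets $\D$ and $\D'$ require matrices $W$ and $W'$ that produce the same output. These differ by a low-rank symmetric $U$ that corrects the single replaced datapoint. I would choose $U$ so that
\[
\sigma^2U(\theta-\theta_c)=\nabla\tilde\ell(\theta;z)-\nabla\tilde\ell(\theta;z'),
\]
for instance via a rank-2 term built from the vectors $\theta-\theta_c$ and the gradient difference. Each re-centred gradient equals $\int_0^1\nabla^2\ell\,(\theta-\theta_c)$, so its norm is at most $L\|\theta-\theta_c\|$. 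This gives $\|U\|_{\op}\le 2L/\sigma^2$, which is where centring at the common minimizer $\theta_c$ matters. The density-ratio condition then yields the factor $e^{2Lf(2)/\sigma^2}$.

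The Jacobian of the map $W\mapsto\theta$ must also be handled. I would parametrize $W$ by the output $\theta$ plus complementary coordinates, in the style of Chaudhuri et al. The Jacobian ratio is then a determinant $\det(I+E)$, where $E$ has rank at most $2\rho+2$, coming from the two Hessians and the rank-2 correction. It also has norm at most $2L/(\sigma^2\lambda_{\min}(W))\le 2L/(\sigma^2\alpha)$, so the ratio is bounded by $e^{(2\rho+2)\cdot 2L/(\sigma^2\alpha)}$ up to constants. Combining the two factors gives $\varepsilon_1$ under the first lower bound on $\sigma^2$.

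To restrict to well-conditioned $W$, I condition on the event $\lambda_{\min}(W)\ge\alpha$, whose complement has probability at most $\delta_3$. The shift by $U$ can push $\lambda_{\min}$ below $\alpha$, but by at most $\|U\|_{\op}\le 2L/\sigma^2\le\alpha_1$. The second assumption, together with $\sigma^2\ge 2L/\alpha_1$, therefore costs only $\delta_1$ of boundary mass. Non-smoothness of $r$ and of the constraint set $C$ is the delicate part. I would first try treating $N_C(\theta)+\partial r(\theta)$ as fixed data along the change of variables, since the correction $U$ does not alter it, arguing via a measurable selection.

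\textbf{Step 3: inexact solves.} Strong convexity with modulus at least $\sigma^2\alpha$ on the good event, combined with \eqref{eq:approximate_def}, gives
\[
\|\theta_\approx-\theta_\priv\|\le\sqrt{2\tau/(\alpha\sigma^2)}.
\]
Consequently $\theta_\approx$ differs across the two scenarios by at most twice this amount. Releasing $\theta_\approx+b$ is then the Gaussian mechanism applied to the exactly-private $\theta_\priv$, giving $(\varepsilon_2,\delta_2)$ under the stated bound on $\tilde\sigma$. Basic composition of Steps 1 to 3 yields the theorem.

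\textbf{Main obstacle.} I expect Step 2 to be hardest: constructing the rank-2 correction $U$ that keeps $W+U$ positive semidefinite, and rigorously controlling the Jacobian ratio when $C$ and $r$ are non-smooth.
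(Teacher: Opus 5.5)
Your Steps 1 and 3 and the conditioning on $E_\alpha=\{\lambda_{\min}(W)\ge\alpha\}$ with the $\delta_1$ boundary-mass argument essentially follow the paper's proof. So does the core of Step 2: a rank-2 shift $U$ with $\|U\|_{\text{op}}\le 2L/\sigma^2$ relating the fibers over $\theta$, a density-ratio factor $e^{2Lf(2)/\sigma^2}$, and a low-rank determinant ratio. The gap is the passage to a non-differentiable $r$ and a general closed convex $C$, which the theorem covers. Your observation that the shift preserves the chosen element of $\partial r(\theta)+N_C(\theta)$ is correct, but it only gives a bijection between fibers. To compare probabilities you must disintegrate the law of $W$ along the fibers of $W\mapsto\theta_\priv$ via the coarea formula. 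The Jacobian there comes from the implicit function theorem applied to $\nabla^2\bar{\mathcal L}+\nabla^2\bar r+\sigma^2W$, and without $\nabla^2 r$ that computation is unavailable. Worse, once $C\neq\mathbb R^d$ (or $r$ has kinks, e.g.\ $\|\cdot\|_1$), $\theta_\priv$ lands on lower-dimensional sets such as faces of $C$ or coordinate subspaces with positive probability. Its law then has no Lebesgue density on $\mathbb R^d$, so there is no density ratio to bound. A measurable selection of subgradients repairs neither problem.

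The missing idea is the Successive Approximation Lemma of Kifer et al. If mechanisms $\phi^i(X;\mathcal D)$ driven by the same noise $X$ are all $(\varepsilon,\delta)$-DP, and $\phi^i\to\phi$ pointwise for every dataset and every realization of $X$, then $\phi(X;\mathcal D)$ is $(\varepsilon,\delta)$-DP. The paper proves your Step 2 only when $r$ is $C^2$ and $C=\mathbb R^d$, and then fixes $W\in E_\alpha$. First, it replaces $r$ by mollifications $r\star K_i$, which are smooth, convex and converge uniformly on compacts; the minimizers then converge by $\alpha\sigma^2$-strong convexity. Second, it replaces the constraint by the penalty $i\,\dist(\theta,C)$, absorbed into the regularizer so the mollification step applies. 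The penalized minimizer stays in the sublevel set of the perturbed objective at $\theta_\priv$, where convexity bounds the gain from leaving $C$ by a multiple of $\dist(\theta,C)$. So for large $i$ the penalty is exact and the penalized minimizer equals $\theta_\priv$. This works because your smooth-case bound depends only on $L,\rho,\alpha,\alpha_1,f$ and not on $r$, so the $(\varepsilon_1,\delta_1)$ guarantee is uniform in $i$.

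A smaller point: the perturbation in your determinant ratio is $\bigl(\nabla^2\ell(\theta;z')-\nabla^2\ell(\theta;z)\bigr)\pm\sigma^2U$. Its norm is bounded by $\|\nabla^2\ell(\theta;z')-\nabla^2\ell(\theta;z)\|+\sigma^2\|U\|_{\text{op}}\le 4L$, and your $2L$ is not justified. This $4L$ is what produces the term $2(2\rho+2)/\alpha$ in the stated condition on $\sigma^2$. So ``up to constants'' does not by itself recover the theorem, although the stated condition does suffice.
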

{\begin{proof}[Proof Idea]
    We provide a proof idea here, and the full details are available in Appendix \ref{sec::proof_dp}.

    \red{First, define $Y=\sum_{i=1}^n\nabla \ell(\theta_c; z_i)+a$, such that, given that $Y=y$, we may rewrite the objective as
    \begin{equation*}
        \theta_\priv = \argmin_{\theta\in C}\left\{\sum_{i=1}^n\bar\ell(\theta; z_i)+\bar r(\theta)+\frac{\sigma^2}{2}(\theta-\theta_c)^TW(\theta-\theta_c)\right\},
    \end{equation*}
    where $\bar \ell(\theta; z)=\ell(\theta; z)-\nabla \ell(\theta_c; z)^T(\theta-\theta_c)$ and $\bar r(\theta)=r(\theta)+y^T(\theta-\theta_c)$. Note that $\bar\ell$ and $\bar r$ still satisfy Problem Assumption \ref{ass:problem}, with the additional key fact that $\theta_c\in \argmin \bar\ell(\cdot; z)$ for all $z\in \X$. 
    }

    \red{Although the perturbed objective is almost surely strongly convex, we condition the mechanism on $W$ having uniformly bounded below eigenvalues by $\alpha$, making the perturbed objective $\alpha\sigma^2$-strongly convex.} As this occurs with high probability, the Conditional Mechanism Lemma \ref{lem:conditional} guarantees that privacy is preserved.
    
    In order to prove that the release of $\theta_\final$ is private, we shall show that the joint release of \red{$(Y, \theta_\priv, \theta_\final-\theta_\priv)$} is private, by applying the Composition of Mechanisms Lemma \ref{lem:composition}. The latter two releases \red{determine $\theta_\final$}, and hence can only worsen privacy. This idea is inspired by \cite{iyengar_practical_2019}. As such, the proof is split into \red{three} parts, proving the privacy of the release of\red{ $Y$}, of $\theta_\priv$ and of $\theta_\final-\theta_\priv$;
    \begin{enumerate}
        \red{\item The release of $Y$ is easily shown private by the Gaussian Mechanism \ref{lem:gauss}. Conditional to the release of $Y$, the above reformulation allows us to assume all loss functions are minimized at the common minimizer $\theta_c$. This is described in Lemma \ref{lem:releaseY}.}
        \item To show that the release of $\theta_\priv$ is private, we first show it is private under a simplification assumption, namely that the regularizer $r$ is twice differentiable and that the constraint set $C=\R^d$. This is the main novelty of the proof, presented in Lemma \ref{lem:part1_simple}, and largely extends the ideas in \cite{chaudhuri_differentially_2011} and \cite{kifer_private_2012}. We then apply the Successive Approximation Lemma \ref{lem:sucessive} to remove the simplification assumptions one-by-one, as done in \cite{kifer_private_2012}. This is described in Lemma \ref{lem:part1}.
        \item To show that the release of $\theta_\final-\theta_\priv$ is private, it is sufficient to note that $\theta_\final-\theta_\priv=\theta_\approx-\theta_\priv+b$, where $b$ is Gaussian noise. In this case, by the Gaussian Mechanism \ref{lem:gauss}, the process is private if $\theta_\approx-\theta_\priv$ is bounded uniformly in the noise realization, which it is. This is described in Lemma \ref{lem:part2}. \qedhere
    \end{enumerate}
\end{proof}}
Theorem \ref{thm:main} shows that privacy is ensured by combining \red{three} mechanisms: \red{the random center, which re-centers the loss functions,} the random quadratic perturbation, which controls the sensitivity of the exact minimizer through curvature, and the Gaussian mechanism, which accounts for approximation errors.

\subsection{Utility Analysis}\label{ssec:utility}

We now analyze the empirical excess risk of the proposed mechanism. The bound reflects the trade-off between privacy and solver accuracy.

Our utility analysis requires an additional assumption on the regularizer, given by the following. 
\begin{assumption}[Bounded Subgradient of Regularizer]\label{ass:problem_stronger}
    We assume that the regularizer $r$ has subgradients bounded by $G\ge 0$.
\end{assumption}
We note that one of our core contributions is that we do not require bounded gradients of the loss functions. The above assumption is not in contradiction with this, as it is an assumption on the regularizer. In fact, the assumption is quite standard for regularizers, and holds for the classical regularizers $r=\|\cdot\|_1$, $r=\|\cdot\|_2$ or $r=\|\cdot\|_*$.

We now state our utility result, whose proof is postponed to Appendix \ref{sec:proof_utility}.

\begin{theorem}[Empirical Excess Risk Bound]\label{thm:utility}
    Assume that the Problem Assumption \ref{ass:problem}, the Bounded Regularizer Subgradients Assumption \ref{ass:problem_stronger}, and the Random Matrix Assumption \ref{ass:distribution} hold. Moreover, select \red{$\sigma_0^2$}, $\sigma^2$ and $\tilde \sigma^2$ as in Theorem \ref{thm:main}, to ensure $(\varepsilon, \delta)$-differential privacy of Mechanism \ref{alg:QOP}. Then it holds that, for $\theta_\final$, the output of Mechanism \ref{alg:QOP} and $\theta_\exact$, the solution of the Unperturbed Problem \eqref{eq:p}, 
    \red{\[
        \Exp{\J (\theta_\final; \D)-\J(\theta_\exact; \D)}\le \frac{ndL\tilde\sigma^2}{2} +G\sqrt{d}\tilde\sigma+\tau+\frac{\mu\sigma^2}{2}\cdot \|\theta_\exact-\theta_c\|^2+\frac{\sigma_0^2d\nu }{2\sigma^2}.
    \]}
\end{theorem}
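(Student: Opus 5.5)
We split the excess risk into three pieces along the chain $\theta_\final \to \theta_\approx \to \theta_\priv \to \theta_\exact$:
\[
\J(\theta_\final)-\J(\theta_\exact)=\big[\J(\theta_\final)-\J(\theta_\approx)\big]+\big[\J(\theta_\approx)-\J(\theta_\exact)\big].
\]
The first bracket is the output-perturbation cost. Write $\J=\sum_i\ell(\cdot;z_i)+r$. Each $\ell(\cdot;z_i)$ is $L$-smooth, so the descent lemma gives
\[
\sum_i \ell(\theta_\approx+b;z_i)\le \sum_i\ell(\theta_\approx;z_i)+\Big(\sum_i\nabla\ell(\theta_\approx;z_i)\Big)^Tb+\tfrac{nL}{2}\|b\|^2 .
\]
Since $b$ is independent of $\theta_\approx$ and has zero mean, the linear term vanishes in expectation after conditioning on $(W,a)$. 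The quadratic term then contributes $\tfrac{nL}{2}\,\E\|b\|^2=\tfrac{ndL\tilde\sigma^2}{2}$. For the regularizer, Assumption \ref{ass:problem_stronger} makes $r$ $G$-Lipschitz, so $r(\theta_\approx+b)-r(\theta_\approx)\le G\|b\|$, and Jensen gives $\E\|b\|\le\sqrt{d}\tilde\sigma$. This produces the first two terms. (If $\theta_\final\notin C$ this is still fine, since $\J$ is defined on $\R^d$.)

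For the second bracket, we use that the perturbation term $Q(\theta)=a^T(\theta-\theta_c)+\frac{\sigma^2}{2}(\theta-\theta_c)^TW(\theta-\theta_c)$ differs from $\Jp-\J$ only by the $\theta$-independent constant $\frac{1}{2\sigma^2}a^TW^{-1}a$. Hence $\J=\Jp-\frac{\sigma^2}{2}(\cdot-\tilde\theta)^TW(\cdot-\tilde\theta)$. The $\tau$-accuracy \eqref{eq:approximate_def} and optimality of $\theta_\priv$ over $C\ni\theta_\exact$ then give
\[
\J(\theta_\approx)-\J(\theta_\exact)\le \tau+\tfrac{\sigma^2}{2}(\theta_\exact-\tilde\theta)^TW(\theta_\exact-\tilde\theta)-\tfrac{\sigma^2}{2}(\theta_\approx-\tilde\theta)^TW(\theta_\approx-\tilde\theta).
\]
We drop the last term because $W\succeq0$. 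It remains to take the expectation of $\frac{\sigma^2}{2}(\theta_\exact-\tilde\theta)^TW(\theta_\exact-\tilde\theta)$. Using the expansion above with $\theta=\theta_\exact$, this equals
\[
a^T(\theta_\exact-\theta_c)+\tfrac{\sigma^2}{2}(\theta_\exact-\theta_c)^TW(\theta_\exact-\theta_c)+\tfrac{1}{2\sigma^2}a^TW^{-1}a .
\]
Here $\theta_\exact$ is deterministic and $a$ is independent of $W$. The first term therefore has zero mean. The second has mean at most $\frac{\mu\sigma^2}{2}\|\theta_\exact-\theta_c\|^2$ by $\E W\preceq\mu I$. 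For the third, condition on $a$: $\E[a^TW^{-1}a\mid a]\le\nu\|a\|^2$ by $\E W^{-1}\preceq\nu I$, and $\E\|a\|^2=d\sigma_0^2$, giving $\frac{\sigma_0^2 d\nu}{2\sigma^2}$. Summing all pieces yields the stated bound.

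The main subtlety is bookkeeping rather than analysis. First, the expectation of the linear term in $b$ must be taken conditionally on $\theta_\approx$, which depends on $(W,a)$ and possibly on solver randomness, but $b$ is drawn independently afterwards. Second, the dropped term and the $\tau$-inequality must be used pathwise before taking expectations. Third, we rely on almost-sure invertibility of $W$ so that $\tilde\theta$ and $W^{-1}$ are well defined. No privacy parameters enter, except through the choice of $\sigma_0,\sigma,\tilde\sigma$.
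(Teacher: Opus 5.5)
Your proof is correct and follows essentially the same telescoping argument as the paper: the output-noise cost is handled via $L$-smoothness and the $G$-Lipschitz regularizer, and the rest via $\tau$-accuracy, optimality of $\theta_\priv$ over $C\ni\theta_\exact$, and the moment bounds $\Exp{W}\preceq\mu I$, $\Exp{W^{-1}}\preceq\nu I$. The only difference is where the constant $\frac{1}{2\sigma^2}a^TW^{-1}a$ is accounted for. The paper works with the recentered form of $\Jp$ and extracts the $\frac{\sigma_0^2 d\nu}{2\sigma^2}$ term by completing the square at $\theta_\approx$. You instead keep the random-center form, drop the quadratic at $\theta_\approx$ by nonnegativity, and obtain that term by expanding the quadratic at the deterministic point $\theta_\exact$, which gives the same bound.
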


In case the subproblem may be solved exactly, the above substantially simplifies, as many of the user-chosen parameters may be set to $0$.
Moreover, the Bounded Regularizer Subgradients Assumption \ref{ass:problem_stronger} is not needed in that case. 

\begin{corollary}[Empirical Excess Risk Bound for Exact Solve]\label{coro:utility_exact}
    Assume that the Problem Assumption \ref{ass:problem} and the Random Matrix Assumption \ref{ass:distribution} hold, and that the solver is exact, namely that \red{$\tau=0$}. Moreover, select \red{$\sigma_0^2$ and }$\sigma^2$ as in Theorem \ref{thm:main} and set $(\varepsilon_2, \delta_2)=0$, and $\tilde \sigma^2=0$, to ensure $(\varepsilon, \delta)$-differential privacy of Mechanism \ref{alg:QOP}. Then it holds that, for $\theta_\final$, the output of Mechanism \ref{alg:QOP}, and $\theta_\exact$, the solution of the Unperturbed Problem \eqref{eq:p}, 
    \red{\[
        \Exp{\J (\theta_\final; \D)-\J(\theta_\exact; \D)}\le \frac{\mu\sigma^2}{2}\cdot \|\theta_\exact-\theta_c\|^2+\frac{\sigma_0^2d\nu }{2\sigma^2}.
    \]}
\end{corollary}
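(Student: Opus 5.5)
We obtain Corollary \ref{coro:utility_exact} as a direct specialization of Theorem \ref{thm:utility}, and also re-derive it independently, since the corollary drops the Bounded Regularizer Subgradients Assumption \ref{ass:problem_stronger}. With $\tau=0$ and $\tilde\sigma=0$, the release is $\theta_\final=\theta_\approx=\theta_\priv$ almost surely. Here $\theta_\approx$ is exact because $\Jp$ is strongly convex on the almost-sure event that $W$ is invertible, so its minimizer is unique. The only terms of Theorem \ref{thm:utility} that used $G$ were $\frac{ndL\tilde\sigma^2}{2}+G\sqrt d\tilde\sigma$, which came from the output noise $b$; they vanish here. The bound therefore reads $\frac{\mu\sigma^2}{2}\|\theta_\exact-\theta_c\|^2+\frac{\sigma_0^2 d\nu}{2\sigma^2}$.

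Privacy follows from Theorem \ref{thm:main} with $(\varepsilon_2,\delta_2)=0$. The third release $\theta_\final-\theta_\priv=0$ is deterministic and so costs no privacy. This should be remarked on explicitly, because the condition on $\tilde\sigma$ in Theorem \ref{thm:main} is degenerate ($0/0$) when $\tau=0$ and $\varepsilon_2=0$.

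For the direct argument, we use the reformulation of Problem \eqref{eq:p_priv} with the linear term $a^T(\theta-\theta_c)$ and the deterministically centered quadratic term $\frac{\sigma^2}{2}(\theta-\theta_c)^TW(\theta-\theta_c)$. Write $Q(\theta)=\frac{\sigma^2}{2}(\theta-\theta_c)^TW(\theta-\theta_c)$. Optimality of $\theta_\priv$ against the feasible point $\theta_\exact\in C$ gives
\[
\J(\theta_\priv)-\J(\theta_\exact)\le Q(\theta_\exact)-Q(\theta_\priv)+a^T(\theta_\exact-\theta_\priv).
\]
Next we bound $a^T(\theta_\exact-\theta_\priv)-Q(\theta_\priv)$. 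Setting $u=\theta_\priv-\theta_c$, we have $-a^Tu-\frac{\sigma^2}{2}u^TWu\le\frac{1}{2\sigma^2}a^TW^{-1}a$ by maximizing the concave quadratic in $u$. The remaining piece $a^T(\theta_\exact-\theta_c)$ has zero mean because $a$ is centered and independent of $W$, and $\theta_\exact$ is deterministic. Taking expectations, independence of $a$ and $W$ gives $\mathbb E[a^TW^{-1}a]=\sigma_0^2\operatorname{tr}\mathbb E[W^{-1}]\le\sigma_0^2 d\nu$. Finally, $\mathbb E\, Q(\theta_\exact)\le\frac{\mu\sigma^2}{2}\|\theta_\exact-\theta_c\|^2$ follows from $\mathbb E W\preceq\mu I$. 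Summing yields the claim.

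The main obstacle is measure-theoretic rather than algebraic. We must check that $\theta_\priv$ is well defined and measurable, that the expectations are finite (by integrability of $W$ and $W^{-1}$ from Assumption \ref{ass:distribution}), and that the expectation is taken over the unconditioned law of $W$, not over the high-probability event used in the privacy proof. The direct argument above handles all three points.
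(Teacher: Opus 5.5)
Your proposal is correct and follows the paper's route. The corollary is the specialization of the five-term decomposition in the proof of Theorem \ref{thm:utility} with $\tau=0$ and $b=0$: your optimality comparison, completing-the-square bound and $\Exp{W}\preceq\mu I$ step are exactly the paper's terms \eqref{eq:utility:d}, \eqref{eq:utility:b} and \eqref{eq:utility:e}, while the $G$-dependent term \eqref{eq:utility:a} vanishes. Your explicit remark that the third release is deterministic, and hence free, sidesteps the degenerate $\tilde\sigma$ condition in Theorem \ref{thm:main}, a point the paper leaves implicit.
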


We note that the bounds in Theorem \ref{thm:utility} and in Corollary \ref{coro:utility_exact} depend on several user-chosen parameters, such as the privacy split $\varepsilon=\red{\varepsilon_0}+\varepsilon_1+\varepsilon_2$ and $\delta=\red{\delta_0+}\delta_1+\delta_2+\delta_3$, as well as on the choice of distribution $\mathcal W$. In particular, when $\mathcal W$ is taken to be a Wishart distribution, as laid out in Lemma \ref{lem:WishartSatisfiesAss}, the hidden dimension $m$ introduces an additional degree of freedom. 

In order to illustrate the obtained utility bounds, we numerically optimize the right-hand side of Theorem \ref{thm:utility} over all free parameters. The resulting optimal values are displayed in Figure \ref{fig:utility} for a range of $(\varepsilon, \delta)$. In these experiments, we fix $L=1$, $d=12$, $n=10$, \red{$\|\theta_\exact-\theta_c\|^2\le 1$, $\tau=1$}, $\rho=1$, $G=1$,\red{ and varying values of $\Delta_c$}. \red{An initial experiment also included varying values of $\tau$, but it was observed that $\tau$ has much smaller influence than $\Delta_c$ on the results.} We solve using the Nelder-Mead algorithm implemented in SciPy \citep{gao_implementing_2012}.

Figure \ref{fig:utility} highlights the expected trade-offs: as the failure probability $\delta$ or the privacy budget $\varepsilon$ increase, the optimal expected loss decreases. \red{Moreover, the magnitude of the optimal expected loss increases as $\Delta_c$ increases, which is to be expected as $\Delta_c$ measures heterogeneity of the gradients at $\theta_c$.}

\begin{figure}[H]
    \centering
    \includegraphics[width=0.8\linewidth]{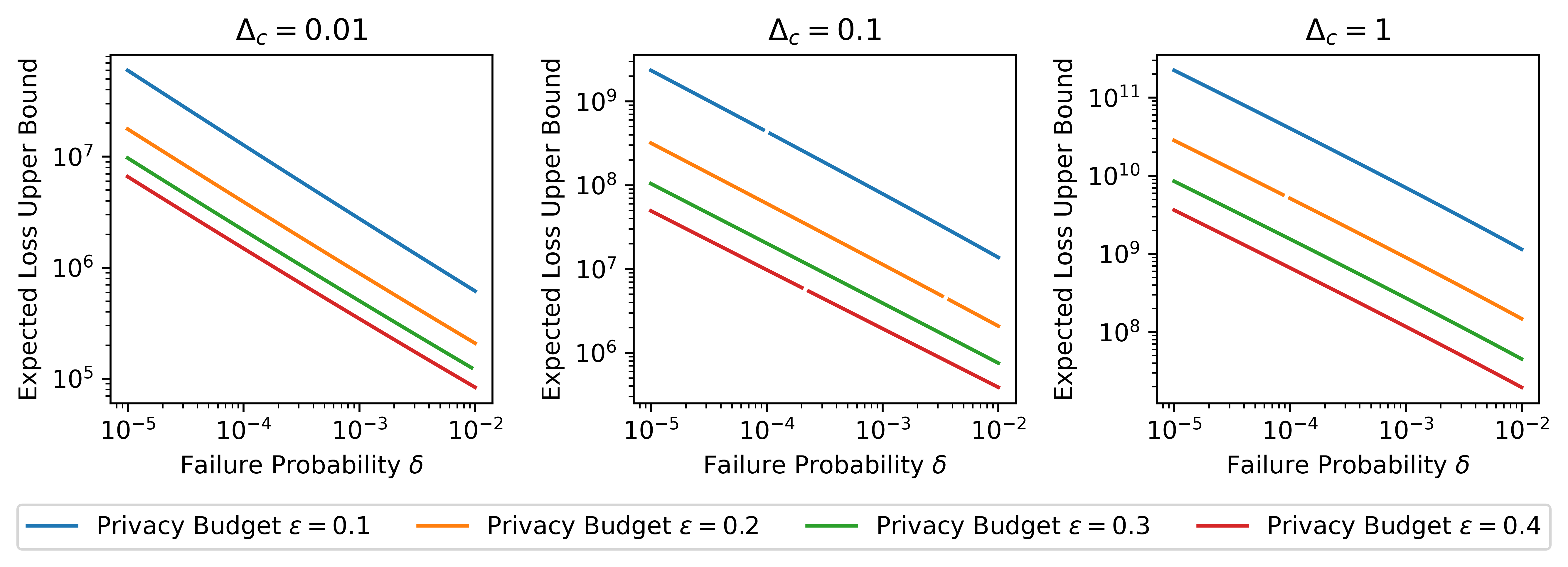}
    \caption{\red{Optimal upper bound on expected loss obtained by minimizing the right-hand side of Theorem \ref{thm:utility} over all free parameters.}}
    \label{fig:utility}
\end{figure}
\section{Algorithmic Considerations}\label{sec:algorithm}

A key observation is that the perturbed problems arising from both LOP and QOP, namely Problems \eqref{eq:p_priv_lin} and \eqref{eq:p_priv}, admit a common composite structure. For a fixed realization of the perturbation, they can be written as 
\[
    \min_{\theta\in \R^d}\left\{\sum_{i=1}^n\tilde \ell(\theta; z_i)+r(\theta)+\iota_C(\theta)\right\},
\]
where $\tilde \ell$ is a smooth convex function, $r$ is a convex (possibly nonsmooth) regularizer, and $\iota_C$ is the indicator function of the convex constraint set $C$. 

This naturally fits within the framework of \textit{Three-Operator Splitting} methods \citep{davis_threeoperator_2017a}, which are designed to handle sums of one smooth convex term and two nonsmooth convex terms. In particular, this allows us to separate the smooth structure of $\tilde \ell$ from the nonsmooth one of $r$ and $\iota_C$. 

In this work, we employ a stochastic variant of the method, leveraging the finite-sum structure of the smooth component. This was introduced in \cite{yurtsever_stochastic_2016,yurtsever_three_2021} under strong assumptions, and refined in \cite{cortild_stochastic_2026}. Specifically, under suitable assumptions, the scheme converges to a solution of the perturbed problems, and is appropriate for the approximate solves required by our analysis. We provide additional details and the full algorithm in Appendix \ref{sec:ass_compl}.
\section{Comparison to Linear Objective Perturbation}\label{sec:LOP}

We now compare our Quadratic Objective Perturbation (QOP) mechanism to the classical Linear Objective Perturbation (LOP) mechanism \citep{chaudhuri_differentially_2011,kifer_private_2012}. While both approaches perturb the objective to ensure privacy, they differ fundamentally in how they control the stability of the problem. This comparison is performed on two fronts, one being theoretical, in Section \ref{ssec:LOP_theory}, the other being numerical, in Section \ref{ssec:LOP_numerical}.

\subsection{Theoretical Comparison}\label{ssec:LOP_theory}

The LOP mechanism also solves a perturbed version of Problem \eqref{eq:p}, however with a random linear perturbation and a deterministic quadratic term. Specifically, it aims at solving 
\begin{equation}\label{eq:p_priv_lin}\tag{P-Priv-Lin}
    \theta_\priv^\lin=\argmin_{\theta\in C}\left\{\mathcal J_\priv^\lin(\theta; \D)=\sum_{i=1}^n\ell(\theta; z_i)+r(\theta)+\frac{\Delta}{2}\|\theta\|^2+a^T\theta\right\}.
\end{equation}
While follow-up works have considered inexact solvers \citep{iyengar_practical_2019}, they do not account for the presence of a regularizer. As such, we shall only compare the exact solve setting.

\begin{algorithm}[H]
\floatname{algorithm}{Mechanism}
\renewcommand{\thealgorithm}{LOP}
\caption{(Linear Objective Perturbation Mechanism)}\label{alg:LOP}
\begin{algorithmic}
\Require \\
\begin{itemize}
    \item target privacy parameters $(\varepsilon, \delta)$,
    \item appropriate parameters $\sigma^2$ and $\Delta$.
\end{itemize}
\State Draw $a\sim \mathcal N(0, \sigma^2 I)$. 
\State Compute the solution $\theta_\priv$ to Problem \eqref{eq:p_priv_lin}.
\State \textbf{Release} $\theta_{\priv}$.
\end{algorithmic}
\end{algorithm}

The LOP mechanism requires a different set of assumptions than the QOP mechanism. They are, however, comparable, and we thus introduce them as an extension of the Problem Assumption \ref{ass:problem}. 

\begin{assumption}[Modified Problem Assumption]\label{ass:problem_mod}
    We assume the Problem Assumption \ref{ass:problem}, with the following modifications:
    \begin{itemize}
        \item we assume the Hessian of the loss function $\nabla^2 \ell(\cdot, z)$ has rank at most $\rho=1$ for all $z\in \mathcal X$,
        \item we assume that all $\ell(\cdot, z)$ have uniformly bounded gradients bounded by $\zeta$. 
    \end{itemize}
\end{assumption}
\red{The latter assumption is an instance of our bounded gradient-diameter assumption, with $\Delta_c=2\zeta$ and any $\theta_c\in \R^d$. For a fair comparison, as the deterministic quadratic in LOP is centered at the origin, we select $\theta_c=0$ for the comparison.}

With these assumptions at hand, it was shown in \cite{kifer_private_2012} that Mechanism \ref{alg:LOP} is $(\varepsilon, \delta)$-differentially private and enjoys risk bounds. 

\begin{theorem}[{\thmcite[Theorems 2 and 4]{kifer_private_2012}}]\label{thm:utility_LOP}
    Assume that the Modified Problem Assumption \ref{ass:problem_mod} holds. Select $\sigma^2$ and $\Delta$ such that 
    \tight{
        $\sigma^2\ge {\zeta^2(8\log\tfrac2\delta+4\varepsilon)}/{\varepsilon^2}$ and $\Delta\ge {2L}/{\varepsilon}$.
    }{
    \[
        \sigma^2\ge \frac{\zeta^2(8\log\tfrac2\delta+4\varepsilon)}{\varepsilon^2}\quad \text{and}\quad \Delta\ge \frac{2L}{\varepsilon}.
    \]
    }Then Mechanism \ref{alg:LOP} is $(\varepsilon, \delta)$-differentially private. Moreover, for $\theta_\priv^\lin$ the solution of Problem \eqref{eq:p_priv_lin} and $\theta_\exact$ the solution of \tight{}{the Unperturbed} Problem \eqref{eq:p}, \tight{}{it holds that }
    \tight{\[
        \Exp{\J(\theta_\priv^\lin; \D)-\J(\theta_\exact; \D)}\le {2\sigma^2d}/{\Delta}+{\Delta}\|\theta_\exact\|^2/{2}.
    \]}{
    \[
        \Exp{\J(\theta_\priv^\lin; \D)-\J(\theta_\exact; \D)}\le \frac{2\sigma^2d}{\Delta}+\frac{\Delta}{2}\|\theta_\exact\|^2.
    \]
    }
\end{theorem}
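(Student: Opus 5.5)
This theorem is cited from \cite{kifer_private_2012}. The plan is to reconstruct the two parts of their argument in the notation of this paper.

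\textbf{Privacy.} The first step is the simplified setting: $C=\R^d$ and $r$ twice differentiable. There the first-order condition
\[
\nabla\J(\theta;\D)+\Delta\theta+a=0
\]
defines a bijection $a\mapsto\theta$ for fixed $\D$. The map $\theta\mapsto a(\theta;\D)=-\nabla\J(\theta;\D)-\Delta\theta$ is its inverse. By change of variables, the density of $\theta_\priv^\lin$ at $\theta$ is
\[
p_a\bigl(a(\theta;\D)\bigr)\cdot\bigl|\det\nabla_\theta a(\theta;\D)\bigr|.
\]
For adjacent $\D\sim\D'$ one compares the two densities at the same $\theta$, and the ratio splits into two factors.

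\emph{Jacobian factor.} The matrices $\nabla^2\J(\theta;\D)+\Delta I$ and its $\D'$ counterpart differ by $E=\nabla^2\ell(\theta;z')-\nabla^2\ell(\theta;z)$. By the Modified Problem Assumption~\ref{ass:problem_mod} this is a difference of two rank-one matrices, each of norm at most $L$. Matrix determinant lemma arguments with $A\succeq\Delta I$ bound the determinant ratio by $(1+L/\Delta)^2$. With $\Delta\ge 2L/\varepsilon$ this gives a factor of at most $e^{\varepsilon/2}$ (up to their exact bookkeeping).

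\emph{Gaussian factor.} The two noise vectors satisfy $a(\theta;\D)-a(\theta;\D')=\nabla\ell(\theta;z')-\nabla\ell(\theta;z)$, whose norm is at most $2\zeta$. The Gaussian density ratio is then bounded by $e^{\varepsilon/2}$ outside an event of probability $\delta$, by a standard Gaussian tail bound. The condition $\sigma^2\ge\zeta^2(8\log\frac2\delta+4\varepsilon)/\varepsilon^2$ is exactly what this tail computation needs.

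\emph{Removing the simplifications.} The general case with a constraint set and nonsmooth $r$ follows from the Successive Approximation Lemma~\ref{lem:sucessive}. One approximates $\iota_C$ and $r$ by smooth convex functions, and the minimizers converge in distribution. This is the step I expect to be the main technical obstacle, since privacy must pass to the limit uniformly.

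\textbf{Utility.} The regularized objective $\J(\cdot;\D)+\frac{\Delta}{2}\|\cdot\|^2+a^T\cdot$ is minimized by $\theta_\priv^\lin$, so comparing its value there with its value at $\theta_\exact$ gives
\[
\J(\theta_\priv^\lin)-\J(\theta_\exact)\le\frac{\Delta}{2}\|\theta_\exact\|^2-\frac{\Delta}{2}\|\theta_\priv^\lin\|^2+a^T(\theta_\exact-\theta_\priv^\lin).
\]
Let $\hat\theta$ denote the minimizer with $a=0$. $\Delta$-strong convexity gives the stability bound $\|\theta_\priv^\lin-\hat\theta\|\le\|a\|/\Delta$.

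One then writes $a^T(\theta_\exact-\theta_\priv^\lin)=a^T(\theta_\exact-\hat\theta)+a^T(\hat\theta-\theta_\priv^\lin)$. Taking expectations, the first term vanishes because $\hat\theta$ is independent of $a$ and $a$ has mean zero. The second term is at most $\|a\|^2/\Delta$. Since $\Exp{\|a\|^2}=\sigma^2d$, this yields a bound of order $\sigma^2d/\Delta$, which becomes $2\sigma^2d/\Delta$ once the constants are tracked. Finally, dropping the nonpositive term $-\frac{\Delta}{2}\|\theta_\priv^\lin\|^2$ leaves the stated bound.
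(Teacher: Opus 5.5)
\textbf{Overall.} Your utility argument is correct. In fact your own bookkeeping gives $\sigma^2 d/\Delta+\frac{\Delta}{2}\|\theta_\exact\|^2$, which already implies the stated bound, so no further constant tracking is needed. The privacy half does not close, and it fails exactly at the step you hedge with ``up to their exact bookkeeping.'' The paper only cites \cite{kifer_private_2012} and gives no proof to fall back on, so everything rests on your budget split.

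\textbf{Jacobian factor.} Your bound $(1+L/\Delta)^2$ with $\Delta\ge 2L/\varepsilon$ gives only $(1+\varepsilon/2)^2\le e^{\varepsilon}$. Since $(1+\varepsilon/2)^2>e^{\varepsilon/2}$ for every $\varepsilon\in(0,1)$, the Jacobian alone would use the whole budget. To get $e^{\varepsilon/2}$ you need the sign structure of $E$. Write $E=uu^T-vv^T$ with $\|u\|^2,\|v\|^2\le L$. Then $A^{-1/2}EA^{-1/2}$ has at most one positive eigenvalue $\mu_+\le L/\Delta$ and at most one negative eigenvalue $\mu_-\in[-L/\Delta,0]$. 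Hence $\det(A+E)/\det(A)=(1+\mu_+)(1+\mu_-)\le 1+L/\Delta\le e^{\varepsilon/2}$, and the same holds with $\mathcal D$ and $\mathcal D'$ exchanged.

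\textbf{Gaussian factor (the more serious problem).} The noise condition reads $\sigma^2\ge(2\zeta)^2(2\log\frac2\delta+\varepsilon)/\varepsilon^2$. This is exactly the standard calibration that keeps the Gaussian privacy loss at sensitivity $2\zeta$ below $\varepsilon$ outside probability $\delta/2$. It does not keep it below $\varepsilon/2$; that would need about four times the variance.

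This is not slack in a tail bound. Take the linear losses $\ell(\theta;z)=\zeta e_1^T\theta$ and $\ell(\theta;z')=-\zeta e_1^T\theta$, with $\varepsilon=1/2$, $\delta=10^{-5}$ and $\sigma^2$ at the threshold. The Gaussian factor of the density ratio then exceeds $e^{\varepsilon/2}$ with probability about $7\times 10^{-3}\gg\delta$. The theorem itself is fine in that instance: the Jacobian factor is $1$, so the noise may use all of $\varepsilon$. But that is a coupling between the two factors, and your product bound discards it.

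So your decomposition, even with the sharp Jacobian bound, yields only roughly $(3\varepsilon/2,\delta/2)$ under the replacement adjacency used in this paper. The stated constants do fit an $\varepsilon/2+\varepsilon/2$ split when the gradient difference is bounded by $\zeta$. That happens under add/remove adjacency, where $E$ is also a single rank-one matrix. You therefore need either to pin down the neighbouring notion under which the cited result is proved, or to find an argument that couples the two factors.

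\textbf{Smaller points.}
\begin{itemize}
\item $w=\nabla\ell(\theta;z')-\nabla\ell(\theta;z)$ depends on $\theta$, and hence on $a$. So $a^Tw$ is not a fixed one-dimensional Gaussian, and your ``standard tail bound'' needs justification.
\item The approximation step you single out as the main obstacle is routine. $\Delta$-strong convexity holds for every realization of $a$, so the arguments in the proof of Lemma~\ref{lem:part1} give pointwise convergence of the minimizers. That pointwise convergence, not convergence in distribution, is the hypothesis Lemma~\ref{lem:sucessive} actually requires.
\end{itemize}
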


In summary, Mechanism \ref{alg:QOP} differs from Mechanism \ref{alg:LOP} in two key aspects: it controls the sensitivity through random curvature, and it integrates privacy, stability and strong convexification directly into the random perturbation rather than requiring an additional perturbation for it. This allows Mechanism \ref{alg:QOP} to apply to a broader class of problems and provide a more natural framework for balancing privacy and utility. We summarize the main differences in Table \ref{tab:LOPvsQOP}.

\begin{table}[H]
    \centering
    \caption{Comparison between Linear and Quadratic Objective Mechanisms.}
    \begin{tabular}{|l|c|c|} \hline
        \textbf{Property} & \textbf{Mechanism \ref{alg:LOP}} & \textbf{Mechanism \ref{alg:QOP}} \\ \hline
        Random Perturbation & Linear & Quadratic \\
        Stability Mechanism & Deterministic Quadratic & Intrinsic through Curvature \\
        \red{Gradient Assumption} & \red{Uniformly Bounded} & \red{Bounded Diameter at $\theta_c$} \\ \hline
    \end{tabular}
    \label{tab:LOPvsQOP}
\end{table}

\subsection{Numerical Comparison}\label{ssec:LOP_numerical}

We now empirically demonstrate a fundamental scaling difference between Mechanisms \ref{alg:LOP} and \ref{alg:QOP}: in settings with unbounded gradients, LOP's risk scales with the diameter of the constraint set, whereas QOP's risk remains invariant. We compare only the curvature-based perturbations to the linear perturbations, and not the \red{solver-error corrections}. 

We consider the problem of linear least squares with LASSO regression. Specifically, given a dataset $\D=\{z_i=(x_i, y_i)\}\subset [-\xi, \xi]^d\times \red{[-1, 1]}$, the problem is given by 
\[
    \min_{\theta\in C}\left\{\sum_{i=1}^n \frac12(x_i^T\theta-y_i)^2+\omega\|\theta\|_1\right\},
\]
where $\omega$ is a regularization parameter. The regularizer has bounded subgradients by $G=\sqrt{d}\omega$, each loss function is $L$-smooth for $L=d\xi^2$ and has a Hessian of rank at most $\rho=1$, and the loss function has bounded gradients when the domain is bounded, with, for all $z\in \X$,
\[
    \|\nabla \ell(\theta; z)\|\le \zeta=(\diam(C)\xi\sqrt{d} +1)\cdot \xi\sqrt{d}.
\]
\red{For a fair comparison between LOP and QOP, we fix $\theta_c=0$, at which the bounded gradient-diameter assumption is satisfied with
\[
    \Delta_c=\sup_{z, z'\in \X}\|\nabla \ell(\theta_c; z)- \nabla \ell(\theta_c; z')\|=\sup_{(x, y), (x', y')\in \X}\|xy-x'y'\|\le 2\xi\sqrt{d}.
\]}

The resulting Problems \eqref{eq:p_priv_lin} and \eqref{eq:p_priv} admit a composite structure, so we use  the \textit{Stochastic Three-Operator Splitting scheme} mentioned in Section \ref{sec:algorithm} (and
described in Appendix \ref{sec:ass_compl}). The experimental setup is given in Appendix \ref{sec:experimental}, and the resulting empirical risks are plotted in Figure \ref{fig:risk}. Statistical quantities, such as standard deviation and runtime, are reported in Section \ref{ssec:stats}.

\begin{figure}[H]
    \centering
    \includegraphics[width=1\linewidth]{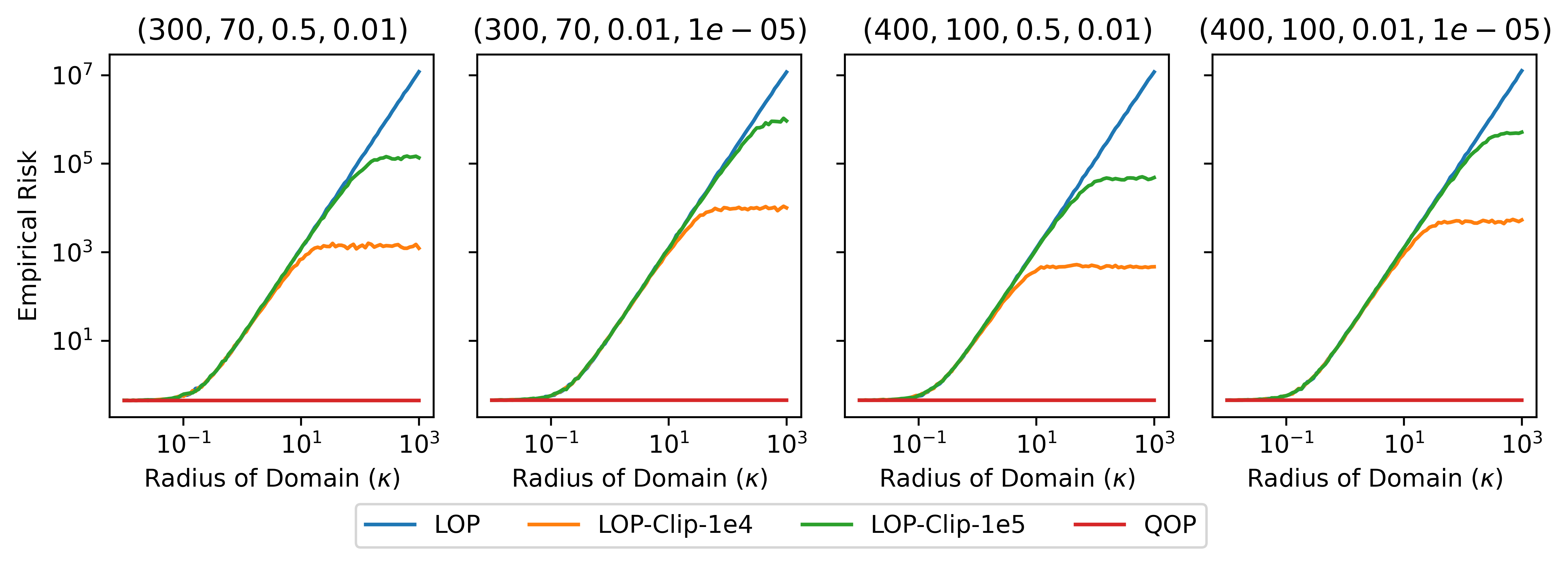}
    \caption{\red{Empirical risks plotted on a logarithmic scale for various values of $(n, d, \varepsilon, \delta)$.}}
    \label{fig:risk}
\end{figure}

We also compare the two aforementioned mechanisms to LOP-Clip, which clips the gradient used in LOP. This mechanism has provable privacy guarantees \citep{redberg_improving_2023}, though dependent on the clipping threshold, which is an additional user-selected parameter.

As predicted by theory, LOP requires noise proportional to the gradient bound, which scales with $\kappa$ (appearing linear on the logarithmic scale), whereas QOP controls sensitivity via curvature \red{and does not directly depend on the constraint diameter, and, empirically, its risk is observed stable with respect to $\kappa$}. Overall, our experiments complement and illustrate the theory.

\section{Conclusion}

In this work, we introduced Quadratic Objective Perturbation (QOP) for differentially private empirical risk minimization. In contrast with Linear Objective Perturbation (LOP), which relies on strong bounded gradient assumptions, our mechanism enforces privacy through curvature while simultaneously inducing stability and strong convexity, yielding $(\varepsilon, \delta)$-differential privacy guarantees under significantly weaker assumptions.

We established privacy and utility guarantees, extended the analysis to approximate solutions, and showed that the resulting problem can be solved efficiently due to its composite structure.

Our theoretical and numerical results demonstrate that QOP and LOP satisfy fundamentally different scaling laws when the gradients are unbounded. In particular, LOP degrades with the constraint diameter, whereas QOP remains stable. These findings suggest that curvature-based perturbations provide a natural alternative to classical objective perturbation methods, particularly in modern machine learning settings where strong regularity assumptions are rarely met or unrealistic.

\textbf{Future Work.} 
Several directions remain open.
\red{A first avenue would be to extend the current utility analysis, including population risk, generalization bounds or high-probability guarantees.}
Another direction is to strengthen the privacy analysis by considering Rényi differential privacy rather than standard differential privacy. \red{Finally, it would be of interest to study the case of pure DP, and how the noise assumptions would be updated accordingly.}

\paragraph{Acknowledgments.} Daniel Cortild acknowledges the support of the Clarendon Funds Scholarships. Coralia Cartis' work was supported by the Hong Kong Innovation and Technology Commission (InnoHK Project CIMDA) and by the EPSRC grant EP/Y028872/1, Mathematical Foundations of Intelligence: An “Erlangen Programme” for AI.

\bibliographystyle{apalike}
\bibliography{references}

\newpage
\appendix

\section{Summary of Problems and Notation}\label{sec:notation}

For convenience, we summarize the different optimization problems and their associated solutions used throughout the paper. Table \ref{tab:explanation} is intended as a reference for the various quantities appearing in the main text and appendices.

\begin{table}[H]
    \centering
    \caption{Summary of problems and their corresponding solutions used throughout the paper.}
    \begin{tabular}{|c|l|l|}\hline
        \textbf{Problem} & \textbf{Solution} & \textbf{Description} \\ \hline
        \multirow{2}{*}{-} 
            & \red{$\theta_c$} & \red{point at which Assumption \ref{ass:problem} holds} \\
            & \red{$\tilde \theta$} & \red{random center of quadratic perturbation} \\ \hline
        $\min_{\theta\in C}\mathcal J(\theta; D)$ 
            & $\theta_\exact$ & exact solution to unperturbed problem \\ \hline
        \multirow{2}{*}{$\min_{\theta\in C}\Jp(\theta; D)$} 
            & $\theta_\priv$ & exact solution to perturbed problem \\
            & $\theta_\approx$ & approximate solution to perturbed problem \\ \hline
    \end{tabular}
    \label{tab:explanation}
\end{table}
\section{Preliminary Results from Differential Privacy}\label{sec:dp_lemmas}

These preliminary lemmas complement the short discussion on differential privacy in Section \ref{ssec:dp}. Most lemmas, unless stated otherwise, are extracted from \cite{dwork_algorithmic_2014}.

\begin{lemma}[Gaussian Mechanism]\label{lem:gauss}
    Let $f\colon \X\to \R^p$ have $\ell_2$-sensitivity $\Delta$, namely 
    \[
        \Delta=\max_{\D\sim \D'\in \X}\|f(\D)-f(\D')\|_2.
    \]
    Define the randomized mechanism $\mathcal M(\D)=f(\D)+b$, where $b\sim \mathcal N(0, \sigma^2I)$, where $\sigma\ge \frac{\Delta}{\varepsilon}\sqrt{2\ln(1.25/\delta)}$ \red{with $\varepsilon\in (0, 1)$ and $\delta>0$}. Then $\mathcal M$ is $(\varepsilon, \delta)$-DP. 
\end{lemma}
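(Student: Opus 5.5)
The plan is the standard privacy-loss argument for the Gaussian mechanism. Fix adjacent datasets $\D\sim\D'$ with $v=f(\D)-f(\D')$, so $\|v\|_2\le\Delta$. The outputs $\mathcal M(\D)\sim\mathcal N(f(\D),\sigma^2I)$ and $\mathcal M(\D')\sim\mathcal N(f(\D'),\sigma^2I)$ have densities. We therefore study the privacy loss random variable
\[
  L(x)=\ln\frac{p_{\D}(x)}{p_{\D'}(x)}=\frac{\|x-f(\D')\|^2-\|x-f(\D)\|^2}{2\sigma^2},
\]
evaluated at $x=f(\D)+b$. Expanding the squares gives
\[
  L=\frac{\|v\|^2+2v^Tb}{2\sigma^2}.
\]
By rotational invariance of $b$, $v^Tb\sim\mathcal N(0,\sigma^2\|v\|^2)$, so $L$ is a one-dimensional Gaussian with mean $\|v\|^2/(2\sigma^2)$ and standard deviation $\|v\|/\sigma$. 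This reduces the problem to the scalar case with sensitivity $\|v\|\le\Delta$.

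Next, define the bad set $B=\{x: L(x)>\varepsilon\}$. For any measurable $S$, split $S=(S\cap B^c)\cup(S\cap B)$. On $S\cap B^c$ we have $p_\D\le e^\varepsilon p_{\D'}$ pointwise, so
\[
  \P(\mathcal M(\D)\in S)\le e^\varepsilon\P(\mathcal M(\D')\in S)+\P(\mathcal M(\D)\in B).
\]
It then suffices to show $\P(L>\varepsilon)\le\delta$. Writing $Z\sim\mathcal N(0,1)$ and $v^Tb=\sigma\|v\|Z$, the event $L>\varepsilon$ becomes
\[
  Z>\frac{\varepsilon\sigma}{\|v\|}-\frac{\|v\|}{2\sigma}.
\]
The right-hand side is decreasing in $\|v\|$, so the worst case is $\|v\|=\Delta$. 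Write $c=\sqrt{2\ln(1.25/\delta)}$ and take $\sigma=c\Delta/\varepsilon$; larger $\sigma$ only increases the threshold. The threshold then becomes $t=c-\varepsilon/(2c)$.

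The main obstacle is the tail computation showing $\P(Z>t)\le\delta$. This is where the constant $1.25$ and the restriction $\varepsilon<1$ enter. I would use the Mills-ratio bound
\[
  \P(Z>t)\le\frac{1}{t\sqrt{2\pi}}e^{-t^2/2}
\]
and expand
\[
  t^2/2=\ln(1.25/\delta)-\varepsilon/2+\varepsilon^2/(8c^2).
\]
This yields
\[
  \P(Z>t)\le \frac{\delta}{1.25}\cdot\frac{e^{\varepsilon/2}}{t\sqrt{2\pi}}.
\]
Since $\varepsilon<1$, we have $e^{\varepsilon/2}<e^{1/2}$. It remains to check that $t\sqrt{2\pi}\ge 1.25\,e^{1/2}\approx 2.06$, i.e.\ $t\ge 0.83$ roughly. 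For $\delta$ small this holds because $c$ grows as $\delta\to0$; for example $\delta\le 10^{-1}$ gives $c\approx 2.25$ and $t\ge 2.0$.

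For larger $\delta$, where this check can fail, I would treat the regime separately. If $\delta\ge 1.25e^{-1/2}$-type thresholds make $c$ small, one can argue the bound trivially or via the direct inequality $\P(Z>t)\le \tfrac12 e^{-t^2/2}$ for $t\ge0$. If $\delta\ge1$ the statement is trivial. This follows Dwork and Roth's Theorem A.1, to which I would ultimately defer for the exact constant bookkeeping.
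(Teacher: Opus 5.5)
Your argument is correct. It is essentially the standard privacy-loss proof of Theorem A.1 in \cite{dwork_algorithmic_2014}, which is all the paper relies on here, since it quotes the lemma from that source without proof.

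The large-$\delta$ regime you leave vague is closed by the very fallback you name. Whenever $t\ge 0$, one has $\P(Z>t)\le \frac12 e^{-t^2/2}\le \frac{e^{1/2}}{2.5}\,\delta<\delta$ for every $\delta$, so the Mills-ratio step is not even needed. When $t<0$, this forces $\delta>1.25e^{-1/4}>0.97$; then $t>-0.09$, and trivially $\P(Z>t)<0.54<\delta$.
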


\begin{lemma}[Composition of Mechanisms]\label{lem:composition}
    Let $\mathcal M_1, \ldots, \mathcal M_k$ be a sequence of mechanisms, possibly dependent on the outcomes of the previous mechanisms (namely, $\mathcal M_2$ may depend on the data $\D$ and on $\mathcal M_1(\D)$). Consider the composed mechanism $\mathcal M$, given by 
    \[
        \mathcal M(\D)=(\mathcal M_1(\D), \mathcal M_2(\mathcal M_1(\D), \D), \ldots ).
    \]
    If $\mathcal M_i$ is $(\varepsilon_i, \delta_i)$-DP, then $\mathcal M$ is $(\sum \varepsilon_i,\sum \delta_i)$-DP.
\end{lemma}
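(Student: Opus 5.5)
We prove the statement for $k=2$ and then induct on $k$. For the induction, view $(\mathcal M_1,\ldots,\mathcal M_{k-1})$ as a single adaptively composed mechanism, which is $(\sum_{i<k}\varepsilon_i,\sum_{i<k}\delta_i)$-DP by the induction hypothesis. Then compose it with $\mathcal M_k$, which may depend on all previous outputs. So everything reduces to two mechanisms. Throughout, we assume each $\mathcal M_2(o_1,\cdot)$ is given by a Markov kernel, so that sections of measurable sets are measurable and Fubini applies.

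Fix adjacent $\D\sim\D'$ and a measurable $S$ in the product output space, and write $S_{o_1}=\{o_2:(o_1,o_2)\in S\}$ for its sections. Let $P_1,Q_1$ be the laws of $\mathcal M_1(\D),\mathcal M_1(\D')$, and let $P_2(\cdot\mid o_1),Q_2(\cdot\mid o_1)$ be the laws of $\mathcal M_2(o_1,\D),\mathcal M_2(o_1,\D')$. Conditioning on the first output gives
\[
\P(\mathcal M(\D)\in S)=\int P_2(S_{o_1}\mid o_1)\,dP_1(o_1).
\]
The DP of $\mathcal M_2$ holds for every fixed $o_1$. Combined with the trivial bound $P_2\le 1$, it yields
\[
P_2(S_{o_1}\mid o_1)\le h(o_1)+\delta_2,\qquad h(o_1):=\min\bigl(1,e^{\varepsilon_2}Q_2(S_{o_1}\mid o_1)\bigr)\in[0,1].
\]
Integrating against $P_1$ gives $\P(\mathcal M(\D)\in S)\le \int h\,dP_1+\delta_2$.

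The key step is to extend the DP inequality of $\mathcal M_1$ from indicator functions to $[0,1]$-valued functions. The claim is that $\int h\,dP_1\le e^{\varepsilon_1}\int h\,dQ_1+\delta_1$ for every measurable $h$ with values in $[0,1]$. To prove it, use the layer-cake formula $h=\int_0^1 \mathbf 1\{h>t\}\,dt$. Apply the DP inequality of $\mathcal M_1$ to each level set $\{h>t\}$, which gives $P_1(h>t)\le e^{\varepsilon_1}Q_1(h>t)+\delta_1$, and integrate in $t$ over $[0,1]$ using Tonelli. Note that $\delta_1$ is paid only once here, precisely because $h\le 1$; this is why we truncated at $1$ in the definition of $h$.

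Combining the two bounds,
\[
\P(\mathcal M(\D)\in S)\le e^{\varepsilon_1}\int \min\bigl(1,e^{\varepsilon_2}Q_2(S_{o_1}\mid o_1)\bigr)\,dQ_1(o_1)+\delta_1+\delta_2 .
\]
Drop the minimum in favour of its second argument. The remaining integral is then $e^{\varepsilon_2}\int Q_2(S_{o_1}\mid o_1)\,dQ_1(o_1)=e^{\varepsilon_2}\P(\mathcal M(\D')\in S)$. This gives the $(\varepsilon_1+\varepsilon_2,\delta_1+\delta_2)$ bound, and the induction concludes. The only delicate point is the functional extension of the DP inequality together with the truncation of $h$. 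Without truncation, $h$ could exceed $1$ and the additive $\delta$ terms would not combine linearly.
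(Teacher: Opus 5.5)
Your proof is correct. The truncated bound $P_2(S_{o_1}\mid o_1)\le \min\bigl(1,e^{\varepsilon_2}Q_2(S_{o_1}\mid o_1)\bigr)+\delta_2$ holds, the layer-cake extension of the $(\varepsilon_1,\delta_1)$ inequality to $[0,1]$-valued test functions is valid, and the induction is sound given that each $\mathcal M_i$ is assumed private for every fixed value of the previous outputs, which is exactly how the paper uses the lemma (``uniformly in the realization of $Y$''). The paper gives no proof of its own and simply cites \cite{dwork_algorithmic_2014}; your argument is the standard one for adaptive composition (truncate the inner bound at $1$, then apply the outer DP inequality to a $[0,1]$-valued function), made rigorous on general measurable output spaces.
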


\begin{lemma}[Conditional Mechanism]\label{lem:conditional}
    Let $\mathcal M$ be a randomized mechanism and let $E$ be a data-independent event. If $E$ occurs with probability at least $1-\tilde \delta$, and $\mathcal M$, conditioned on $E$, is $(\varepsilon, \delta)$-DP, then $\mathcal M$ is overall $(\varepsilon, \delta+\tilde \delta)$-DP.
\end{lemma}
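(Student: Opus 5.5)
The plan is to prove the Conditional Mechanism Lemma directly from the definition of differential privacy, splitting every output probability according to whether $E$ occurs. Fix adjacent datasets $\D\sim\D'$ and a measurable set $S$. Because $E$ is data-independent, it is determined by the internal randomness of $\mathcal M$ alone. This means $\P(E)$ and $\P(E^c)\le\tilde\delta$ are the same whether we run the mechanism on $\D$ or on $\D'$. This is the one point where data-independence is essential. Without it, the conditioning weight $\P(E)$ would change between datasets and could not be cancelled.

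The computation proceeds as follows. By the law of total probability,
\[
\P(\mathcal M(\D)\in S)=\P(\mathcal M(\D)\in S\mid E)\P(E)+\P(\mathcal M(\D)\in S, E^c).
\]
The second term is at most $\P(E^c)\le\tilde\delta$. For the first term, apply the conditional $(\varepsilon,\delta)$-DP hypothesis:
\[
\P(\mathcal M(\D)\in S\mid E)\le e^\varepsilon\P(\mathcal M(\D')\in S\mid E)+\delta.
\]
Multiply through by $\P(E)\le 1$. The $\delta$ term then contributes at most $\delta$, and the main term becomes
\[
e^\varepsilon\P(\mathcal M(\D')\in S\mid E)\P(E)=e^\varepsilon\P(\mathcal M(\D')\in S, E)\le e^\varepsilon\P(\mathcal M(\D')\in S).
\]
Summing the three contributions gives
\[
\P(\mathcal M(\D)\in S)\le e^\varepsilon\P(\mathcal M(\D')\in S)+\delta+\tilde\delta,
\]
which is the claimed $(\varepsilon,\delta+\tilde\delta)$-DP.

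Two points need care. The first is a degenerate case: if $\P(E)=0$, the conditional statement is vacuous, but then $\tilde\delta\ge1$ and the conclusion is trivial. The second is making precise what "conditioned on $E$" means. We interpret the hypothesis as DP of the mechanism whose internal randomness is drawn from its law restricted to $E$. Since $E$ does not depend on the data, this restricted law is the same for $\D$ and $\D'$, so the hypothesis is well defined. Beyond these, there is no real obstacle; the argument is a routine two-line computation.
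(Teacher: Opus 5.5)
Your proof is correct. You split on $E$ and its complement, use data-independence so that $E$ has the same probability under $\mathcal D$ and $\mathcal D'$ (which lets the conditional term recombine into a joint probability), and bound the complement by $\tilde\delta$; this is the standard argument. The paper itself gives no proof of this lemma and cites it as a known fact from Dwork and Roth, so your computation supplies exactly the argument being cited.
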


\begin{lemma}[Successive Approximation {(\thmcite[Theorem 1]{kifer_private_2012})}]\label{lem:sucessive}
    Let $\mathcal M$ be a randomized mechanism induced by a deterministic function $\phi$ and a random variable $X$, that is $\mathcal M(\D)=\phi(X; \D)$. Let $\mathcal M_1, \mathcal M_2, \ldots$ be a sequence of randomized mechanisms, which each $\mathcal M_i$ is induced by a deterministic function $\phi^i$ and the random variable $X$, namely $\mathcal M_i(\D)=\phi^i(X; \D)$. If all the mechanisms $\mathcal M_i$ are $(\varepsilon, \delta)$-DP and the sequence $(\phi^i)$ converges pointwise to $\phi$ for all datasets $\D$ and all realizations of $X$, then $\mathcal M$ is also $(\varepsilon, \delta)$-DP.
\end{lemma}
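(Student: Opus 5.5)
We prove the Successive Approximation Lemma \ref{lem:sucessive} directly from the definition of differential privacy, by passing to the limit in the privacy inequality. We use the standard limit argument of \cite{kifer_private_2012}.

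Fix adjacent datasets $\D\sim\D'$. Write the mechanisms as $\mathcal M_i(\D)=\phi^i(X;\D)$ and $\mathcal M(\D)=\phi(X;\D)$. It suffices to show
\[
\P(\phi(X;\D)\in S)\le e^\varepsilon\,\P(\phi(X;\D')\in S)+\delta
\]
for every measurable $S$. We first treat open sets. Let $O$ be open. Pointwise convergence gives $\mathbf 1_O(\phi(X;\D))\le\liminf_i \mathbf 1_O(\phi^i(X;\D))$ for every realization of $X$: if $\phi(X;\D)\in O$, then eventually $\phi^i(X;\D)\in O$. Fatou's lemma then gives
\[
\P(\phi(X;\D)\in O)\le \liminf_i\P(\phi^i(X;\D)\in O).
\]
Next we apply the $(\varepsilon,\delta)$-DP of each $\mathcal M_i$ to the right-hand side, which bounds it by $\liminf_i \bigl(e^\varepsilon\P(\phi^i(X;\D')\in O)+\delta\bigr)$.

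We cannot pass this limit back to $\phi$ directly. For an open set, the upper semicontinuity needed runs the wrong way on the $\D'$ side. The fix is to work with closed sets on the $\D'$ side. For a closed set $F\supseteq O$, the reverse Fatou lemma gives
\[
\limsup_i\P(\phi^i(X;\D')\in F)\le \P(\phi(X;\D')\in F),
\]
because $\limsup_i \mathbf 1_F(\phi^i)\le \mathbf 1_F(\phi)$ when $F$ is closed and $\phi^i\to\phi$. Hence, for open $O$ and closed $F$ with $O\subseteq F$,
\[
\P(\mathcal M(\D)\in O)\le e^\varepsilon\P(\mathcal M(\D')\in F)+\delta.
\]

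Now let $K$ be closed and take the shrinking neighbourhoods $O_k=\{x:\operatorname{dist}(x,K)<1/k\}$ and $F_k=\{x:\operatorname{dist}(x,K)\le 1/k\}$. Apply the inequality to the pair $K\subseteq O_k\subseteq F_k$ and let $k\to\infty$. The sets $F_k$ decrease to $K$, so continuity from above gives
\[
\P(\mathcal M(\D)\in K)\le e^\varepsilon\P(\mathcal M(\D')\in K)+\delta
\]
for all closed $K$. Finally, the laws of $\mathcal M(\D)$ and $\mathcal M(\D')$ are Borel measures on Euclidean space, hence inner regular by closed sets. For any Borel $S$, take closed $K_j\subseteq S$ with $\P(\mathcal M(\D)\in K_j)\uparrow\P(\mathcal M(\D)\in S)$. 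Bounding the right-hand side via $K_j\subseteq S$ then gives the inequality for $S$, which is the claim.

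The main obstacle is the mismatch of semicontinuity directions: $\liminf$ works for open sets and $\limsup$ works for closed sets. This forces the sandwich through $O_k$ and $F_k$ and the regularity step, rather than a single application of dominated convergence, which would fail on sets whose boundary carries positive mass.
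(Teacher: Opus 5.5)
Your proof is correct. The paper never proves this lemma itself; it imports the result from \cite{kifer_private_2012} (their Theorem 1). So your proposal gives a self-contained proof of a cited result rather than competing with an argument in the paper.

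The key point you use correctly is that $\phi^i(X;\mathcal D)$ and $\phi(X;\mathcal D)$ are driven by the same $X$, so pointwise convergence is an almost-sure coupling. Fatou on open sets (the $\mathcal D$ side) and reverse Fatou on closed sets (the $\mathcal D'$ side) are then exactly the two portmanteau inequalities. The $O_k\subseteq F_k$ sandwich, together with closed inner regularity of finite Borel measures, removes the mismatch between open and closed sets.

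Writing the argument out, rather than citing it, exposes hypotheses the lemma uses silently:
\begin{itemize}
\item measurability of $x\mapsto\phi^i(x;\mathcal D)$;
\item a metric (here Euclidean) output space with its Borel $\sigma$-algebra;
\item reading $(\varepsilon,\delta)$-DP of each $\mathcal M_i$ as holding for all Borel subsets of that space, not only for measurable subsets of the range of $\mathcal M_i$, as the paper's definition is literally phrased.
\end{itemize}
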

\section{Auxiliary Results}

\begin{lemma}[Density Transportation]\label{lem:transport}
    Let $X\in \Omega\subset \R^a$ be a random variable with density $q\colon \Omega\to \R_+$. Let $T\colon \Omega\to \R^b$ be a locally Lipschitz continuous map with $a\ge b$, and define the random variable $Y=T(X)$. If $\|J_bT(x)\|>0$ with probability $1$ on $\{q>0\}$, then $Y$ has density function $p\colon \R^b\to \R_+$ given by 
    \[
        p(y)=\int_{T^{-1}(y)}\frac{q(x)}{\|J_bT(x)\|}d\H^{a-b}(x),
    \]
where $J_bT(x)$ is the $b$-dimensional Jacobian whose determinant is given by $\|J_bT(x)\|=\sqrt{\det(DT(x)DT(x)^T}$, and $d\H^{a-b}$ is the $a-b$-dimensional Hausdorff measure. 
\end{lemma}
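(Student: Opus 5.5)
This is the coarea/area formula; the plan is to derive it from the generalized change-of-variables formula of geometric measure theory. The argument runs as follows.

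\emph{Reduction to a test-function identity.} It suffices to show that for every bounded nonnegative Borel function $g$ on $\R^b$,
\[
    \mathbb E[g(Y)]=\int_{\R^b} g(y)\,p(y)\,dy .
\]
By definition of the law of $Y=T(X)$, the left-hand side equals $\int_\Omega g(T(x))\,q(x)\,dx$. The task is therefore to rewrite this $a$-dimensional integral as an iterated integral over the fibers $T^{-1}(y)$.

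\emph{Applying the coarea formula.} Since $T$ is locally Lipschitz, Rademacher's theorem gives differentiability almost everywhere. Federer's coarea formula for locally Lipschitz maps $T\colon \R^a\supset\Omega\to\R^b$ with $a\ge b$ states that, for every nonnegative measurable $h$,
\[
    \int_\Omega h(x)\,\|J_bT(x)\|\,dx=\int_{\R^b}\int_{T^{-1}(y)} h(x)\,d\H^{a-b}(x)\,dy .
\]
To handle a possibly unbounded $\Omega$ I would first apply it on an exhausting sequence of compacts and then pass to the limit by monotone convergence.

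I then choose
\[
    h(x)=g(T(x))\,\frac{q(x)}{\|J_bT(x)\|}\,\mathbf 1_{\{\|J_bT(x)\|>0\}} .
\]
On the set $\{\|J_bT\|>0\}$, the factor $\|J_bT\|$ cancels. By hypothesis, the complement of this set within $\{q>0\}$ has probability zero, so its contribution to $\int g(T)q$ vanishes. Hence the left-hand side of the coarea formula equals $\mathbb E[g(Y)]$.

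On the right-hand side, $g(T(x))=g(y)$ on the fiber $T^{-1}(y)$, so it pulls out of the inner integral. This leaves $\int g(y)\,p(y)\,dy$, where $p$ is exactly the stated fiber integral. The fibers with $\|J_bT\|=0$ are excluded by the indicator; this matches the formula as written, up to a null set.

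\emph{Measurability of $p$.} The coarea theorem itself guarantees that $y\mapsto \int_{T^{-1}(y)} h\,d\H^{a-b}$ is Lebesgue measurable. Taking $g\equiv 1$ shows that $p$ integrates to $1$, so $p$ is a genuine density.

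\emph{Main obstacle.} The delicate part is the set $\{q>0,\ \|J_bT\|=0\}$, together with the points where $T$ fails to be differentiable. Both must carry zero mass for $X$, which follows from the hypothesis and Rademacher's theorem. One also needs to invoke the coarea formula on a general measurable domain $\Omega$, possibly a lower-dimensional parametrization such as $\S_+^d\cong$ an open subset of $\R^{d(d+1)/2}$. I would handle this by extending $q$ by zero outside $\Omega$, and noting that only $x$ with $q(x)>0$ contribute.
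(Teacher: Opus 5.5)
Your proposal is correct and follows the same route as the paper: apply the coarea formula to $g(T(x))\,q(x)/\|J_bT(x)\|$, use $T(x)=y$ on each fiber to pull $g(y)$ out of the inner integral, and conclude because the test function is arbitrary. The paper uses $C_c^\infty$ test functions and divides by $\|J_bT\|$ directly; your indicator on $\{\|J_bT\|>0\}$ and your remarks on measurability and the domain make explicit details it leaves implicit, rather than changing the argument.
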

\begin{proof}
    For any test function $\phi\in C_c^\infty(\R^b)\subset L^1(\R^b)$, it holds that, by the Coarea formula \citep{federer_curvature_1959, negro_sample_2024},
    \tight{\begin{align*}
        \Exp{\phi(Y)}
        &=\int_\Omega \phi(T(x))q(x)dx\\
        &=\int_\Omega \frac{\phi(T(x))q(x)}{\|J_bT(x)\|}\|J_bT(x)\|dx\\
        &=\int_{\R^b}\int_{T^{-1}(y)}\frac{\phi(T(x))q(x)}{\|J_bT(x)\|}d\H^{a-b}(x)dy.
    \end{align*}}
    {\[
        \Exp{\phi(Y)}=\int_\Omega \phi(T(x))q(x)dx=\int_\Omega \frac{\phi(T(x))q(x)}{\|J_bT(x)\|}\|J_bT(x)\|dx=\int_{\R^b}\int_{T^{-1}(y)}\frac{\phi(T(x))q(x)}{\|J_bT(x)\|}d\H^{a-b}(x)dy.
    \]}
    We note that, on the fiber $T^{-1}(y)$, $T(x)=y$, such that
    \[
        \Exp{\phi(Y)}=\int_{\R^b}\phi(y)\int_{T^{-1}(y)}\frac{q(x)}{\|J_bT(x)\|}d\H^{a-b}(x)dy.
    \]
    As this holds true for all test functions, the statement holds true.
\end{proof}

\begin{lemma}[Rank-1 Update Formula]\label{lem:rank1update}
    Let $A\in \S^d$ have full-rank and let $E\in \S^d$ have rank at most $1$. Then it holds that
    \[
        \frac{\det(A+E)}{\det(A)}=1+\lambda,
    \]
    where $\lambda$ is the unique \red{possible} non-zero eigenvalue of $A^{-1}E$, \red{with $\lambda=0$ if it has none}.
\end{lemma}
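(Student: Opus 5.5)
Since $E\in\S^d$ has rank at most $1$, we write it as $E=s\,uu^T$ for some unit vector $u\in\R^d$ and scalar $s\in\R$. The trivial case $E=0$ corresponds to $s=0$, where both sides equal $1$ and $A^{-1}E=0$ has no nonzero eigenvalue. Because $A$ is invertible, we factor
\[
    \det(A+E)=\det(A)\det(I+A^{-1}E),
\]
so it suffices to show that $\det(I+A^{-1}E)=1+\lambda$.

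Next we identify the spectrum of $M:=A^{-1}E=s\,(A^{-1}u)u^T$. This matrix has rank at most one, so its kernel contains the $(d-1)$-dimensional subspace $u^\perp$, and at least $d-1$ eigenvalues equal zero. Its trace is
\[
    \operatorname{tr}(M)=s\,u^TA^{-1}u,
\]
so the remaining eigenvalue is $\lambda=s\,u^TA^{-1}u$.

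Directly, $M(A^{-1}u)=s(u^TA^{-1}u)A^{-1}u=\lambda A^{-1}u$. So $\lambda$ is an eigenvalue with eigenvector $A^{-1}u\neq 0$. If $\lambda\neq 0$, it is the unique nonzero eigenvalue. If $\lambda=0$, then $M$ has no nonzero eigenvalue, since all eigenvalues sum to the trace and the other $d-1$ are zero.

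Finally, the eigenvalues of $I+M$ are $1$ with multiplicity $d-1$ together with $1+\lambda$, counted with algebraic multiplicity over $\mathbb C$. Hence $\det(I+M)=1+\lambda$. This is the matrix determinant lemma $\det(I+s\,xu^T)=1+s\,u^Tx$ with $x=A^{-1}u$, which can also be checked by completing $u$ to a basis and computing the block-triangular form.

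The only subtle point is the multiplicity bookkeeping. $M$ need not be diagonalizable (for instance when $\lambda=0$ but $M\neq0$, $M$ is nilpotent). For this reason the argument goes through the characteristic polynomial, or the trace, rather than an eigenbasis. The characteristic polynomial of a rank-one matrix is $t^{d-1}(t-\operatorname{tr}M)$, which settles it.
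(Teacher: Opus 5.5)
Your proof is correct and follows the same route as the paper: factor $\det(A+E)=\det(A)\det(I+A^{-1}E)$, expand $\det(I+A^{-1}E)$ as $\prod_i\bigl(1+\lambda_i(A^{-1}E)\bigr)$, and use that the rank-one matrix $A^{-1}E$ has at least $d-1$ zero eigenvalues. Two points in your version make precise what the paper's one-line product argument leaves implicit: you identify $\lambda=s\,u^TA^{-1}u$ explicitly, and you count eigenvalues with algebraic multiplicity. The latter handles the nilpotent case, which can occur because $A$ need not be definite.
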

\begin{proof}
    \red{The case $E=0$ is trivial. Otherwise}
    denote by $\lambda_{\min}(E)=\lambda_d(E)\le \cdots\le \lambda_1(E)=\lambda_{\max}(E)$. The result follows from 
    \[
        \frac{\det(A+E)}{\det(A)}=\det(I+A^{-1}E)=\prod_{i=1}^d(1+\lambda_i(A^{-1}E))=1+\lambda,
    \]
    since $\rank(E)\le 1$. 
\end{proof}

\begin{lemma}[Low-Rank Update Bound]\label{lem:lowrankupdate}
    Let $A\in \S^d_+$ have full-rank and absolute eigenvalues lower-bounded by $\alpha>0$, and let $E\in \S^d$ have rank at most $r\le d$ and absolute eigenvalues upper-bounded by $L$. Then it holds that
    \[
        \frac{\det(A+E)}{\det(A)}\le \exp(Lr/\alpha).
    \]
\end{lemma}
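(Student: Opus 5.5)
Write $\det(A+E)/\det(A)=\det(I+A^{-1}E)$ and bound it through the eigenvalues of the product $A^{-1}E$, in the same way as the Rank-1 Update Formula (Lemma \ref{lem:rank1update}).

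Since $A\in\S^d_+$ has full rank, $A^{1/2}$ exists and is invertible. The matrix $A^{-1}E$ is then similar to the symmetric matrix $M:=A^{-1/2}EA^{-1/2}$, so
\[
    \frac{\det(A+E)}{\det(A)}=\det(I+M)=\prod_{i=1}^d(1+\lambda_i(M)).
\]
We have $\rank(M)=\rank(E)\le r$, so at most $r$ eigenvalues of $M$ are nonzero, and the remaining factors equal $1$. For the nonzero ones,
\[
    |\lambda_i(M)|\le \|M\|_{\op}\le \|A^{-1/2}\|_{\op}^2\|E\|_{\op}\le L/\alpha.
\]
This uses that the eigenvalues of $A$ are at least $\alpha$, so $\|A^{-1}\|_{\op}\le 1/\alpha$, and that $\|E\|_{\op}$, the largest absolute eigenvalue of the symmetric matrix $E$, is at most $L$.

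To finish, bound each factor with $1+x\le e^{x}$. The product is only over the at most $r$ nontrivial factors, so
\[
    \det(I+M)\le \prod_{\lambda_i\neq0}e^{\lambda_i}\le e^{rL/\alpha}.
\]
The main subtlety is sign: if some factor $1+\lambda_i$ is negative, the termwise inequality is not directly a valid bound on the product. There are two ways to handle this.
\begin{itemize}
    \item If $\det(A+E)\le 0$, the bound holds trivially, since the right-hand side is positive.
    \item If $\det(A+E)>0$, bound the absolute value instead, using $|1+x|\le 1+|x|\le e^{|x|}$, which again gives at most $e^{rL/\alpha}$.
\end{itemize}
Both cases yield the claim.
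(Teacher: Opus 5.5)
Your proof is correct and takes essentially the same route as the paper: write the ratio as $\det(I+A^{-1}E)$, note that at most $r$ eigenvalues are nonzero, each with modulus at most $L/\alpha$, and conclude via $\prod_i(1+|\lambda_i|)\le(1+L/\alpha)^r\le e^{rL/\alpha}$. The symmetrization $A^{-1/2}EA^{-1/2}$ and the sign discussion make explicit what the paper does implicitly in its step $\prod_{i=1}^d(1+\lambda_i)\le\prod_{i=1}^r(1+|\lambda_i|)$; the case split is not actually needed, since $\det(I+M)\le|\det(I+M)|$ always.
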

\begin{proof}
    Denote by $\lambda_{\min}(E)=\lambda_d(E)\le \cdots\le \lambda_1(E)=\lambda_{\max}(E)$.
    It holds that 
    \[
        \frac{\det(A+E)}{\det(A)}=\det(I+A^{-1}E)=\prod_{i=1}^d(1+\lambda_i(A^{-1}E))\le \prod_{i=1}^r(1+|\lambda_i(A^{-1}E)|),
    \]
    where the final inequality follows by $\rank(A^{-1}E)\le \rank(E)\le r$. By bounding $|\lambda_i(A^{-1}E)|\le {L}/{\alpha}$,
    \[
        \frac{\det(A+E)}{\det(A)}\le \left(1+\frac{L}{\alpha}\right)^r\le \exp(Lr/\alpha),
    \]
    where the latter follows by exponential approximation.
\end{proof}

\begin{lemma}\label{lem:matrixU}
    Given a vector $u\in \R^d$ and a vector $b\in \R^d\backslash\{0\}$, there exists a matrix $U\in \S^d$ such that $\rank(U)\le 2$, $Ub=u$ and $\|U\|_{\op}= {\|u\|}/{\|b\|}$.
\end{lemma}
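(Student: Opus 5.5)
We need to construct $U\in\S^d$ with $\rank(U)\le 2$, $Ub=u$ and $\|U\|_{\op}=\|u\|/\|b\|$. The approach is an explicit construction, a rank-$\le 2$ symmetric matrix acting as a scaled reflection/rotation in the plane spanned by $b$ and $u$. First normalise: set $\hat b=b/\|b\|$ and $s=\|u\|/\|b\|$. If $u=0$, take $U=0$, and all three properties hold trivially. Otherwise set $\hat u=u/\|u\|$, so the goal becomes a symmetric $V$ of rank at most $2$ with $V\hat b=\hat u$ and $\|V\|_{\op}=1$. Then $U=sV$ works, since $Ub=s\|b\|V\hat b=\|u\|\hat u=u$.

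The natural candidate is the Householder-type reflection that swaps $\hat b$ and $\hat u$. Distinguish two cases.

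\textbf{Case $\hat u=\hat b$.} Take $V=\hat b\hat b^T$. It is symmetric, has rank $1$ and operator norm $1$, and satisfies $V\hat b=\hat b$.

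\textbf{Case $\hat u\neq\hat b$.} Take
\[
V=\hat u\hat b^T+\hat b\hat u^T-\frac{(\hat u+\hat b)(\hat u+\hat b)^T\,\hat u^T\hat b}{1+\hat u^T\hat b}
\]
when $\hat u\ne-\hat b$. More cleanly, let $w=(\hat b-\hat u)/\|\hat b-\hat u\|$ and restrict the reflection $I-2ww^T$ to $\operatorname{span}\{\hat b,\hat u\}$. That is, set $V=P-2ww^T$, where $P$ is the orthogonal projector onto $\operatorname{span}\{\hat b,\hat u\}$; this includes $\hat u=-\hat b$, where the span is a line.

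This $V$ has the required properties:
\begin{itemize}
\item It is symmetric.
\item Its range lies in a subspace of dimension at most $2$, so $\rank(V)\le 2$.
\item On that span it acts as an orthogonal reflection, and it vanishes on the orthogonal complement, so its eigenvalues lie in $\{1,-1,0\}$, with $1$ or $-1$ actually attained. Hence $\|V\|_{\op}=1$.
\item Since $\hat b\in\operatorname{span}$, we get $V\hat b=\hat b-2w(w^T\hat b)$. Because $\hat b$ and $\hat u$ are unit vectors, $w^T\hat b=\|\hat b-\hat u\|/2$, which gives $V\hat b=\hat b-(\hat b-\hat u)=\hat u$.
\end{itemize}

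There is no real obstacle here. The only care needed is in the degenerate cases $u=0$ and $\hat u=\hat b$, where the reflection direction $w$ is undefined, and in checking $w^T\hat b=\|\hat b-\hat u\|/2$. That identity follows from $\|\hat b-\hat u\|^2=2-2\hat u^T\hat b=2(1-\hat u^T\hat b)$ together with $(\hat b-\hat u)^T\hat b=1-\hat u^T\hat b$.

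Note that equality $\|U\|_{\op}=\|u\|/\|b\|$ is forced as a lower bound anyway, since $\|U\|_{\op}\ge\|Ub\|/\|b\|$. The construction achieves it with equality.
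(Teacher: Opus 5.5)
Your construction is the paper's matrix. With $e=b/\|b\|$, $\alpha=e^Tu$ and $w=u-\alpha e$, the paper's $U=\|b\|^{-1}\bigl(\alpha(P_e-P_w)+ew^T+we^T\bigr)$ is $\tfrac{\|u\|}{\|b\|}$ times the reflection of $\operatorname{span}\{b,u\}$ sending $e$ to $u/\|u\|$, extended by zero; the paper writes it in the frame $(e,w/\|w\|)$ and checks the norm via $U^2=\tfrac{\|u\|^2}{\|b\|^2}(P_e+P_w)$, where the convention $P_0=0$ covers your degenerate cases without a case split. Your verification of your reflection $P-2ww^T$ is correct, but delete the first displayed formula for $V$: it is off by the factor $1-\hat u^T\hat b$ (it gives $V\hat b=(1-\hat u^T\hat b)\hat u$), though your argument never uses it.
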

\begin{proof}
    Define $e=\frac{b}{\|b\|}$, let $\alpha=e^Tu$, and let $w=u-\alpha e$, such that $u=\alpha e+w$ and $e^Tw=0$. Define 
    \[
        P_v=\begin{cases}
            \frac{vv^T}{\|v\|^2} \quad &\text{if $v\neq 0$}, \\
            0 & \text{else},
        \end{cases}
    \]
    and 
    \[
        U=\frac{1}{\|b\|}\left(\alpha(P_e-P_w)+ew^T+we^T\right).    \]
    It is easy to see that $\rank(U)\le 2$ since its column space is contained in $\spa(e, w)$. It is then easy to check using $w^Te=0$ that
    \[
        Ub=\left(\alpha(P_e-P_w)+ew^T+we^T\right)e=\alpha e+w=u,
    \]
    as wanted. It remains to bound the norm of $U$. For this, denote $S=ew^T+we^T$ and note that 
    \tight{\[
        S(P_e-P_w)+(P_e-P_w)S=0, \quad P_e^2=P_e, \quad P_w^2=P_w, \quad P_eP_w=0, 
    \]
    and 
    \[
        S^2=\|w\|^2(P_e+P_w)=\|w\|^2(P_e-P_w)^2.
    \]}
    {\[
        S(P_e-P_w)+(P_e-P_w)S=0, \quad P_e^2=P_e, \quad P_w^2=P_w, \quad P_eP_w=0, \quad \text{and}\quad S^2=\|w\|^2(P_e+P_w)=\|w\|^2(P_e-P_w)^2.
    \]}
    As such, 
    \[
        U^2=\frac{\alpha^2(P_e+P_w)+\|w\|^2(P_e+P_w)}{\|b\|^2}=\frac{\|u\|^2}{\|b\|^2}(P_e+P_w),
    \]
    which implies that, since $U$ is symmetric,
    \[
        \|U\|_{\op}=\sqrt{\|U^2\|_{\op}}=\frac{\|u\|}{\|b\|}\sqrt{\|P_e+P_w\|_{\op}}.
    \]
    Using the definition of eigenvalues, one deduces that $\|P_e+P_w\|_{op}=\lambda_{\max}(P_e+P_w)=1$, and hence all properties are fulfilled.
\end{proof}

\begin{lemma}[{\thmcite[Claim 23]{kifer_private_2012}}]\label{S12::lem:weird_convex}
    Let $g\colon \R^d\to \R$ be a convex function, let $x, y\in \R^d$ and let $\lambda\ge 1$. Then it holds that 
    \[
        \frac{g(x+\lambda y)-g(x)}{\lambda}\ge g(x+y)-g(x).
    \]
\end{lemma}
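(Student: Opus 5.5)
The statement is Kifer's Claim 23: for convex $g$, $\lambda\ge 1$, we have $\frac{g(x+\lambda y)-g(x)}{\lambda}\ge g(x+y)-g(x)$. The plan is to reduce it to a one-dimensional convexity fact, namely that difference quotients of a convex function are monotone. First I define the scalar function $h(t)=g(x+ty)$ for $t\in\R$. As the restriction of a convex function to a line, $h$ is convex on $\R$. In terms of $h$, the claim reads
\[
    \frac{h(\lambda)-h(0)}{\lambda}\ge \frac{h(1)-h(0)}{1},
\]
which says that the slope of the secant from $0$ to $t$ is nondecreasing in $t>0$.

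To prove this inequality directly, I would write $1$ as a convex combination of $0$ and $\lambda$: $1=(1-\tfrac1\lambda)\cdot 0+\tfrac1\lambda\cdot\lambda$. This is valid because $\lambda\ge1$ gives $\tfrac1\lambda\in(0,1]$. Convexity of $h$ then gives
\[
    h(1)\le \left(1-\tfrac1\lambda\right)h(0)+\tfrac1\lambda h(\lambda).
\]
Subtracting $h(0)$ from both sides yields $h(1)-h(0)\le \tfrac{1}{\lambda}\bigl(h(\lambda)-h(0)\bigr)$. Substituting back $h(t)=g(x+ty)$ gives exactly the stated inequality.

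There is no real obstacle. The only points to check are that $\lambda\ge1$ makes the convex weights legitimate, and that the case $\lambda=1$ holds trivially with equality. Finiteness is automatic because $g$ is real-valued on all of $\R^d$, so no extended-value issues arise.
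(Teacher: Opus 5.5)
Your proof is correct and is essentially the paper's argument. The paper writes $x+y=\frac{1}{\lambda}(x+\lambda y)+\left(1-\frac{1}{\lambda}\right)x$ and applies convexity of $g$ directly, which is your convex combination $1=(1-\tfrac1\lambda)\cdot 0+\tfrac1\lambda\cdot\lambda$ for $h(t)=g(x+ty)$ followed by the same rearrangement, so the restriction to a line only changes the notation.
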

\begin{proof}
    This follows immediately by the definition of convexity as 
    \[
        g(x+y)=g\left(\frac{1}{\lambda}(x+\lambda y)+\left(1-\frac{1}{\lambda}\right)x\right)\le \frac{1}{\lambda}g(x+\lambda y)+\left(1-\frac{1}{\lambda}\right)g(x),
    \]
    which rearranges to the wanted inequality.
\end{proof}

\section{Wishart Ensemble - Proof of Lemma \ref{lem:satisfies}}

We denote by $\Wishart(d, m)$ the distribution of a $d\times d$ Wishart matrix with hidden dimension $m$. By this we mean that if $G\in \R^{d\times m}$ has independent entries $G_{i, j}\sim \mathcal N(0, 1)$, then $W=GG^T\sim \Wishart(d, m)$. We define $q_{d, m}\colon \S^d_+\to \R_+$ to be the probability density function of $\Wishart(d, m)$, which we will interchangeably denote by $q_{d, m}\colon \R^{d(d+1)/2}\to \R_+$ when $\S^d_+$ has been canonically identified with $\R^{d(d+1)/2}$. It is known that \citep{gupta_matrix_2000}
\begin{equation}\label{eq:wishart_denisty}
    q(W)=\frac{1}{2^{md/2}\Gamma_d(m/2)}\det(W)^{\frac{m-d-1}{2}}\exp(-\tr(W)/2),
\end{equation}
where $\Gamma_d$ is the multivariate Gamma function given by 
\[
    \Gamma_d(x)=\pi^{d(d-1)/4}\prod_{i=1}^d\Gamma(x+(1-i)/2).
\]
Before showing that the Wishart distribution satisfies Assumption \ref{ass:distribution}, we state one auxiliary lemma. 

\begin{lemma}[Tail Bound on Smallest Eigenvalue]\label{lem:tailbound_smallest_eig}
    Let $W\sim \Wishart(d, m)$ be a $d\times d$ Wishart matrix with hidden dimension $m>d$, and let $\lambda_{\min}(W)$ denote the (random) smallest eigenvalue of $W$. Then the marginal density $R(\mu)$ of $\lambda_{\min}(W)$ satisfies
    \[
        R(\mu)\le D\cdot 2^{-1}\cdot (\mu/2)^{\frac{m-d-1}{2}}\cdot e^{-\mu/2}\quad \text{for $D=\frac{d\cdot \Gamma\left(\frac{3}{2}\right)\Gamma\left(\frac{m+1}{2}\right)}{\Gamma\left(\frac d2+1\right)\Gamma\left(\frac{m-d+1}{2}\right)\Gamma\left(\frac{m-d+2}{2}\right)}$},
    \]
    where $\Gamma$ is the (complete) Gamma function. Specifically, we have that, for $s, s_1, s_2\ge 0$ satisfying $s_1\le s_2$,
    \[
        \P(\lambda_{\min}(W)\le s)\le D\cdot \gamma\left(\frac{s}{2}, \frac{m-d+1}{2}\right),
    \]
    and
    \[
        \P(s_1\le \lambda_{\min}(W)\le s_2)\le D\cdot \left[\gamma\left(\frac{s_2}{2}, \frac{m-d+1}{2}\right)-\gamma\left(\frac{s_1}{2}, \frac{m-d+1}{2}\right)\right],
    \]
    where $\gamma$ is the lower incomplete Gamma function.
\end{lemma}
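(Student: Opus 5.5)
The plan is to start from the classical joint density of the unordered eigenvalues of $W\sim\Wishart(d,m)$ with $m>d$:
\[
    p(\lambda_1,\dots,\lambda_d)=c_{d,m}\prod_{i=1}^d\lambda_i^{k}e^{-\lambda_i/2}\prod_{i<j}|\lambda_i-\lambda_j|,\qquad k=\tfrac{m-d-1}{2},
\]
on $\R_+^d$, where $c_{d,m}$ is an explicit normalizing constant expressible through Gamma functions (see \cite{gupta_matrix_2000}). By symmetry, the density of $\lambda_{\min}(W)$ at $\mu$ is $d$ times the integral of $p(\mu,\lambda_2,\dots,\lambda_d)$ over the region $\{\lambda_j\ge\mu \text{ for all } j\ge 2\}$. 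Thus
\[
    R(\mu)=d\,c_{d,m}\,\mu^{k}e^{-\mu/2}\int_{\lambda_j\ge\mu}\prod_{j\ge2}\lambda_j^{k}e^{-\lambda_j/2}(\lambda_j-\mu)\prod_{2\le i<j}|\lambda_i-\lambda_j|\,d\lambda .
\]

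Next I would bound the integral by a quantity independent of $\mu$. For $\lambda_j\ge\mu\ge0$ we have $0\le\lambda_j-\mu\le\lambda_j$, and since the integrand is nonnegative we may enlarge the domain to all of $\R_+^{d-1}$. This gives the upper bound
\[
    \int_{\R_+^{d-1}}\prod_{j\ge2}\lambda_j^{k+1}e^{-\lambda_j/2}\prod_{2\le i<j}|\lambda_i-\lambda_j|\,d\lambda .
\]
The exponent satisfies $k+1=\frac{(m+1)-(d-1)-1}{2}$, so this is exactly the unnormalized eigenvalue density of a $\Wishart(d-1,m+1)$ matrix, and the integral equals $1/c_{d-1,m+1}$. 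Therefore
\[
    R(\mu)\le d\,\frac{c_{d,m}}{c_{d-1,m+1}}\,\mu^{k}e^{-\mu/2}.
\]
I expect the main obstacle to be the bookkeeping of this constant ratio: tracking the powers of $2$ and $\pi$ and the products of Gamma functions, then simplifying them to $D\cdot 2^{-1}\cdot 2^{-k}$, so that the bound takes the stated form $D\cdot2^{-1}(\mu/2)^{k}e^{-\mu/2}$. I would use the explicit constant
\[
    c_{d,m}=\frac{\pi^{d^2/2}}{2^{md/2}\Gamma_d(m/2)\Gamma_d(d/2)}.
\]
Most factors of $\Gamma_d$ and $\Gamma_{d-1}$ telescope, leaving only the Gamma values appearing in $D$. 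The factor $\Gamma(3/2)=\sqrt\pi/2$ should absorb the leftover power of $\pi$.

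Finally, the probability bounds follow by integrating the density bound. Set $a=\frac{m-d+1}{2}=k+1>0$ and substitute $t=\mu/2$:
\[
    \int_{s_1}^{s_2}\tfrac12(\mu/2)^{a-1}e^{-\mu/2}\,d\mu=\int_{s_1/2}^{s_2/2}t^{a-1}e^{-t}\,dt=\gamma\!\left(\tfrac{s_2}{2},a\right)-\gamma\!\left(\tfrac{s_1}{2},a\right).
\]
Taking $s_1=0$ gives the first tail bound, since $\gamma(0,a)=0$.
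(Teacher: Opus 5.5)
Your strategy is the standard one. The paper does not prove the density bound itself: it imports it from Deift--Trogdon's Lemma 4.7, and your final integration step is exactly its ``follows trivially'' part. However, your normalization is inconsistent, and the bookkeeping you defer would give $d\cdot D$, not $D$.

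The constant $c_{d,m}=\pi^{d^2/2}/(2^{md/2}\Gamma_d(m/2)\Gamma_d(d/2))$ normalizes the density of the \emph{ordered} eigenvalues on the chamber $\lambda_1>\cdots>\lambda_d>0$. Two consequences follow:
\begin{itemize}
\item Extended symmetrically to $\R_+^d$ with $|\lambda_i-\lambda_j|$, your $p$ has total mass $d!$, so the density of the unordered eigenvalues is $p/d!$.
\item For the same reason, $\int_{\R_+^{d-1}}\prod_j\lambda_j^{k+1}e^{-\lambda_j/2}\prod_{i<j}|\lambda_i-\lambda_j|\,d\lambda=(d-1)!/c_{d-1,m+1}$, not $1/c_{d-1,m+1}$.
\end{itemize}
These two slips cancel only up to $d!/(d-1)!=d$. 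Your displayed inequality $R(\mu)\le d\,(c_{d,m}/c_{d-1,m+1})\,\mu^ke^{-\mu/2}$ is therefore exactly $d$ times the lemma's bound, and the claimed simplification to $D\cdot 2^{-1}\cdot 2^{-k}$ is false. A quick check: for $d=2$, $m=3$ one has $R(\mu)=e^{-\mu}$. The lemma gives $e^{-\mu/2}$, which is tight at $\mu=0$, while your expression gives $2e^{-\mu/2}$.

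The repair is to work with the ordered eigenvalues, so that $\lambda_{\min}=\lambda_d$ and no symmetry factor appears:
\[
    R(\mu)=c_{d,m}\,\mu^ke^{-\mu/2}\int_{\lambda_1>\cdots>\lambda_{d-1}>\mu}\prod_{j<d}\lambda_j^{k}e^{-\lambda_j/2}(\lambda_j-\mu)\prod_{i<j<d}(\lambda_i-\lambda_j)\,d\lambda\le\frac{c_{d,m}}{c_{d-1,m+1}}\,\mu^ke^{-\mu/2}.
\]
This holds because, after using $\lambda_j-\mu\le\lambda_j$ and enlarging the domain to $\lambda_1>\cdots>\lambda_{d-1}>0$, the integral is exactly $1/c_{d-1,m+1}$. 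Equivalently, you can keep your symmetric formulation but use the constants $c_{d,m}/d!$ and $c_{d-1,m+1}/(d-1)!$.

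The telescoping then goes through. The powers of $\pi$ net to $\pi^{(2d-1)/2-(d-1)}=\sqrt\pi$, the powers of $2$ net to $2^{(d-m-1)/2}=2^{-(k+1)}$, and
\[
    \frac{c_{d,m}}{c_{d-1,m+1}}=\frac{\sqrt\pi\,\Gamma\left(\frac{m+1}{2}\right)}{\Gamma\left(\frac d2\right)\Gamma\left(\frac{m-d+1}{2}\right)\Gamma\left(\frac{m-d+2}{2}\right)}\,2^{-(k+1)}=D\cdot 2^{-1}\cdot 2^{-k},
\]
using $d\,\Gamma(3/2)/\Gamma(d/2+1)=\sqrt\pi/\Gamma(d/2)$. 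This identity already uses up the factor $d$ in $D$, which is why there is no room for your extra one. With this correction, your argument is a complete, self-contained proof of the density bound that the paper takes from the literature.
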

\begin{proof}
    The bound on the marginal density follows from \cite[Lemma 4.7]{deift_conjugate_2021}. The rest follows trivially.
\end{proof}

We are now ready to show that the Wishart distribution satisfies Assumption \ref{ass:distribution}, which extends Lemma \ref{lem:satisfies} by additionally providing the exact constants. 

\begin{lemma}[Wishart Distribution Satisfies Random Matrix Assumption]\label{lem:WishartSatisfiesAss}
    Let $W\sim \Wishart(d, m)$ be a $d\times d$ Wishart matrix with hidden dimension $m>\red{d+1}$. Then the Random Matrix Assumption \ref{ass:distribution} is satisfied with
    \[
        \alpha\le 2 \gamma^{-1}\left(\frac{\delta_3}{D}, p\right), \quad \alpha_1\le 2\gamma^{-1}\left(\frac{\delta_1\cdot (1-\delta_3)}{D}+\gamma\left(\alpha/2, p\right), p\right)-\alpha,
    \]
    \[
        f(\rho)=\left(\frac{p-1}{\alpha}+\frac12\right)\cdot \rho,\quad \mu=m,\quad \red{\nu=\frac{1}{m-d-1},}
    \]
    where $\gamma^{-1}$ is the inverse lower incomplete Gamma function (in its first entry) and $p=\frac{m-d+1}{2}$.
\end{lemma}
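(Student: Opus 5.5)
We verify the five requirements of Assumption \ref{ass:distribution} one at a time. Invertibility holds because $m>d$: then $G$ has full row rank almost surely, so $W=GG^T$ is almost surely positive definite.

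\emph{Spectral events.} These come from Lemma \ref{lem:tailbound_smallest_eig} with $p=\frac{m-d+1}{2}$. If $\alpha\le 2\gamma^{-1}(\delta_3/D,p)$, then $\gamma(\alpha/2,p)\le \delta_3/D$, because $\gamma(\cdot,p)$ is increasing. Hence $\P(\lambda_{\min}(W)\le\alpha)\le\delta_3$, which gives $\P(\lambda_{\min}(W)\ge\alpha)\ge 1-\delta_3$. For the conditional ratio, the second bound of Lemma \ref{lem:tailbound_smallest_eig} gives $\P(\alpha\le\lambda_{\min}\le\alpha+\alpha_1)\le D[\gamma((\alpha+\alpha_1)/2,p)-\gamma(\alpha/2,p)]$. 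The stated choice of $\alpha_1$ makes this at most $\delta_1(1-\delta_3)$. Dividing by $\P(\lambda_{\min}\ge\alpha)\ge 1-\delta_3$ gives the bound $\delta_1$. This requires the argument of $\gamma^{-1}$ to lie in the range of $\gamma(\cdot,p)$, which I will state as an implicit feasibility condition.

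\emph{Moments.} The identity $\Exp{W}=\sum_{j}\Exp{g_jg_j^T}=mI$, where $g_j$ are the columns of $G$, gives $\mu=m$. The inverse-Wishart mean $\Exp{W^{-1}}=\frac{1}{m-d-1}I$ is standard for $m>d+1$ (see \cite{gupta_matrix_2000}) and gives $\nu$.

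\emph{Density ratio.} This is the main step. By \eqref{eq:wishart_denisty},
\[
\frac{q(W+U)}{q(W)}=\left(\frac{\det(W+U)}{\det W}\right)^{p-1}\exp(-\tr(U)/2),
\]
since $\frac{m-d-1}{2}=p-1$. Let $r=\rank(U)$. On the event $\lambda_{\min}(W)\ge\alpha$, Lemma \ref{lem:lowrankupdate}, applied with $A=W$, $E=U$ and $L=\|U\|_{\op}$, gives $\det(W+U)/\det W\le \exp(\|U\|_{\op}r/\alpha)$. Raising this to the power $p-1\ge 0$ uses $m\ge d+1$. For the trace, $U$ has at most $r$ nonzero eigenvalues, each bounded by $\|U\|_{\op}$ in absolute value, so $-\tr(U)/2\le r\|U\|_{\op}/2$. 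Multiplying the two bounds yields the ratio bound with $f(r)=\left(\frac{p-1}{\alpha}+\frac12\right)r$, which is monotone in $r$.

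The only delicate points are the sign and exponent bookkeeping in this last step, together with checking the hypotheses of Lemma \ref{lem:lowrankupdate}. Here $A=W$ is full rank with eigenvalues at least $\alpha$ on the conditioning event, and $W+U\in\S_+^d$ is assumed, so the determinant ratio is nonnegative.
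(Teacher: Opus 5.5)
Your proposal is correct and follows the paper's proof: $\mu=m$ from $\Exp{W}=mI$, $\nu$ from the inverse-Wishart mean, $\alpha$ and $\alpha_1$ from Lemma \ref{lem:tailbound_smallest_eig}, and $f$ from the density ratio via the Low-Rank Update Bound \ref{lem:lowrankupdate} together with $-\tr(U)\le \|U\|_{\op}\rank(U)$. You also make explicit several points the paper leaves implicit: almost-sure invertibility, the monotonicity of $\gamma(\cdot,p)$ behind the stated choices of $\alpha$ and $\alpha_1$ (including the division by $1-\delta_3$), and the feasibility condition on the argument of $\gamma^{-1}$.
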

\begin{proof}
    The choice of $\mu=m$ is valid as $\Exp{W}=m I$\red{, and the choice of $\nu$ is valid by \cite{vonrosen_moments_1997}}. The existence and validity of choices of $\alpha$ and $\alpha_1$ follow by Lemma \ref{lem:tailbound_smallest_eig}. 
    We focus the rest of the proof on the validity of $f$.

    Since $W\sim \Wishart(d, m)$, we recall Equation \eqref{eq:wishart_denisty} stating that
    \[
        q(W)=\frac{1}{2^{md/2}\Gamma_d(m/2)}\det(W)^{\frac{m-d-1}{2}}\exp(-\tr(W)/2),
    \]
    such that, provided $W+U\in \S_+^d$,
    \[
        \frac{q(W+U)}{q(W)}=\left(\frac{\det(W+U)}{\det(W)}\right)^{\frac{m-d-1}2}\cdot \exp(\tr(-U)/2).
    \]
    The matrix $U$ is symmetric with rank $\rank(U)$ and eigenvalues bounded by $\|U\|_{\op}$, and hence, by the Low-Rank Update Bound \ref{lem:lowrankupdate},
    \[
        \frac{\det(W+U)}{\det(W)}\le \exp\left(\frac{\|U\|_{\op}}{\alpha}\cdot \rank(U)\right),
    \]
    where the last inequality follows by the lower bound on $\lambda_{\min}(W)$. Moreover, $\tr(-U)\le \|U\|_{\op}\cdot \rank(U)$, and thus, 
    \[
        \frac{q(W+U)}{q(W)}\le \exp\left(\|U\|_{\op}\left(\frac{m-d-1}{2\alpha}+\frac12\right)\cdot \rank(U)\right).
    \]
    As such, the assumption is verified with the given $f$.
\end{proof}
\section{Privacy Analysis - Proof of Theorem \ref{thm:main}}\label{sec::proof_dp}

We assume that, additionally to releasing a noisy version of the approximate minimizer $\theta_{\final}=\theta_\approx+b$ in Mechanism \ref{alg:QOP}, we also release \red{a perturbed version of the gradient of the at-center-loss and} the exact private minimizer
\[
    \red{Y=\sum_{i=1}^n\nabla \ell(\theta_c; z_i)+a\quad\text{and}\quad} \theta_\priv=\argmin_{\theta\in C} \{\Jp(\theta; \D)\}.
\]
Note that releasing additional information can only worsen the privacy guarantees, and hence the privacy accounting for the mechanism with additional release must also be valid for the mechanism itself. As such, we study the mechanism releasing \red{$Y$, }$\theta_\final$ and $\theta_\priv$. From a privacy perspective, this is equivalent to releasing \red{$Y$,} $\theta_\priv$ and $\theta_\final-\theta_\priv$.

\red{The motivation behind releasing $Y$ is that, given a realization $Y=y$, it holds that
\[
    \Jp(\theta; \D)=\sum_{i=1}^n\bar\ell(\theta; z_i)+\bar r(\theta)+\frac{\sigma^2}{2}(\theta-\theta_c)^TW(\theta-\theta_c),
\]
where $\bar \ell(\theta; z)=\ell(\theta; z)-\langle \nabla \ell(\theta_c; z), \theta-\theta_c\rangle$ and $\bar r(\theta)=r(\theta)+\langle y, \theta-\theta_c\rangle$. Specifically, we obtain that $\theta_c\in \argmin \bar \ell(\cdot; z)$ for all $z\in \X$, and $\bar\ell$ still satisfies all the same assumptions as $\ell$ in Problem Assumption \ref{ass:problem}. Moreover, $\bar r$ remains convex, and therefore also satisfies the same assumptions as $r$ in Problem Assumption \ref{ass:problem}. The key addition is that all $\bar \ell(\cdot, z)$ share the common minimizer $\theta_c$ after this transformation.
}

Given $\alpha>0$ from the Random Matrix Assumption \ref{ass:distribution}, we define the events
\[
    E_\alpha=\{W\in \S_+^d\colon \lambda_{\min}(W)\ge \alpha\}.
\]
We will show that \red{the release of $Y$ is $(\varepsilon_0, \delta_0)$-DP, that} the release of $\theta_\priv$, conditioned on $E_\alpha$, is $(\varepsilon_1, \delta_1)$-DP\red{, uniformly in the realization of $Y\in \R^d$}, and that the release of $\theta_\final-\theta_\priv$, conditioned on $E_{\alpha}$, is $(\varepsilon_2, \delta_2)$-DP, uniformly in the realizations of \red{$Y\in \R^d$ and} $W\in E_{\alpha}$. By applying the Composition of Mechanisms Lemma \ref{lem:composition}, this shows the joint release of $(\red{Y, }\theta_\priv, \theta_\final)$, and hence also Mechanism \ref{alg:QOP}, conditioned on $E_\alpha$, is $(\red{\varepsilon_0+}\varepsilon_1+\varepsilon_2, \red{\delta_0+}\delta_1+\delta_2)$-DP. These results are established in Lemmas \red{\ref{lem:part0}, }\ref{lem:part1} and \ref{lem:part2}.

Since $\P(E_\alpha)\ge 1-\delta_3$, this means that the unconditional version of Mechanism \ref{alg:QOP} will be $(\red{\varepsilon_0+}\varepsilon_1+\varepsilon_2, \red{\delta_0+}\delta_1+\delta_2+\delta_3)=(\varepsilon, \delta)$-DP, as wanted.

\subsection{Release of Exact Minimizer}

We initially prove the privacy of releasing $\theta_\priv$ under the following simplification assumption. We will show this assumption to be unnecessary in Lemma \ref{lem:part1}.

\begin{assumption}[Simplification Assumption]\label{ass:simplification}
    We make the following two simplification assumptions:
    \begin{enumerate}
        \item the regularizer $r\colon \R^d\to \R$ is twice \red{continuously} differentiable; and 
        \item the convex set $C=\R^d$ is the full domain.
    \end{enumerate}
\end{assumption}

\red{We note that under Simplification Assumption \ref{ass:simplification}, it also holds that $\bar r$ is twice differentiable.}

\red{
\begin{lemma}[\red{Virtual Release of Auxiliary Quantity}]\label{lem:part0}\label{lem:releaseY}
    \red{
    Assume that the Problem Assumption \ref{ass:problem} holds.
    The release of $Y(\D)=\sum_{i=1}^n\nabla \ell(\theta_c; z_i)+a$ is $(\varepsilon_0, \delta_{0})$-DP for 
    \[
        \sigma_0\ge \frac{\Delta_c}{\varepsilon_0}\sqrt{2\log\frac{1.25}{\delta_0}}.
    \]}
\end{lemma}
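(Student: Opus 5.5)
The plan is to realize the release of $Y$ as an instance of the Gaussian Mechanism (Lemma \ref{lem:gauss}). Take the deterministic query
\[
    g(\D)=\sum_{i=1}^n\nabla \ell(\theta_c; z_i)\in\R^d,
\]
so that $Y(\D)=g(\D)+a$ with $a\sim\mathcal N(0,\sigma_0^2 I)$. This is exactly the form $\mathcal M(\D)=f(\D)+b$ treated in Lemma \ref{lem:gauss}. It remains only to bound the $\ell_2$-sensitivity of $g$ under replacement adjacency (Definition \ref{def:adjacency}). Two points make $g$ a legitimate query. First, $\theta_c$ is publicly known and data-independent by Problem Assumption \ref{ass:problem}. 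Second, $a$ is drawn independently of the data.

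The sensitivity bound goes as follows. Let $\D\sim\D'$. If they coincide, the difference is zero. Otherwise they differ in a single index $j$, with $z_j$ in $\D$ and $z_j'$ in $\D'$. All other summands cancel, leaving
\[
    \|g(\D)-g(\D')\|=\|\nabla\ell(\theta_c;z_j)-\nabla\ell(\theta_c;z_j')\|\le \sup_{z,z'\in\X}\|\nabla\ell(\theta_c;z)-\nabla\ell(\theta_c;z')\|\le\Delta_c.
\]
The last inequality is precisely the bounded gradient-diameter condition in Assumption \ref{ass:problem}. So the sensitivity is at most $\Delta_c$.

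Now apply Lemma \ref{lem:gauss} with sensitivity $\Delta_c$, noise level $\sigma_0$, and privacy parameters $(\varepsilon_0,\delta_0)$, where $\varepsilon_0\in(0,1)$ and $\delta_0>0$, as inherited from the setting of Theorem \ref{thm:main}. The hypothesis $\sigma_0\ge \frac{\Delta_c}{\varepsilon_0}\sqrt{2\log\frac{1.25}{\delta_0}}$ is exactly the lemma's requirement, so $Y$ is $(\varepsilon_0,\delta_0)$-DP.

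The only delicate point is the degenerate case $\Delta_c=0$, for instance under interpolation. Lemma \ref{lem:gauss} is stated with a positive sensitivity in mind, but here the argument is simpler. Then $g$ is constant over all datasets, so the law of $Y$ does not depend on $\D$. The release is therefore $(0,0)$-DP, and a fortiori $(\varepsilon_0,\delta_0)$-DP, for any $\sigma_0\ge 0$. Beyond this, there is no real obstacle: the whole proof rests on the cancellation of the unchanged summands.
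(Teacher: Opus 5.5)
Your proposal is correct and matches the paper's proof. Both treat $Y$ as the Gaussian Mechanism (Lemma \ref{lem:gauss}) applied to $\sum_{i}\nabla\ell(\theta_c;z_i)$, whose replacement sensitivity reduces to a single-summand difference bounded by $\Delta_c$; your separate handling of the degenerate case $\Delta_c=0$ is a harmless extra that the paper leaves implicit.
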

\begin{proof}
    This follows by the Gaussian Mechanism \ref{lem:gauss}, since
    \[
        \sup_{\mathcal D\sim \mathcal D'}\left\|\sum_{i=1}^n\nabla \ell(\theta_c; z_i)-\sum_{i=1}^n\nabla \ell(\theta_c; z'_i)\right\|=\sup_{z, z'\in \X}\|\nabla \ell(\theta_c; z)-\nabla \ell(\theta_c; z')\|\le \Delta_c. \qedhere
    \]
\end{proof}
}

\begin{lemma}[Conditional Release of Exact Minimizer under Simplification Assumption]\label{lem:part1_simple}
    Assume that the Problem Assumption \ref{ass:problem}, the Random Matrix Assumption \ref{ass:distribution} and the Simplification Assumption \ref{ass:simplification} hold.
    The release of $\theta_\priv$, conditioned on the events $E_\alpha$ \red{and $\{Y=y\}$}, is $(\varepsilon_1, \delta_{1})$-DP, \red{uniformly in the realization of $Y$,} for 
    \[
        \sigma^2\ge \max\left(\frac{2L}{\varepsilon_1}\left(f(2)+\frac{2(2\rho+2)}{\alpha}\right), \frac{2L}{\alpha_1}\right)
    \]
    where $\alpha$, $\alpha_1$ and $f$ are given by the Random Matrix Assumption \ref{ass:distribution}.
\end{lemma}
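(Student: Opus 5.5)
We follow the change-of-variables strategy of \cite{chaudhuri_differentially_2011,kifer_private_2012}, with the random matrix $W$ playing the role of the random linear term. Fix $y$ and work with the recentred losses $\bar\ell$ and regularizer $\bar r$. Under the Simplification Assumption \ref{ass:simplification} and on $E_\alpha$, the objective is twice differentiable and $\alpha\sigma^2$-strongly convex, so $\theta_\priv$ is the unique solution of the first-order condition
\[
    \sum_{i=1}^n\nabla\bar\ell(\theta;z_i)+\nabla\bar r(\theta)+\sigma^2W(\theta-\theta_c)=0 .
\]
For fixed data this defines a map $T_\D\colon W\mapsto\theta_\priv$ from $\S^d_+$ (dimension $d(d+1)/2$) onto $\R^d$. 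By the Density Transportation Lemma \ref{lem:transport}, $\theta_\priv$ has density $p_\D(\theta)=\int_{T_\D^{-1}(\theta)}q(W)/\|J_dT_\D(W)\|\,d\H(W)$. The fibre $T_\D^{-1}(\theta)$ is the affine set $\{W\in E_\alpha\colon \sigma^2W(\theta-\theta_c)=-g_\D(\theta)\}$, where $g_\D(\theta)=\sum_i\nabla\bar\ell(\theta;z_i)+\nabla\bar r(\theta)$. The point $\theta=\theta_c$ has measure zero and can be ignored.

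To compare adjacent datasets $\D\sim\D'$ differing in $z_n$ versus $z_n'$, we build a bijection between fibres at the same $\theta$. Let $u=(\nabla\bar\ell(\theta;z_n')-\nabla\bar\ell(\theta;z_n))/\sigma^2$ and $b=\theta-\theta_c$. Lemma \ref{lem:matrixU} gives a symmetric $U$ with $\rank(U)\le2$, $Ub=-u$ (sign chosen appropriately) and $\|U\|_{\op}=\|u\|/\|b\|$. Then $W\mapsto W+U$ maps the $\D$-fibre onto the $\D'$-fibre, is a translation, and so preserves Hausdorff measure. Since $\theta_c$ minimizes every $\bar\ell(\cdot;z)$ and each is $L$-smooth, $\|\nabla\bar\ell(\theta;z)\|\le L\|b\|$. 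Hence $\|U\|_{\op}\le 2L/\sigma^2$, with no dependence on $\theta$; this is exactly where the recentring is used. The density-ratio condition then bounds $q(W+U)/q(W)\le e^{2Lf(2)/\sigma^2}$.

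The Jacobian factor comes from the implicit function theorem. The derivative of $T_\D$ is $-(H_\D(\theta)+\sigma^2W)^{-1}\sigma^2\,\mathcal L_b$, where $H_\D=\nabla^2\sum\bar\ell+\nabla^2\bar r$ and $\mathcal L_b\colon V\mapsto Vb$. So $\|J_dT_\D\|$ equals $\det(H_\D+\sigma^2W)^{-1}$ times a factor depending only on $b$, which is common to both datasets. The ratio of Jacobians is therefore $\det(A')/\det(A)$ with $A=H_\D+\sigma^2W$ and $A'=H_{\D'}+\sigma^2(W+U)$. The difference $A'-A=\nabla^2\bar\ell(\theta;z_n')-\nabla^2\bar\ell(\theta;z_n)+\sigma^2U$ has rank at most $2\rho+2$ and operator norm at most $2L+2L=4L$. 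Here $A\succeq\sigma^2\alpha I$, but we need this bound on both fibres, which is the role of $\alpha_1$. If $W\in E_{\alpha+\alpha_1}$ then $W+U\succeq(\alpha+\alpha_1-2L/\sigma^2)I\succeq\alpha I$ when $\sigma^2\ge2L/\alpha_1$. The Low-Rank Update Bound \ref{lem:lowrankupdate} then gives a ratio of at most $\exp(4L(2\rho+2)/(\alpha\sigma^2))$, applied in whichever direction is needed. Combining the two bounds, the density ratio is at most $\exp\bigl(\tfrac{2L}{\sigma^2}(f(2)+\tfrac{2(2\rho+2)}{\alpha})\bigr)\le e^{\varepsilon_1}$ by the stated choice of $\sigma^2$.

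The main obstacle is the boundary layer. Points of the $\D$-fibre with $\alpha\le\lambda_{\min}(W)<\alpha+\alpha_1$ may be sent outside $E_\alpha$, so the fibre correspondence is only partial. We handle this by splitting any output set $S$ into the contribution from $W\in E_{\alpha+\alpha_1}$, bounded by $e^{\varepsilon_1}\P(\theta'_\priv\in S\mid E_\alpha)$, and the contribution from $W\in E_\alpha\setminus E_{\alpha+\alpha_1}$. The latter has conditional probability at most $\delta_1$ by the second condition of Assumption \ref{ass:distribution}. Care is also needed in two places. First, the conditioning normalizations $\P(E_\alpha)$ are data-independent and cancel. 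Second, the Jacobian $\|J_dT\|$ must be nonzero almost surely, which holds because $b\ne0$ almost surely. Both checks give $(\varepsilon_1,\delta_1)$-DP uniformly in $y$.
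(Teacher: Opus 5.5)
Your proposal is correct and follows essentially the paper's route: coarea density over the affine fibres, the rank-$2$ translation from Lemma~\ref{lem:matrixU} with $\|U\|_{\mathrm{op}}\le 2L/\sigma^2$ (the one place the recentring is used), the density-ratio condition combined with the Low-Rank Update Bound~\ref{lem:lowrankupdate} for the Jacobian factor, and a boundary layer of conditional probability at most $\delta_1$. The only bookkeeping difference is that you split the fibre by the sufficient condition $\lambda_{\min}(W)\ge\alpha+\alpha_1$ and bound the layer directly by its probability, whereas the paper splits according to whether the translate stays in $E_\alpha$; the two are equivalent.

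One slip: your claim that $b=\theta_{\mathrm{priv}}-\theta_c\neq 0$ almost surely fails when $\nabla\bar r(\theta_c)=0$. In that case $\theta_{\mathrm{priv}}\equiv\theta_c$ for every $W$ and there is no density. The release is then data-independent and trivially private, and whenever $\nabla\bar r(\theta_c)\neq0$ one has $b\neq0$ surely, which is how the paper handles it.
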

\begin{proof}
    \let\oldell\ell\renewcommand{\ell}{\red{\bar\oldell}}
    \let\oldL\L\renewcommand{\L}{\red{\bar\oldL}}
    
    For ease of notation we define $\L(\theta;\D)=\sum_{i=1}^n\ell(\theta; z_i)$. We note that the objective $\Jp(~\cdot~; \D)$ is $\alpha\sigma^2$-strongly convex under $E_\alpha$. Specifically, given a noise $W\in E_\alpha$, we can determine $\theta_\priv$ uniquely through the map
    \[
        \theta_\priv=T_\D(W)=\argmin_{\theta\in \R^d} \left\{\red{\bar\J_\priv(\theta;\D)=}\L(\theta;\D)+\red{\bar r}(\theta)+\frac{\sigma^2}{2}(\theta-\red{\theta_c})^TW(\theta-\red{\theta_c})\right\},
    \]
    whose fiber is characterized by 
    \[
        T_\D^{-1}(\theta)=\left\{W\in E_\alpha\subset \S^d_+\colon W(\theta-\red{\theta_c})=u_\D(\theta)\right\},\quad \text{where~} u_\D(\theta)=-\sigma^{-2}\left(\nabla \L(\theta;\D)+\nabla \red{\bar r}(\theta)\right).
    \]
    \red{Note that $\bar\J_\priv$ and $\Jp$ are equal up to an additive constant, and therefore share the same minimizers.}
    
    Observe that if $\red{\theta_c}\in \argmin \red{\bar r}$, then $\theta_\priv=\red{\theta_c}$ under $E_\alpha$, and hence releasing $\theta_\priv$ is perfectly private, and hence certainly also $(\varepsilon_1, \delta_{1})$-DP. As such, from now on, assume that $\red{\theta_c}\not\in\argmin \red{\bar r}$. We note that $\nabla \Jp(\red{\theta_c}; \D)=\nabla \red{\bar r}(\red{\theta_c})\neq 0$ in that case, and hence $\red{\theta_c}\not\in \argmin \Jp$, and hence $T_\D(W)\neq \red{\theta_c}$. 
    
    This proof is split into multiple parts. In Part 1, we derive the conditional density of the distribution of $\theta=T_\D(W)$ under $E_\alpha$. In Part 2, we derive an upper bound on the ratio of this density.
    
    \textbf{Part 1. Conditional Density}
    
    Let us compute the conditional density $\red{p_\D^y}(\theta|E_\alpha)$ of $\theta=T_\D(W)$ under $E_\alpha$. By Lemma \ref{lem:transport}, it holds that $\theta=T_\D(W)$ follows a distribution with density $\red{p_\D^y}$ given by
    \tight{\begin{align}
        \red{p_\D^y}(\theta|E_\alpha)
        &=\int_{T_\D^{-1}(\theta)}\frac{q(W|E_\alpha)}{\|J_dT_\D(W)\|}d\H^{d(d+1)/2-d}(W) \nonumber\\
        &=\frac{1}{\P(E_\alpha)}\int_{T_\D^{-1}(\theta)\cap E_\alpha}\frac{q(W)}{\|J_dT_\D(W)\|}d\H^{\frac{d(d-1)}{2}}(W),\label{eq:density}
    \end{align}}
    {\begin{equation}\label{eq:density}
        \red{p_\D^y}(\theta|E_\alpha)=\int_{T_\D^{-1}(\theta)}\frac{q(W|E_\alpha)}{\|J_dT_\D(W)\|}d\H^{d(d+1)/2-d}(W)=\frac{1}{\P(E_\alpha)}\int_{T_\D^{-1}(\theta)\cap E_\alpha}\frac{q(W)}{\|J_dT_\D(W)\|}d\H^{\frac{d(d-1)}{2}}(W),
    \end{equation}
    }where we canonically identified $\S^d$ with $\R^{d(d+1)/2}$. We note that we require $\|J_dT_\D(W)\|>0$ almost everywhere on $\{q(W|E_\alpha)>0\}$ and $T_\D$ to be locally Lipschitz continuous under $E_\alpha$, which we shall show later. In order to evaluate $\red{p_\D^y}(\theta|E_\alpha)$, we shall compute $\|J_dT_\D(W)\|$. First note that $\theta=T_\D(W)$ if, and only if, 
    \[
        F_\D(\theta, W)=\nabla \L(\theta;\D)+\nabla \red{\bar r}(\theta)+\sigma^2W(\theta-\red{\theta_c})=0.
    \]
    As such, along the curve $\theta=T_\D(W)$, we have $F_\D(\theta, W)\equiv 0$. We have that
    \[
        D_\theta F_\D(\theta, W)[\delta \theta]=(\nabla ^2\L(\theta;\D)+\nabla^2 \red{\bar r}(\theta)+\sigma^2W)(\delta \theta),
    \]
    where we note the inverse is well-defined under $E_\alpha$, and 
    \[
        D_W F_\D(\theta, W)[\delta W]=\sigma^2(\delta W)(\theta-\red{\theta_c}),
    \]
    and as such, by the Implicit Function Theorem, since $D_\theta F_\D$ is invertible under $E_\alpha$,
    \[
        D_WT_\D(W)[\delta W]=-\sigma^2(\nabla ^2\L(T_\D(W);\D)+\nabla^2 \red{\bar r}(T_\D(W))+\sigma^2W)^{-1}(\delta W)(T_\D(W)-\red{\theta_c}).
    \]
    Moreover, we know that $T_\D$ is locally $C^1$, and hence locally Lipschitz-continuous. Specifically, if we define 
    \[
        A_W\colon \R^d\to \R^d, \quad v\mapsto -\sigma^2(\nabla ^2\L(T_\D(W);\D)+\nabla^2 \red{\bar r}(T_\D(W))+\sigma^2W)^{-1} v,
    \]
    and 
    \[
        B_\theta\colon \S^d_+\to \R^d, \quad \delta W\mapsto (\delta W)(\theta-\red{\theta_c}),
    \]
    then we can write 
    \[
        DT_\D(W)=A_W\circ B_{T_\D(W)}.
    \]
    Now we compute the adjoint operator $B_{\theta}^T$ by noting that
    \tight
    {\begin{align*}
        \langle B_\theta(\delta W), v\rangle 
        &= (\theta-\red{\theta_c})^T(\delta W)v
        =\tr((\delta W) (\theta-\red{\theta_c}) v ^T)
        =\langle \delta W, (\theta-\red{\theta_c}) v^T\rangle \\
        &=\left\langle \delta W, \frac{(\theta-\red{\theta_c}) v^T+v(\theta-\red{\theta_c})^T}{2}\right\rangle,
    \end{align*}}
    {\[
        \langle B_\theta(\delta W), v\rangle 
        = (\theta-\red{\theta_c})^T(\delta W)v
        =\tr((\delta W) (\theta-\red{\theta_c}) v ^T)
        =\langle \delta W, (\theta-\red{\theta_c}) v^T\rangle
        =\left\langle \delta W, \frac{(\theta-\red{\theta_c}) v^T+v(\theta-\red{\theta_c})^T}{2}\right\rangle,
    \]}
    and hence $B_\theta^T(v)=\sym((\theta-\red{\theta_c}) v^T)$. As such, 
    \[
        B_\theta\circ B_\theta^T(v)=\frac{(\theta-\red{\theta_c}) v^T+v(\theta-\red{\theta_c})^T}{2}(\theta-\red{\theta_c})=\frac{(\theta-\red{\theta_c})(\theta-\red{\theta_c})^T+\|\theta-\red{\theta_c}\|^2}{2}v,
    \]
    or, in short-hand, 
    \[
        B_\theta B_\theta^T=\frac{(\theta-\red{\theta_c})(\theta-\red{\theta_c})^T+\|\theta-\red{\theta_c}\|^2I}{2}.
    \]
    Specifically, using the Rank-1 Update Formula \ref{lem:rank1update}, we obtain
    \[
        \det\left(B_\theta B_\theta^T\right)=2^{-d}\left(1+\frac{(\theta-\red{\theta_c})^T(\theta-\red{\theta_c})}{\|\theta-\red{\theta_c}\|^{2}}\right)\det(\|\theta-\red{\theta_c}\|^2I)=2^{1-d}\|\theta-\red{\theta_c}\|^{2d}.
    \]
    Moreover, we note that $A_W$ is symmetric, and hence, with $\theta=T_\D(W)$,
    \begin{align*}
        \det\left(DT_\D(W)DT_\D(W)^T\right)
        &=\det(A_W B_{T_\D(W)} B_{T_\D(W)}^T A_W^T)\\
        &=\det(A_W)^2\det(B_{T_\D(W)} B_{T_\D(W)}^T)\\
        &=\frac{\sigma^{4d}\cdot 2^{1-d}\|\theta-\red{\theta_c}\|^{2d}}{\det(\nabla ^2\L(T_\D(W);\D)+\nabla ^2\red{\bar r}(T_\D(W))+\sigma^2W)^2}.
    \end{align*}
    If we define 
    \[
        \bar A_\D(W)=\nabla ^2\L(T_\D(W);\D)+\nabla ^2\red{\bar r}(T_\D(W))+\sigma^2W,
    \]
    then we can write 
    \[
        \|J_dT_\D(W)\|=\sqrt{\det\left(DT_\D(W)DT_\D(W)^T\right)}=\frac{2^{\frac{1-d}{2}}\cdot \|\theta-\red{\theta_c}\|^d\cdot \sigma^{2d}}{|\det(\bar A_\D(W))|}=\frac{2^{\frac{1-d}{2}}\cdot \|\theta-\red{\theta_c}\|^d\cdot \sigma^{2d}}{\det(\bar A_\D(W))},
    \]
    where the final equality follows from $\bar A_\D(W)\succeq 0$. Plugging this into Equation \eqref{eq:density} yields 
    \[
        \red{p_\D^y}(\theta|E_\alpha)=\frac{1}{P(E_\alpha)\cdot 2^{\frac{1-d}{2}}\cdot \|\theta-\red{\theta_c}\|^d\cdot \sigma^{2d}}\int_{T_\D^{-1}(\theta)\cap E_\alpha}q(W)\det(\bar A_\D(W))d\H^{\frac{d(d-1)}{2}}(W).
    \]
    For simplicity of notation, we denote the data-independent scaling constant by $C_\alpha(\theta)$. We do note that $\bar A_\D(W)\succeq \sigma^2W\succ 0$ and $\|\theta-\red{\theta_c}\|>0$, and hence $\|J_dT_\D(W)\|>0$ almost everywhere on $\{q(W|E_\alpha)>0\}$, as required for Lemma \ref{lem:transport}.
    
    \textbf{Part 2. Upper Bound on Ratios of Densities}
    
    Now take two neighboring datasets $\D$ and $\D'$ of size $n$, which differ only in the last datapoint (see Definition \ref{def:adjacency}). Define 
    \[
        E=\nabla^2\ell(\theta; z'_n)-\nabla^2\ell(\theta; z_n),
    \]
    such that, for all $W\in T_\D^{-1}(\theta)$ and $W'\in T_{\D'}^{-1}(\theta)$, it holds that
    \[
        \bar A_{\D'}(W')-\sigma^2 W'=\bar A_\D(W)-\sigma^2 W+ E.
    \]
    Recall that the fiber $T_\D^{-1}(\theta)$ is described by $W(\theta-\red{\theta_c})=u_\D(\theta)$, and note that
    \[
        u_{\D'}(\theta)=u_\D(\theta)-\sigma^{-2}\delta u,
    \]
    where $\delta u=\nabla \ell(\theta; z'_n)-\nabla \ell(\theta; z_n)$. Now define $U$ such that $\|U\|_{op}=\frac{\|\delta u\|}{\|\theta-\red{\theta_c}\|}$, such that $U(\theta-\red{\theta_c})=\delta u$, and such that $\rank(U)\le 2$. This is possible by Lemma \ref{lem:matrixU}.
    We note that by $L$-smoothness, we have that 
    \[
        \|U\|_{\text{op}}=\frac{\|\delta u\|}{\|\theta-\red{\theta_c}\|}\le \frac{\|\nabla \ell(\theta; z'_n)-\nabla \ell(\red{\theta_c}; z'_n)\|}{\|\theta-\red{\theta_c}\|}+\frac{\|\nabla \ell(\theta; z_n)-\nabla \ell(\red{\theta_c}; z_n)\|}{\|\theta-\red{\theta_c}\|}\le 2L.
    \]
    As such, $(W, W')\in T_\D^{-1}(\theta)\times T_{\D'}^{-1}(\theta)$, if, and only if, 
    \[
        W'(\theta-\red{\theta_c})=W(\theta-\red{\theta_c})-\sigma^{-2} U(\theta-\red{\theta_c}).
    \]
    Define the translation $\phi(W)=W+ \sigma^{-2} U$. Specifically, if $W'\in T_{\D'}^{-1}(\theta)\cap E_\alpha$, then either $\phi(W')\in T_{\D}^{-1}(\theta)\cap E_\alpha$ or $\phi(W')\not\in E_\alpha$. As such, we may define 
    \tight{
        \[G_{\D'}(\theta)=\{W'\in T_{D'}^{-1}(\theta)\cap E_\alpha\colon \phi(W')\in T_D^{-1}(\theta)\cap E_\alpha\}\]
        and 
        \[B_{\D'}(\theta)=\{W'\in T_{D'}^{-1}(\theta)\cap E_\alpha\colon \phi(W')\not\in E_\alpha\},\]
    }
    {\[
        G_{\D'}(\theta)=\{W'\in T_{D'}^{-1}(\theta)\cap E_\alpha\colon \phi(W')\in T_D^{-1}(\theta)\cap E_\alpha\}\quad \text{and}\quad 
        B_{\D'}(\theta)=\{W'\in T_{D'}^{-1}(\theta)\cap E_\alpha\colon \phi(W')\not\in E_\alpha\},
    \]}
    such that $\phi(G_{\D'}(\theta))\subset T_\D^{-1}(\theta)\cap E_\alpha$, and $T_{\D'}^{-1}(\theta)\cap E_\alpha=G_{\D'}(\theta)\uplus B_{\D'}(\theta)$. As such, 
    \tight{\begin{align*}
        \red{p_{\D'}^y}(\theta|E_\alpha)
        &=C_\alpha(\theta)\cdot \int_{T_{\D'}^{-1}(\theta)\cap E_\alpha}q(W')\det(\bar A_{\D'}(W'))d\H^{\frac{d(d-1)}{2}}(W') \\
        &=\underbrace{C_\alpha(\theta)\int_{G_{\D'}(\theta)}q(W')\det(\bar A_{\D'}(W'))d\H^{\frac{d(d-1)}{2}}(W')}_{\text{(Part 2a)}} \\
        &\qquad +  \underbrace{C_\alpha(\theta)\int_{B_{\D'}(\theta)}q(W')\det(\bar A_{\D'}(W'))d\H^{\frac{d(d-1)}{2}}(W') }_{\text{(Part 2b)}}.
    \end{align*}}
    {\begin{align*}
        \red{p_{\D'}^y}(\theta|E_\alpha)
        &=C_\alpha(\theta)\cdot \int_{T_{\D'}^{-1}(\theta)\cap E_\alpha}q(W')\det(\bar A_{\D'}(W'))d\H^{\frac{d(d-1)}{2}}(W') \\
        &=\underbrace{C_\alpha(\theta)\int_{G_{\D'}(\theta)}q(W')\det(\bar A_{\D'}(W'))d\H^{\frac{d(d-1)}{2}}(W')}_{\text{(Part 2a)}} +  \underbrace{C_\alpha(\theta)\int_{B_{\D'}(\theta)}q(W')\det(\bar A_{\D'}(W'))d\H^{\frac{d(d-1)}{2}}(W') }_{\text{(Part 2b)}}.
    \end{align*}}
    By using the translation defined above, we have that, by change of variables,
    \tight{\begin{align*}
        \frac{\text{(Part 2a)}}{C_\alpha(\theta)}
        &= \int_{\phi(G_{\D'}(\theta))}q(\phi^{-1}(W))\det(\bar A_{\D'}(\phi^{-1}(W)))d\H^{\frac{d(d-1)}{2}}(W) \\
        &= \int_{\phi(G_{\D'}(\theta))}q(W-\sigma^{-2}U)\det(\bar A_{\D}(W)- U+ E)d\H^{\frac{d(d-1)}{2}}(W) \\
        &\le e^{2\sigma^{-2}L\cdot f(2)}\cdot \sup_{W\in \phi(G_{\D'}(\theta))}\left\{\frac{\det(\bar A_{\D}(W)- U+ E)}{\det(\bar A_{\D}(W))}\right\} \\
        &\qquad\qquad \cdot \int_{\phi(G_{\D'}(\theta))}q(W)\det(\bar A_{\D}(W))d\H^{\frac{d(d-1)}{2}}(W) \\
        &\le e^{2\sigma^{-2}L\cdot f(2)}\cdot \sup_{W\in E_\alpha}\left\{\frac{\det(\bar A_{\D}(W)- U+ E)}{\det(\bar A_{\D}(W))}\right\} \\
        &\qquad\qquad \cdot \int_{T_\D^{-1}(\theta)\cap E_\alpha}q(W)\det(\bar A_{\D}(W))d\H^{\frac{d(d-1)}{2}}(W) \\
        &= e^{2\sigma^{-2}L\cdot f(2)}\cdot \sup_{W\in E_\alpha}\left\{\frac{\det(\bar A_{\D}(W)- U+ E)}{\det(\bar A_{\D}(W))}\right\}\cdot\frac{p_{\D}(\theta|E_\alpha)}{C_\alpha(\theta)},
    \end{align*}}
    {\begin{align*}
        \frac{\text{(Part 2a)}}{C_\alpha(\theta)}
        &= \int_{\phi(G_{\D'}(\theta))}q(\phi^{-1}(W))\det(\bar A_{\D'}(\phi^{-1}(W)))d\H^{\frac{d(d-1)}{2}}(W) \\
        &= \int_{\phi(G_{\D'}(\theta))}q(W-\sigma^{-2}U)\det(\bar A_{\D}(W)- U+ E)d\H^{\frac{d(d-1)}{2}}(W) \\
        &\le e^{2\sigma^{-2}L\cdot f(2)}\cdot \sup_{W\in \phi(G_{\D'}(\theta))}\left\{\frac{\det(\bar A_{\D}(W)- U+ E)}{\det(\bar A_{\D}(W))}\right\}\cdot \int_{\phi(G_{\D'}(\theta))}q(W)\det(\bar A_{\D}(W))d\H^{\frac{d(d-1)}{2}}(W) \\
        &\le e^{2\sigma^{-2}L\cdot f(2)}\cdot \sup_{W\in E_\alpha}\left\{\frac{\det(\bar A_{\D}(W)- U+ E)}{\det(\bar A_{\D}(W))}\right\}\cdot \int_{T_\D^{-1}(\theta)\cap E_\alpha}q(W)\det(\bar A_{\D}(W))d\H^{\frac{d(d-1)}{2}}(W) \\
        &= e^{2\sigma^{-2}L\cdot f(2)}\cdot \sup_{W\in E_\alpha}\left\{\frac{\det(\bar A_{\D}(W)- U+ E)}{\det(\bar A_{\D}(W))}\right\}\cdot \frac{p_{\D}(\theta|E_\alpha)}{C_\alpha(\theta)},
    \end{align*}}
    where the first inequality follows by assumption. To bound the supremum, we recall that $E=\nabla^2\ell (\theta; z'_n)-\nabla^2\ell (\theta; z_n)$ has rank at most $2\rho$ and all eigenvalues bounded by $2L$, and that $U$ has rank at most $2$ and all eigenvalues bounded by $2L$. We apply the Low-Rank Update Bound \ref{lem:lowrankupdate} with $\bar A_\D(W)\succeq \alpha \sigma^2 I$ on $E_\alpha$ and $|\lambda_i(- U+E)|\le 4L$ for all $1\le i\le 2\rho+2$, where $\rank(- U+E)\le 2\rho+2$, to obtain that
    \[
        \frac{\det(\bar A_\D(W)- U+ {E})}{\det(\bar A_\D(W))}\le \exp\left(\frac{4L(2\rho+2)}{\alpha \sigma^2}\right).
    \]
    As such, we conclude that 
    \[
        (\text{Part 2a})\le \exp\left(2\sigma^{-2}L\cdot f(2)+2\sigma^{-2}L\frac{2(2\rho+2)}{\alpha}\right)\cdot \red{p_\D^y}(\theta|E_\alpha)\le e^{\varepsilon_1}\cdot \red{p_\D^y}(\theta|E_\alpha),
    \]
    where the latter follows from the bound on $\sigma^2$.

    For $\text{(Part 2b)}$, notice that 
    \[
        B_{\D'}(\theta)\subset \bar B=\{W\in \S_+^d\colon \alpha\le \lambda_{\min}(W)\le \alpha+2\sigma^{-2}L\},
    \]
    since $\|U\|_{\text{op}}\le 2L$, and hence 
    \begin{align*}
        (\text{Part 2b})
         \le C_\alpha(\theta)\int_{T_{\D'}(\theta)\cap \bar B}q(W')\det(\bar A_{\D'}(W'))d\H^{\frac{d(d-1)}{2}}(W') 
        = \frac{\P(\bar B)}{\P(E_\alpha)}\red{p_{\D'}^y}(\theta|\bar B).
    \end{align*}
    As such, for any measurable $S\subset \R^d$,
    \[
        \int_{\theta\in S}(\text{Part 2b})d\theta \le \frac{\P(\bar B)}{\P(E_\alpha)}\int_{\theta\in S}\red{p_{\D'}^y}(\theta|\bar B)\le \frac{\P(\bar B)}{\P(E_\alpha)}\le \delta_{1},
    \]
    where the last step follows by assumption and by the bound on $\sigma^2$. Specifically, for any measurable $S\subset \R^d$, it holds that 
    \[
        \red{\P_{\D'}}(\theta\in S|E_\alpha\red{, Y=y})\le e^{\varepsilon_1}\P_{\D}(\theta\in S|E_\alpha\red{, Y=y})+\delta_{1},
    \]
    and hence the conditional mechanism is $(\varepsilon_1, \delta_{1})$-DP, as wanted.
\end{proof}

The result of Lemma \ref{lem:part1_simple} holds without the Simplification Assumption \ref{ass:simplification}. This is detailed in the following. 

\begin{lemma}[Conditional Release of Exact Minimizer]\label{lem:part1}
    Assume that the Problem Assumption \ref{ass:problem} and the Random Matrix Assumption \ref{ass:distribution} hold.
    The release of $\theta_\priv$, conditioned on the event $E_\alpha$ \red{and $\{Y=y\}$}, is $(\varepsilon_1, \delta_{1})$-DP, \red{uniformly in the realization of $Y$,} for 
    \[
        \sigma^2\ge \max\left(\frac{2L}{\varepsilon_1}\left(f(2)+\frac{2(2\rho+2)}{\alpha}\right), \frac{2L}{\alpha_1}\right)
    \]
    where $\alpha$, $\alpha_1$ and $f$ are given by the Random Matrix Assumption \ref{ass:distribution}.
\end{lemma}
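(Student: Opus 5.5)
The plan is to remove the two parts of the Simplification Assumption \ref{ass:simplification} one at a time with the Successive Approximation Lemma \ref{lem:sucessive}, following \cite{kifer_private_2012}. The mechanism under study is the conditional release of $\theta_\priv$ given $E_\alpha$ and $\{Y=y\}$. It is induced by the deterministic map $\phi(W;\D)=\argmin_{\theta\in C}\{\sum_i\bar\ell(\theta;z_i)+\bar r(\theta)+\frac{\sigma^2}{2}(\theta-\theta_c)^TW(\theta-\theta_c)\}$ and the random variable $W$, whose law is restricted to $E_\alpha$. I will build a sequence of objectives $\bar r_k+\iota_{C,k}$ that satisfy the Simplification Assumption, so that Lemma \ref{lem:part1_simple} applies to each with the same $\sigma^2$. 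I will then show that the minimizers $\phi^k(W;\D)$ converge to $\phi(W;\D)$ for every dataset and every $W\in E_\alpha$. It matters that the bound on $\sigma^2$ in Lemma \ref{lem:part1_simple} depends only on $L,\rho,\alpha,\alpha_1,f$, and not on $\bar r$ or $C$, so the privacy parameters are uniform along the sequence.

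\textbf{Step 1 (smoothing the regularizer).} Assume first that $C=\R^d$. Replace $\bar r$ by a $C^2$ convex approximation $\bar r_k$ that converges to $\bar r$ uniformly on compact sets. One choice is to mollify the Moreau envelope $\bar r^{1/k}$ with a smooth compactly supported kernel of width $1/k$. Mollification preserves convexity and gives $C^\infty$ functions, and $|\bar r^{1/k}-\bar r|\to 0$ locally uniformly because $\bar r$ is finite and convex, hence locally Lipschitz. Each $\bar r_k$ differs from $\bar r$ only in the regularizer, so Lemma \ref{lem:part1_simple} applies directly.

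For convergence of the minimizers, note that the objectives are all $\alpha\sigma^2$-strongly convex on $E_\alpha$. Strong convexity and a uniform lower bound (e.g.\ by an affine minorant of $\bar r$ minus a constant) confine the minimizers to a common bounded set. On that set the objectives converge uniformly, so strong convexity gives $\frac{\alpha\sigma^2}{2}\|\theta_k-\theta_\priv\|^2\le 2\sup_{K}|\bar r_k-\bar r|\to0$.

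\textbf{Step 2 (removing the constraint).} For general closed convex $C$, take $\bar r_k+k\,h(\operatorname{dist}(\theta,C))$ with $h$ a convex $C^2$ penalty vanishing at $0$ and increasing, for instance $h(t)=t^3$ or a smoothed version of $\operatorname{dist}^2$. One could also smooth the penalty jointly with the regularizer using a double index $k$. The result is a convex $C^2$ function on $\R^d$, so Step 1 (or Lemma \ref{lem:part1_simple} directly) gives $(\varepsilon_1,\delta_1)$-DP for each $k$. It remains to show that the penalized minimizers converge to the constrained minimizer. This is the standard exterior-penalty argument. The penalized values are bounded above by the constrained optimum, so $\operatorname{dist}(\theta_k,C)\to0$. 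Strong convexity keeps the $\theta_k$ bounded. Any limit point is feasible with objective value at most the optimum, and uniqueness of the strongly convex minimizer gives convergence of the whole sequence.

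\textbf{Main obstacle.} I expect the delicate point to be verifying that each approximation really satisfies the hypotheses of Lemma \ref{lem:part1_simple}. In particular $\bar r_k$ must be twice continuously differentiable, not merely $C^{1,1}$: $\operatorname{dist}^2(\cdot,C)$ is only $C^{1,1}$, which is why I mollify. Care is also needed so that the implicit-function and coarea steps there remain valid, and the case $\theta_c\in\argmin\bar r_k$ is handled trivially in that proof anyway. A second point is that Lemma \ref{lem:sucessive} is applied to the conditional mechanism. This is fine because the conditioning event $E_\alpha\cap\{Y=y\}$ is the same for all $k$ and convergence holds pointwise for every realization of $W$ in it. The result therefore holds uniformly in $y$, as required.
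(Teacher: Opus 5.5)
Your proposal is correct and follows essentially the paper's route: two applications of the Successive Approximation Lemma \ref{lem:sucessive} as in \cite{kifer_private_2012}, first mollifying the regularizer and using $\alpha\sigma^2$-strong convexity to turn locally uniform convergence into convergence of minimizers, then removing $C$ with a distance penalty that is fed back through Step 1. The one substantive difference is in Step 2, where the paper uses the exact penalty $i\operatorname{dist}(\cdot,C)$ and Claim 23 of Kifer et al.\ to get $\phi^i=\phi$ for all large $i$, while your exterior-penalty limit-point argument works equally well. One claim should be corrected: $\operatorname{dist}(\cdot,C)^3$ is not $C^2$ in general (for a box, $P_C$ is not differentiable where the projection switches from an edge to a corner), but this does not break the proof, since Step 1 already handles any finite convex regularizer and so the penalty needs no smoothness.
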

\begin{proof}
    The proof relies on the Successive Approximation Lemma \ref{lem:sucessive}, following \cite{kifer_private_2012}. In fact, the proof closely follows the arguments laid out in \cite{kifer_private_2012}. The two assumptions we aim to prove are the twice-differentiability of the regularizer (Part 1) and the full-domain assumption of the constraint set (Part 2).

    \textbf{Part 1. Removal of Twice-Differentiability Assumption of the Regularizer.}

    In this part, we still assume the constraint set $C$ to be the full domain $C=\R^d$. For this, we consider the function $\psi\colon \R\to \R$ defined as 
    \[
        \psi(x)=\begin{cases}
            \exp(-\frac{1}{1-x^2})\quad &{\text{if $|x|<1$}} \\
            0 & \text{else}.
        \end{cases}
    \]
    This function, known as the bump function, is $C^\infty$ and all its derivatives vanish outside $(-1, 1)$. Through this, we define the sequence of kernel functions $K_i\colon \R^{d}\to \R$ defined as 
    \[
        K_i(\theta)=\frac{\psi(i\|\theta\|^2)}{\int_{\R^d}\psi(i\|\theta'\|^2)d\theta'}.
    \]
    As such, $K_i$ is also $C^\infty$ and all its derivatives vanish outside its support $\supp(K_i)=\{\theta\colon \|\theta\|\le 1/\sqrt i\}$. Given the non-differentiable regularizer $r$, we define the approximate regularizer $r_i$ as 
    \[
        r_i(\theta)=[r\star K_i](\theta)=\int_{\R^d}r(\theta-\theta')K_i(\theta')d\theta'.
    \]
    As $K_i\in C^\infty$, it also holds that $r_i\in C^\infty$, and as $r$ is convex it also holds that $r_i$ is convex. Now define $\Jp$ to be the private objective with the regularizer $r$, and $\Jpi$ to be the private objective with the regularizer $r_i$. We know that the release of $\phi^i(W)$ is $(\varepsilon_1, \delta_1)$-DP by Lemma \ref{lem:part1_simple}. We shall thus show that
    \[
        \lim_{i\to \infty}\phi^i(W)=\phi(W) \quad \text{for each realization $W\in \E_\alpha$},
    \]
    where $\phi^i$ is the unique minimizer of $\Jpi(~\cdot~; \D)$ and $\phi$ is the unique minimizer of $\Jp(~\cdot~; \D)$, and thus conclude that the release of $\phi(W)$ is also $(\varepsilon_1, \delta_1)$-DP by the Successive Approximation Lemma \ref{lem:sucessive}.
    We shall split the remainder of this part into two; proving uniform convergence of $\Jpi(~\cdot~; \D)$ towards $\Jp(~\cdot~; \D)$ for each $W\in E_\alpha$ (Part 1a) and therefrom proving pointwise convergence of $\phi^i$ towards $\phi$ (Part 1b).

    \textbf{Part 1a.} We consider a bounded set $\mathcal I\subset \R^d$ and an arbitrary $W\in E_\alpha$, and show that $\Jpi(~\cdot~; \D)$ converges uniformly to $\Jp(~\cdot~; \D)$ on $\mathcal I$. Since $r$ is continuous over the compact $\bar {\mathcal I}+B(0, 1)$, it is also uniformly continuous over $\bar {\mathcal I}+B(0, 1)$. As such, given a $\xi>0$, there exists an $\eta>0$ such that, if $\|\theta_1-\theta_2\|\le \eta$ for $\theta_1, \theta_2\in \mathcal I+B(0, 1)$, then $\|r(\theta_1)-r(\theta_2)\|\le \xi$. Select $i>\max(1, 1/ \eta^2)$, and any $\theta\in \mathcal I$, such that 
    \begin{align*}
        |\Jpi(\theta; \D)-\Jp(\theta; \D)|
        & =|r_i(\theta)-r(\theta)| \\
        & = \left|\int_{\R^d} r(\theta-\theta')K_i(\theta')d\theta'-r(\theta)\right| \\
        & = \left|\int_{\R^d} [r(\theta-\theta')-r(\theta)]K_i(\theta')d\theta'\right| \\
        & \le \int_{\R^d} \left|r(\theta-\theta')-r(\theta)\right|K_i(\theta')d\theta' \\
        & = \int_{\|\theta'\|\le 1/\sqrt i} \left|r(\theta-\theta')-r(\theta)\right|K_i(\theta')d\theta' \\
        &\le \xi \int_{\|\theta'\|\le 1/\sqrt i} K_i(\theta')d\theta' \\
        &= \xi.
    \end{align*}
    As such, $\Jpi(~\cdot~; \D)$ converges uniformly to $\Jp(~\cdot~; \D)$ on $\mathcal I$.

    \textbf{Part 1b.} Fix any $W\in E_\alpha$, choose any $0\le \xi< \frac{\alpha\sigma^2}{4}$, and define $\mathcal I=\{\theta\colon \|\theta-\phi(W)\|^2\le 1\}$. By Part 1a we know that $\Jpi(~\cdot~; \D)$ converges uniformly to $\Jp(~\cdot~; \D)$ on $\mathcal I$, and hence we may choose an $i_\xi$ such that, for all $i\ge i_\xi$, $|\Jpi(~\cdot~; \D)-\Jp(~\cdot~; \D)|\le \xi$ uniformly on $\mathcal I$. Now note that by $\alpha\sigma^2$-strong convexity of $\Jp(~\cdot~; \D)$, we have that
    \[
        \frac{\alpha\sigma^2}{2}\|\theta-\phi(W)\|^2\le \Jp(\theta; \D)-\Jp(\phi(W); \D)\quad \text{for all $\theta\in \R^d$}.
    \]
    Specifically, if $\theta\in \partial\mathcal I$, it holds that $\Jp(\theta; \D)-\Jp(\phi(W); \D)\ge \frac{\alpha\sigma^2}{2}$, and by uniform convergence since $\phi(W), \theta\in \mathcal I$ we thus get that 
    \[
        \Jpi(\theta; \D)-\Jpi(\phi(W); \D)\ge \Jp(\theta; \D)-\Jp(\phi(W); \D)-2\xi\ge \frac{\alpha\sigma^2}{2}-2\xi>0.
    \]
    We now show that $\phi^i(W)\in \mathcal I$. Assuming the negative, namely that $\|\phi^i(W)-\phi(W)\|>1$, we define 
    \[
        \tilde \phi^i=\phi(W)+\frac{\phi^i(W)-\phi(W)}{\|\phi^i(W)-\phi(W)\|}=\left(1-\lambda\right)\phi(W)+\lambda\phi^i(W)\in \partial \mathcal I,
    \]
    for $\lambda=\frac{1}{\|\phi^i(W)-\phi(W)\|}$. As such, by convexity of $\Jpi(~\cdot~; \D)$, and since $\lambda\in (0, 1)$, we get
    \[
        \Jpi(\phi(W); \D)<\Jpi(\tilde \phi^i; \D)\le (1-\lambda)\Jpi(\phi(W); \D)+\lambda\Jpi(\phi^i(W); \D)\le \Jpi(\phi(W); \D),
    \]
    where the first inequality follows from the above and the last inequality follows from $\phi^i(W)$ being the minimizer of $\Jpi(~\cdot~; \D)$. The above results in a contradiction, thus proving that we must have $\phi^i(W)\in \mathcal I$. As such, we obtain that
    \begin{align*}
        \frac{\alpha\sigma^2}{2}\|\phi^i(W)-\phi(W)\|^2
        &\le \Jp(\phi^i(W); \D)-\Jp(\phi(W); \D) \\
        &\le \Jpi(\phi^i(W); \D)-\Jpi(\phi(W); \D)+2\xi \\
        &\le 2\xi.
    \end{align*}
    As such, $(\phi^i(W))$ converges to $\phi(W)$, and thus $(\phi^i)$ converges pointwise to $\phi$, as wanted.

    \textbf{Part 2. Removal of Full-Domain Assumption of the Constraint Set.}

    Consider the sequence of functions, indexed by $i\ge 0$, 
    \[
        \Jpi(\theta; \D)=\Jp(\theta; \D)+i\dist(\theta, C).
    \]
    We denote by $\phi(W)=\argmin_{\theta\in C}\Jp(\theta; \D)$, and by $\phi^i(W)=\argmin_{\theta\in \R^d}\Jpi(\theta; \D)$ (note the different domains), and shall prove that $\phi^i$ converges pointwise to $\phi$. Since we know that the release of $\phi^i(W)$ is $(\varepsilon_1, \delta_1)$-DP (consider $r+i\dist(\cdot, C)$ as the regularizer) by Lemma \ref{lem:part1_simple} and Part 1 above, this would be sufficient to prove that the release of $\phi(W)$ is $(\varepsilon_1, \delta_1)$-DP by the Successive Approximation Lemma \ref{lem:sucessive}. 

    We fix a $W\in E_\alpha$, and denote for convenience by $\phi^i\equiv \phi^i(W)$ and $\phi\equiv \phi(W)$. We observe that $\phi\in C$, and if $\phi^i\in C$, then $\phi^i$ minimizes $\Jp(~\cdot~; \D)$ over $C$, and hence $\phi^i=\phi$. As such, to show convergence of $\phi^i$ towards $\phi$, it is sufficient to show that there exists an index $N\ge 0$ such that, for all $i\ge N$, $\phi^i\in C$. 

    \red{If $\phi^0\in C$, then the result is already shown. As such, assume $\phi^0\not\in C$.}

    Consider the set of points $\theta\in \R^d$ such that $\Jp(\theta; \D)\le \Jp(\phi; \D)$, and denote this set by $H$. Then, by $\alpha\sigma^2$-strong convexity of $\Jp(~\cdot~;\D)$, it must hold that, if we define $D=\sqrt{\frac{2}{\alpha\sigma^2}[\Jp(\phi; \D)-\Jp(\phi^0; \D)]}\red{>0}$, for all $\theta\in H$,
    \[
        \frac{\alpha\sigma^2}{2}\|\theta-\phi^0\|^2\le \Jp(\theta; \D)-\Jp(\phi^0; \D)\le \Jp(\phi; \D)-\Jp(\phi^0; \D)=\frac{\alpha\sigma^2}{2}\cdot D^2.
    \]
    Moreover, since $\phi\in C$ is clearly also in $H$, it must hold that, for any point $\theta\in H$, $\dist(\theta, C)\le \|\theta-\phi\|\le 2D$. Specifically, over $H$, the quantity $\lambda_\theta=\frac{2D}{\dist(\theta, C)}\ge 1$. Now we note that $\Jp$ is convex, and hence, for $x=\theta$, $y=\proj_C(\theta)-\theta$ and $\lambda=\lambda_\theta$, Lemma \ref{S12::lem:weird_convex} gives that 
    \tight{\begin{align*}
        \Jp(\proj_C(\theta); \D)-\Jp(\theta; \D)
        &\le \frac{\Jp(\theta+2D\cdot \frac{\proj_C(\theta)-\theta}{\dist(\theta, C)}; \D)-\Jp(\theta; \D)}{\frac{2D}{\dist(\theta, C)}}\\
        &\le \frac{M}{2D}\cdot \dist(\theta, C),
    \end{align*}}
    {\[
        \Jp(\proj_C(\theta); \D)-\Jp(\theta; \D)\le \frac{\Jp(\theta+2D\cdot \frac{\proj_C(\theta)-\theta}{\dist(\theta, C)}; \D)-\Jp(\theta; \D)}{\frac{2D}{\dist(\theta, C)}}\le \frac{M}{2D}\cdot \dist(\theta, C),
    \]}
    where $M$ is defined as the supremum of the function $(\theta, v)\mapsto \Jp(\theta + 2D\cdot v; \D)-\Jp(\theta; \D)$ over $\bar H\times \partial B(0, 1)$ (exists as the function is continuous over a compact). As such, we get that, for all $\theta\in H$ such that $\theta\not\in C$, for $i>\frac{M}{2D}$,
    \begin{align*}
        \Jpi(\theta; \D)
        &=\Jp(\theta; \D)+i\dist(\theta, C) \\
        &=\Jp(\proj_C(\theta); \D)+i\dist(\theta, C)-\left[\Jp(\proj_C(\theta); \D)-\Jp(\theta; \D)\right] \\
        &\ge \Jp(\proj_C(\theta); \D)+i\dist(\theta, C)-\frac{M}{2D}\cdot \dist(\theta, C) \\
        &> \Jp(\proj_C(\theta); \D) \\
        &\ge \Jp(\phi; \D).
    \end{align*}
    We note that
    \[
        \Jp(\phi^i; \D)\le \Jpi(\phi^i; \D)\le \Jpi(\phi; \D)=\Jp(\phi; \D),
    \]
    and hence $\phi^i\in H$ for all $i$. As such, the above strict inequality cannot hold for $\theta=\phi^i$, and we conclude that $\phi^i\in C$ for $i>\frac{M}{2D}$, as wanted.
\end{proof}

\subsection{Release of Noisy Error}

\begin{lemma}[Conditional Release of Noisy Error]\label{lem:part2}
    \red{Assume that the Problem Assumption \ref{ass:problem} and the Random Matrix Assumption \ref{ass:distribution} hold.
    The release of $\theta_\final-\theta_\priv$, conditioned on $E_{\alpha}$ and the realized value of $\theta_\priv$, is $(\varepsilon_2, \delta_2)$-DP, uniformly in the realization $W\in E_{\alpha}$, for 
    \[
        \tilde \sigma\ge \frac{2}{\varepsilon_2}\sqrt{\frac{2\tau}{\alpha\sigma^2}}\sqrt{2\ln(1.25/\delta_2)},
    \]
    where $\alpha$ is given by the Random Matrix Assumption \ref{ass:distribution}.}
\end{lemma}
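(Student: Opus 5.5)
Conditioned on $E_\alpha$, on the realization of $W$ (and of $Y$), and on the realized $\theta_\priv$, the released quantity is $\theta_\final-\theta_\priv=(\theta_\approx-\theta_\priv)+b$ with $b\sim\mathcal N(0,\tilde\sigma^2 I)$ independent of everything else. The plan is to apply the Gaussian Mechanism \ref{lem:gauss} to the map $\D\mapsto \theta_\approx(\D)-\theta_\priv(\D)$, viewing everything except $b$ as fixed. The obstacle is that $\theta_\approx$ is an arbitrary $\tau$-accurate output of the solver, so this map need not have small \emph{difference} sensitivity in any structured sense. The natural fix is to bound its norm uniformly, and then bound the sensitivity by the triangle inequality: $\|(\theta_\approx-\theta_\priv)(\D)-(\theta_\approx-\theta_\priv)(\D')\|\le 2\sup\|\theta_\approx-\theta_\priv\|$. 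This explains the factor $2$ in the stated bound on $\tilde\sigma$.

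For the uniform norm bound, use that on $E_\alpha$ the objective $\Jp(\cdot;\D)$ is $\alpha\sigma^2$-strongly convex. This holds because $\J(\cdot;\D)$ is convex, the linear/centering terms do not affect curvature, and $\frac{\sigma^2}{2}(\theta-\tilde\theta)^TW(\theta-\tilde\theta)$ has Hessian $\sigma^2W\succeq\alpha\sigma^2 I$. Also $\theta_\priv$ is its minimizer over the closed convex set $C$, and $\theta_\approx\in C$. By first-order optimality over $C$, there is a subgradient $g$ at $\theta_\priv$ with $\langle g,\theta-\theta_\priv\rangle\ge0$ for all $\theta\in C$. Combining this with strong convexity gives the quadratic growth
\[
\frac{\alpha\sigma^2}{2}\|\theta_\approx-\theta_\priv\|^2\le \Jp(\theta_\approx;\D)-\Jp(\theta_\priv;\D)\le\tau,
\]
using \eqref{eq:approximate_def}. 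Hence $\|\theta_\approx-\theta_\priv\|\le\sqrt{2\tau/(\alpha\sigma^2)}$ for every dataset and every $W\in E_\alpha$. The bound depends only on $\alpha,\sigma,\tau$, so it is uniform in the realizations, as the statement requires.

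It follows that the $\ell_2$-sensitivity of $\D\mapsto\theta_\approx-\theta_\priv$, for fixed $W\in E_\alpha$ and fixed $Y$, is at most $\Delta=2\sqrt{2\tau/(\alpha\sigma^2)}$. Lemma \ref{lem:gauss} with $\tilde\sigma\ge\frac{\Delta}{\varepsilon_2}\sqrt{2\ln(1.25/\delta_2)}$ and $\varepsilon_2\in(0,1)$ then yields $(\varepsilon_2,\delta_2)$-DP, which is exactly the stated threshold.

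The main subtlety is conditioning on the realized $\theta_\priv$, as required for the adaptive composition of Lemma \ref{lem:composition}. I would argue as follows: $b$ is independent of $(W,a)$ and hence of $\theta_\priv$ and $\theta_\approx$. Whatever the conditional law of $\theta_\approx-\theta_\priv$ given the conditioning variables, it is supported in the ball of radius $\sqrt{2\tau/(\alpha\sigma^2)}$ for both $\D$ and $\D'$. The Gaussian mechanism bound holds pointwise for each pair of centers at distance at most $\Delta$. If the solver is randomized, one averages over its internal randomness, mixing pairwise $(\varepsilon_2,\delta_2)$ inequalities by coupling the two conditional laws, which preserves the guarantee. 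If the solver is deterministic given $(W,Y,\D)$, the argument is immediate.
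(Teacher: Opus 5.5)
Your proposal is correct and follows the paper's proof essentially step for step. The $\alpha\sigma^2$-strong convexity on $E_\alpha$ gives the uniform bound $\sqrt{2\tau/(\alpha\sigma^2)}$ on the optimization error, the triangle inequality yields sensitivity $2\sqrt{2\tau/(\alpha\sigma^2)}$ for arbitrary conditional realizations under the two datasets, and the Gaussian mechanism is applied pointwise and then averaged over the conditional laws. Your first-order-optimality justification of quadratic growth over $C$ and your coupling argument for the averaging step simply make explicit what the paper compresses into ``integrating over the conditional distributions.''
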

\red{\begin{proof}
    Condition on $E_\alpha$, the preceding releases, and $\theta_\priv=t$. For every dataset $\D$ and every conditional realization $W\in E_\alpha$, the optimization error $e_\D(W)=\theta_\approx-t$ satisfies, since $\Jp(\cdot, \D)$ is $\alpha\sigma^2$-strongly convex under $E_\alpha$,
    \[
        \frac{\alpha\sigma^2}{2}\|e_\D(W)\|^2=\frac{\alpha\sigma^2}{2}\|\theta_\approx-\theta_\priv\|^2\le \Jp(\theta_\approx; \D)-\Jp(\theta_\priv; \D) \le \tau.
    \]
    Therefore, for any adjacent datasets $\D\sim \D'$ and arbitrary conditional realizations $W, W'\in E_\alpha$, it holds that 
    \[
        \|e_\D(W)-e_{\D'}(W')\|\le 2\sqrt{\frac{2\tau}{\alpha\sigma^2}}.
    \]
    The Gaussian Mechanism \ref{lem:gauss} establishes, for every conditional realization $W, W'\in E_\alpha$, $(\varepsilon_2, \delta_2)$-DP. Integrating over the conditional distributions yields the wanted result.
\end{proof}
}
\section{Utility Analysis - Proof of Theorem \ref{thm:utility}}\label{sec:proof_utility}
\red{
Consider the following decomposition
\begin{subequations}
\begin{align}
    \Exp{\J (\theta_\final; \D)-\J(\theta_\exact; \D)}
    =& ~~~~\Exp{\J(\theta_\final; \D)-\J(\theta_\approx; \D)} \label{eq:utility:a}\\
    &+ \Exp{\J(\theta_\approx; \D)-\Jp(\theta_\approx; \D)} \label{eq:utility:b}\\
    &+ \Exp{\Jp(\theta_\approx; \D)-\Jp(\theta_\priv; \D)} \label{eq:utility:c}\\
    &+ \Exp{\Jp(\theta_\priv; \D)-\Jp(\theta_\exact; \D)} \label{eq:utility:d}\\
    &+ \Exp{\Jp(\theta_\exact; \D)-\J(\theta_\exact; \D)} \label{eq:utility:e}.
\end{align}
\end{subequations}
We first note that \eqref{eq:utility:d} is nonpositive as $\theta_\priv$ is the minimizer of $\Jp(\cdot, \D)$. Second, the quantity \eqref{eq:utility:b} is bounded by
\begin{align*}
    \Exp{\J(\theta_\approx; \D)-\Jp(\theta_\approx; \D)}
    &= \Exp{-\frac{\sigma^2}{2}(\theta_\approx-\theta_c)^TW(\theta_\approx-\theta_c)-\langle a, \theta_\approx-\theta_c\rangle} \\
    &= -\frac{\sigma^2}{2}\Exp{(\theta_\approx-\theta_c+\sigma^{-2}W^{-1}a)^TW(\theta_\approx-\theta_c+\sigma^{-2}W^{-1}a)}+\frac{1}{2\sigma^2}\Exp{a^TW^{-1}a} \\
    &\le \frac{1}{2\sigma^2}\Exp{a^TW^{-1}a} \\
    &\le \frac{\sigma_0^2}{2\sigma^2}\Exp{\tr(W^{-1})} \\
    &\le \frac{\sigma_0^2d\nu }{2\sigma^2}.
\end{align*}
The quantity \eqref{eq:utility:c} is bounded by $\tau$ by assumption. We may bound \eqref{eq:utility:e} by using
\begin{align*}
    \Exp{\Jp(\theta_\exact; \D)-\J(\theta_\exact; \D)} 
    &= \Exp{\frac{\sigma^2}{2}(\theta_\exact-\theta_c)^TW(\theta_\exact-\theta_c)}+\Exp{\langle a, \theta_\exact-\theta_c\rangle}\\
    &\le \frac{\sigma^2}{2}\cdot (\theta_\exact-\theta_c)^T\Exp{W}(\theta_\exact-\theta_c) \\
    &\le \frac{\mu\sigma^2}{2}\cdot \|\theta_\exact-\theta_c\|^2.
\end{align*}
Finally, by $L$-smoothness of $\ell(\cdot; z)$ and since $b$ has zero mean and is independent of $\theta_\approx$, it holds that
\tight{\begin{align*}
    \Exp{\ell(\theta_\final;z)}
    &=\Exp{\ell(\theta_\approx+b;z)}\\
    &\le \Exp{\ell(\theta_\approx;z)+\langle \nabla \ell(\theta_\approx; z), b\rangle+\frac{L}{2}\|b\|^2}\\
    &= \Exp{\ell(\theta_\approx;z)+\frac{L}{2}\|b\|^2},
\end{align*}}
{\begin{align*}
    \Exp{\ell(\theta_\final;z)}
    =\Exp{\ell(\theta_\approx+b;z)}
    \le \Exp{\ell(\theta_\approx;z)+\langle \nabla \ell(\theta_\approx; z), b\rangle+\frac{L}{2}\|b\|^2}
    = \Exp{\ell(\theta_\approx;z)+\frac{L}{2}\|b\|^2},
\end{align*}}
and by the bounded subgradients assumption on $r$, we get that, for any $g\in \partial r(\theta_\final)$,
\[
    \Exp{r(\theta_\final)}=\Exp{r(\theta_\approx+b)}\le \Exp{r(\theta_\approx)+\langle g, b\rangle}\le \Exp{r(\theta_\approx)+G \|b\|}.
\]
Combining, we obtain a bound for \eqref{eq:utility:a}, namely
\[
    \Exp{\J(\theta_\final; \D)-\J(\theta_\approx; \D)}\le \frac{nL}{2} \Exp{\|b\|^2}+G \Exp{\|b\|}.
\]
As such, combining all the bounds, we obtain
\[
    \Exp{\J (\theta_\final; \D)-\J(\theta_\exact; \D)}\le \frac{nL}{2} \Exp{\|b\|^2}+G \Exp{\|b\|}+\tau+\frac{\mu\sigma^2}2\cdot \|\theta_\exact-\theta_c\|^2 + \frac{\sigma_0^2d\nu }{2\sigma^2}.
\]
The result follows by recalling that $b\sim \mathcal N(0, \tilde\sigma^2I)$, and hence $\Exp{\|b\|^2}=d\tilde\sigma^2$ and $\Exp{\|b\|}\le \sqrt{d}\tilde\sigma$.
}
\section{Experimental Details}

\subsection{Stopping Criterion}\label{ssec:stopping}

The stopping criterion in Equation \eqref{eq:approximate_def} is not directly verifiable in practice as it requires the optimal value of Problem \eqref{eq:p_priv}. We therefore introduce a computable surrogate condition, based on first-order optimality conditions. In fact, we consider the stopping condition 
\begin{equation}\label{eq:surrogate_stopping}
    \dist(0, \partial \Jp(\theta; \D)+N_C(\theta))\le \sqrt{2\tau\red{\lambda_{\min}(W)}\sigma^2},
\end{equation}
where $N_C$ is the normal cone of $C$. To see that Equation \eqref{eq:surrogate_stopping} implies the stopping condition in Equation \eqref{eq:approximate_def}, we note that $\theta\mapsto \Jp(\theta; \D)+\iota_C(\theta)$ is $\red{\lambda_{\min}(W)}\sigma^2$-strongly convex, such that 
\[
    \dist(0, \partial \Jp(\theta; \D)+N_C(\theta))^2\ge 2\red{\lambda_{\min}(W)}\sigma^2 \left[\Jp(\theta; \D)-\min_{\theta\in C}\Jp(\theta; \D)\right].
\]

\subsection{Stochastic Three-Operator Splitting}\label{sec:ass_compl}

While the proposed mechanism is conceptually simple, it requires solving a perturbed problem which is nontrivial due to its finite-sum structure, nonsmooth regularizer, and constraints. In this section, we show that the perturbed problem can be solved efficiently by leveraging its composite structure.

We consider a realization of \red{$(\tilde\theta, W)$}, and focus on how to (approximately) solve Problem \eqref{eq:p_priv}. Specifically, the problem will be fixed, and the randomness will arise through the algorithm alone.

A key observation is that Problem \eqref{eq:p_priv} may be written as the sum of a smooth finite-sum problem with two convex regularizers, namely
\begin{equation*}
    \argmin_{\theta\in \R^d}\left\{\sum_{i=1}^n \tilde \ell(\theta; z_i)+r(\theta)+\iota_C(\theta)\right\},
\end{equation*}
where $\tilde \ell(\theta;z)=\ell(\theta; z)+\frac{\sigma^2}{2n}(\theta-\tilde\theta)^TW(\theta-\tilde\theta)$ is smooth and convex, and $\iota_C$ is the indicator of the convex set $C$. We note this is not specific to our resulting problems, and would also be the structure of Problem \eqref{eq:p_priv_lin}, with $\tilde \ell(\theta; z)=\ell(\theta; z)+\frac{a^T\theta}{n}+\frac{\Delta}{2n}\|\theta\|^2$.

This structure suggests the use of the \textit{Three-Operator Splitting} scheme \citep{davis_threeoperator_2017a}. The scheme was originally introduced in the deterministic setting, treating the finite-sum as a single operator, and has since been extended in the stochastic setting, in which a random term of the sum is used at each iteration \citep{yurtsever_stochastic_2016,yurtsever_three_2021,cortild_stochastic_2026}. The resulting algorithm is described in Algorithm \ref{alg:StoTOS}.

\begin{algorithm}[H]
\renewcommand{\thealgorithm}{StoTOS}
\caption{(Stochastic Three-Operator Splitting)}\label{alg:StoTOS}
\begin{algorithmic}
\Require a step-size $\gamma$, a relaxation sequence $(\lambda_k)$, and an initial guess $x_0\in \R^d$.
\For{$k=0, \ldots, K$}
    \State Draw $i_k\in \{1,  \ldots, n\}$ randomly and independently.
    \State Set $\theta_k=\proj_C(x_k)$.
    \State Set $z_k=\prox_{\gamma r}(2\theta_k-x_k-\gamma n\nabla \tilde \ell(\theta_k; z_{i_k}))$.
    \State Set $x_{k+1}=x_k-\lambda_k\theta_k+\lambda_kz_k$.
    \State Update $k\to k+1$.
\EndFor \\
\Return $\theta_K$
\end{algorithmic}
\end{algorithm}

The work \cite{yurtsever_stochastic_2016} introduced stochasticity into the scheme for strongly convex objectives, whereas \cite{yurtsever_three_2021} studied non-strongly convex objectives. Both works make a strong assumption resembling uniformly bounded variance of the stochastic oracle, which is not verified in our setting without additional assumptions. In contrast, \cite{cortild_stochastic_2026} rely on a weaker assumption, by only assuming a finite variance of the stochastic oracle at a solution of the problem. 

Convergence to a solution of the perturbed problem can be established in several important cases. In particular, when the constraint set $C$ is an affine subspace (including the unconstrained setting), the method provably converges with accompanying rate statements. More broadly, more general conditions are considered in \cite{cortild_stochastic_2026}. 

We also note that when the problem is unconstrained, the function $\iota_C$ is identically zero, and can therefore be disregarded. In this case, the Three-Operator Splitting algorithm reduces to the Backward-Forward algorithm \citep{attouch_backward_2018}, and Algorithm \ref{alg:StoTOS} would reduce to a stochastic variant of it, in which the projection step is trivial. 

\subsection{Experimental Setup}\label{sec:experimental}

The goal of the experiment in Section \ref{ssec:LOP_numerical} is to isolate the dependence of the mechanisms on the constraint diameter $\kappa$, and thereby the dependence on the bounded gradients assumption required by LOP.

\paragraph{Data Generation.} \red{To generate the data, we generate $n$ independent samples $x_i\sim \mathcal N(0, I)$, and rescale them to lie within $[-\xi, \xi]^d$. We then generate a random $\theta_*\sim \mathcal N(0, I)$ and a random $c_i\sim \mathcal N(0, 1)$, and define $y_i=\text{Clip}(x_i^T\theta_*+c_i, [-1, 1])$. We consider $\xi=5$.}

\paragraph{Problem Setup.} We consider the regularization parameter $\omega=1$, and constraint sets $C=[-\kappa, \kappa]^d$ for varying $\kappa>0$. 

\textbf{Algorithmic Setup.} We solve the resulting problem through Algorithm \ref{alg:StoTOS}. For this purpose, we run $1000$ iterations using the relaxation parameters \red{$\lambda_k=(k+1)^{-1/2}$} and the step-size $\gamma=1/\operatorname{Lip}(n\nabla \tilde \ell)$. 

\textbf{Mechanism Setup.} \red{The experiment isolates the objective-perturbation component and is not intended to simulate the complete inexact mechanism. When running either mechanism, we therefore omit the final Gaussian noise term $b$. We perform the following split: $(\varepsilon_0, \varepsilon_1, \varepsilon_2)=(\varepsilon/2, \varepsilon/2, 0)$ and $(\delta_0,\delta_1, \delta_2, \delta_3)=(\delta/3, \delta/3, 0, \delta/3)$. We select the Wishart hidden dimension to be $m=2d$. For LOP-Clip, we clip each gradient to have norm at most the prescribed value.}

\textbf{Evaluation Protocol.} \red{In all cases}, we evaluate the empirical risk. We run the algorithm $10$ times and report the average. 

\textbf{Computer Resources.} The experiment is written in Python 3.13, and was executed on an Apple Silicon MacBook Pro, with a M5 chip and 16GB of RAM.

\red{\textbf{Code Availability.} The code is available on the authors GitHub page: \url{https://github.com/DanielCortild/QOP-Mechanism}.}

\subsection{Statistical Significance Reporting}\label{ssec:stats}

In the interest of transparency, we report the average and maximum standard deviations (Std) and standard errors (SE), and the average runtime obtained in the process of obtaining Figure \ref{fig:risk}.

\begin{table}[H]
    \centering
    \caption{Summary statistics for Figure \ref{fig:risk} with $(n, d, \varepsilon, \delta)=(300, 70, 0.5, 0.01)$.}
    \begin{tabular}{|l|l|l|l|l|l|}\hline
        & \textbf{Avg Std} & \textbf{Max Std} & \textbf{Avg SE} & \textbf{Max SE} & \textbf{Avg Runtime (in s)} \\ \hline
        \textbf{LOP} & 47229.6 & 1.07102e+06 & 14935.3 & 338686 & 0.00821865 \\
        \textbf{LOP-Clip-1e4} & 95.691 & 360.477 & 30.2602 & 113.993 & 0.00900318 \\
        \textbf{LOP-Clip-1e5} & 4623.81 & 35825.8 & 1462.18 & 11329.1 & 0.00900562 \\
        \textbf{QOP} & 7.12276e-05 & 0.000127098 & 2.25241e-05 & 4.0192e-05 & 0.0114504 \\\hline
    \end{tabular}
\end{table}

\begin{table}[H]
    \centering
    \caption{Summary statistics for Figure \ref{fig:risk}  with $(n, d, \varepsilon, \delta)=(300, 70, 0.01, 0.00001)$.}
    \begin{tabular}{|l|l|l|l|l|l|}\hline
        & \textbf{Avg Std} & \textbf{Max Std} & \textbf{Avg SE} & \textbf{Max SE} & \textbf{Avg Runtime (in s)} \\ \hline
        \textbf{LOP} & 44411.3 & 1.15901e+06 & 14044.1 & 366511 & 0.00823114 \\
        \textbf{LOP-Clip-1e4} & 575.211 & 2902.89 & 181.898 & 917.975 & 0.00897382 \\
        \textbf{LOP-Clip-1e5} & 16805.4 & 226260 & 5314.34 & 71549.7 & 0.00897671 \\
        \textbf{QOP} & 8.13505e-05 & 0.000156076 & 2.57253e-05 & 4.93557e-05 & 0.0115443 \\\hline
    \end{tabular}
\end{table}

\begin{table}[H]
    \centering
    \caption{Summary statistics for Figure \ref{fig:risk} with $(n, d, \varepsilon, \delta)=(400, 100, 0.5, 0.01)$.}
    \begin{tabular}{|l|l|l|l|l|l|}\hline
        & \textbf{Avg Std} & \textbf{Max Std} & \textbf{Avg SE} & \textbf{Max SE} & \textbf{Avg Runtime (in s)} \\ \hline
        \textbf{LOP} & 35137.4 & 834103 & 11111.4 & 263767 & 0.00833787 \\
        \textbf{LOP-Clip-1e4} & 31.5804 & 136.781 & 9.98659 & 43.2541 & 0.00917076 \\
        \textbf{LOP-Clip-1e5} & 1709.59 & 10474.9 & 540.618 & 3312.46 & 0.00909861 \\
        \textbf{QOP} & 3.48617e-05 & 5.62145e-05 & 1.10242e-05 & 1.77766e-05 & 0.0125463 \\\hline
    \end{tabular}
\end{table}

\begin{table}[H]
    \centering
    \caption{Summary statistics for Figure \ref{fig:risk} with $(n, d, \varepsilon, \delta)=(400, 100, 0.01, 0.00001)$.}
    \begin{tabular}{|l|l|l|l|l|l|}\hline
        & \textbf{Avg Std} & \textbf{Max Std} & \textbf{Avg SE} & \textbf{Max SE} & \textbf{Avg Runtime (in s)} \\ \hline
        \textbf{LOP} & 35451.7 & 710939 & 11210.8 & 224819 & 0.0081972 \\
        \textbf{LOP-Clip-1e4} & 225.999 & 1114.31 & 71.4671 & 352.374 & 0.00902137 \\
        \textbf{LOP-Clip-1e5} & 9831.14 & 108552 & 3108.88 & 34327 & 0.00900035 \\
        \textbf{QOP} & 4.30336e-05 & 8.13922e-05 & 1.36084e-05 & 2.57385e-05 & 0.0123227 \\\hline
    \end{tabular}
\end{table}

\end{document}